\title{\textbf{\Large{Approximate Leave-One-Out for Fast Parameter Tuning\\ in High Dimensions}}}
\author{\small Shuaiwen Wang\tsup{1,*}, Wenda Zhou\tsup{1,*}, Haihao Lu\tsup{2}, Arian Maleki\tsup{1}, Vahab Mirrokni\tsup{3}}
\date{}
\begin{document}
\maketitle

{\let\thefootnote\relax\footnotetext{
    \tsup{1}Department of Statistics, Columbia University, New York, USA;
    \tsup{2}Mathematics Department and Operation Research Center, Massachusetts Institute of Technology, Massachusetts, USA;
    \tsup{3}Google Research, New York, USA;
    \tsup{*}Equal contributions.
}}

\vskip 0.3in

\begin{abstract}
Consider the following class of learning schemes:
\begin{equation}\label{eq:main-problem1}
    \hat{\betav} := \argmin_{\betav}\;\sum_{j=1}^n \ell(\xv_j^\top\betav; y_j) + \lambda R(\betav),
\end{equation}
where $\xv_i \in \mathbb{R}^p$ and $y_i \in \mathbb{R}$ denote the $i^{\rm
th}$ feature and response variable respectively. Let $\ell$ and $R$ be
the loss function and regularizer, $\betav$ denote the unknown weights, and
$\lambda$ be a regularization parameter. Finding the optimal choice of
$\lambda$ is a challenging problem in high-dimensional regimes where both $n$
and $p$ are large.
We propose two frameworks to obtain
a computationally efficient approximation ALO of the leave-one-out cross validation
(LOOCV) risk for nonsmooth losses and regularizers.
Our two frameworks are based on the primal and dual formulations of
\eqref{eq:main-problem1}. We prove the equivalence of the two approaches under
smoothness conditions. This
equivalence enables us to justify the accuracy of both methods under such conditions.
We use our approaches to obtain a
risk estimate for several standard problems, including generalized
LASSO, nuclear norm regularization, and support vector machines. We 
empirically demonstrate the effectiveness
of our results for non-differentiable cases.

\end{abstract}

\section{Introduction}
\label{intro}
\subsection{Motivation}
Consider a standard prediction problem in which a dataset $\{(y_j,
\xv_j)\}_{j=1}^n \subset \mathbb{R}\times\mathbb{R}^{p}$ is employed to learn
a model for inferring information about new datapoints that are yet to
be observed. One of the most popular classes of learning schemes, specially
in high-dimensional settings, studies the following optimization
problem:
\begin{equation}\label{eq:main-problem}
    \hat{\betav} := \argmin_{\betav}\;\sum_{j=1}^n \ell(\xv_j^\top\betav; y_j) + \lambda R(\betav),
\end{equation}
where $\ell: \mathbb{R}^2 \rightarrow \mathbb{R}$ is the loss function, $R:
\mathbb{R}^p \rightarrow \mathbb{R}$ is the regularizer, and $\lambda$ is
the tuning parameter that specifies the amount of regularization. By applying
an appropriate regularizer in \eqref{eq:main-problem},
we are able to achieve better bias-variance trade-off
and pursue special structures such as sparsity and low rank structure.
However, the performance of such techniques hinges upon the selection of tuning
parameters.

The most generally applicable tuning method is cross validation
\cite{stone1974cross}. One common choice
is $k$-fold cross validation, which however presents potential bias issues in
high-dimensional settings where $n$ is comparable to $p$. For instance,
the phase transition phenomena that happen in such regimes
\cite{amelunxen2014living, donoho2009message, donoho2005neighborliness}
indicate that any data splitting may cause dramatic effects on the solution of
\eqref{eq:main-problem} (see Figure \ref{fig:lasso-risk-loocv-kfold} for an
example). Hence, the
risk estimates obtained from $k$-fold cross validation may not be reliable.
The bias issues of $k$-fold cross validation may be alleviated by choosing
the number of folds $k$ to be large. However, such schemes
are computationally demanding and may not be useful for emerging
high-dimensional applications. An alternative choice of cross validation is
LOOCV, which is unbiased in high-dimensional problems. However, the computation
of LOOCV requires training the model $n$ times, which is unaffordable for large
datasets.

\begin{figure}[t!]
    \begin{center}
        \includegraphics[scale=0.4]{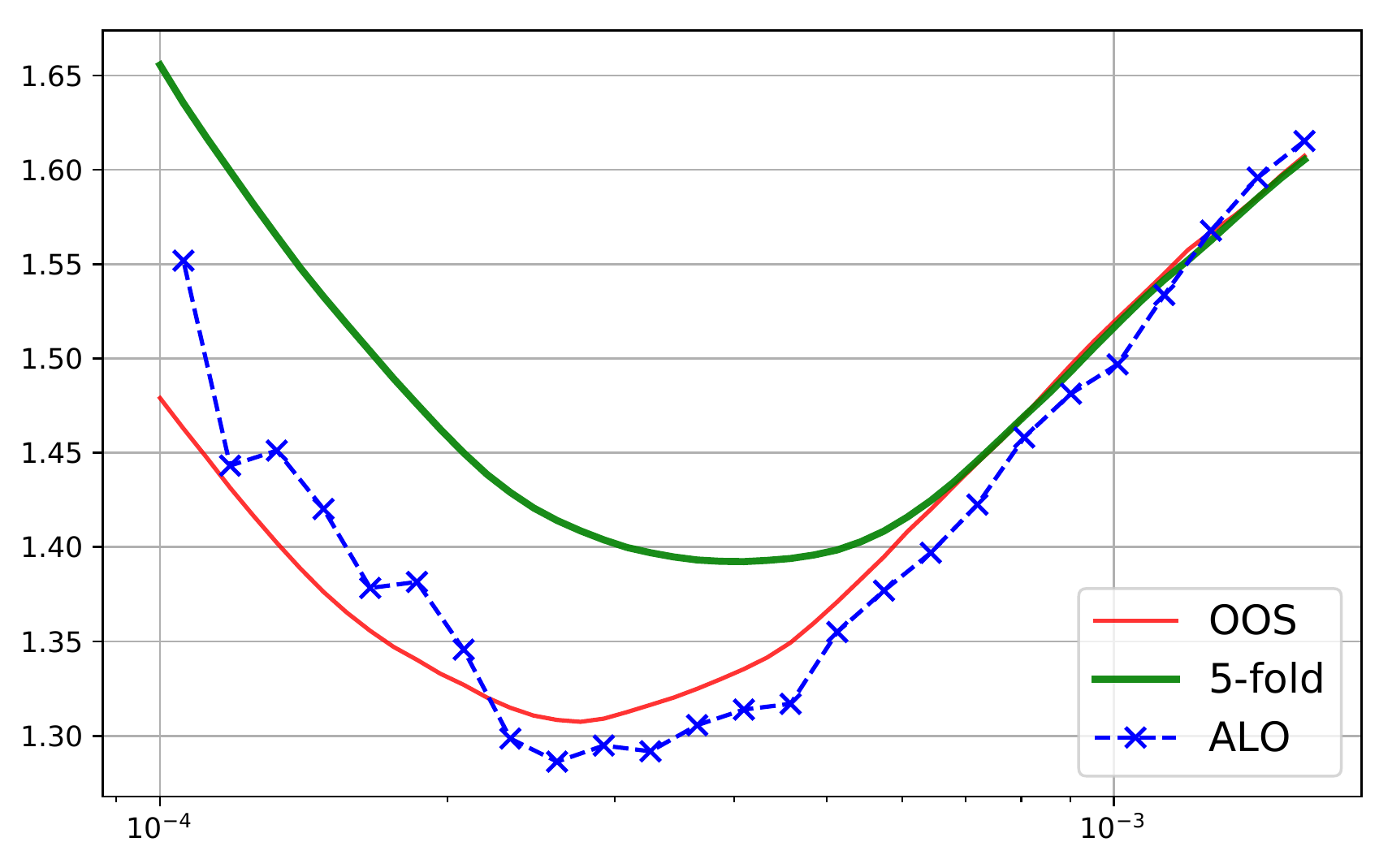}
        \caption{Risk estimates of LASSO based on 5-fold CV and ALO proposed in
        this paper, compared with the true out-of-sample prediction error
        (OOS). In this example, 5-fold CV provides biased estimates of OOS,
        while ALO works just fine. Here we use $n=5000$, $p=4000$ and $iid$ Gaussian design.}
        \label{fig:lasso-risk-loocv-kfold}
    \end{center}
\end{figure}

The high computational complexity of LOOCV has motivated researchers to propose
computationally less demanding approximations of the quantity. Early examples
offered approximations for the case $R(\betav) = \frac{1}{2}\|\betav\|_2^2$ and
the loss function being smooth \cite{allen1974relationship, o1986automatic,
le1992ridge, cawley2008efficient, meijer2013efficient, opper2000gaussian}.
In \cite{beirami2017optimal}, the authors considered such approximations for
smooth loss functions and smooth regularizers. In
this line of work, the accuracy of the approximations was either not studied or
was only studied in the $n$ large, $p$ fixed regime. In a recent paper,
\cite{kamiar2018scalable} employed a similar approximation strategy to obtain
approximate leave-one-out formulas for smooth loss functions and smooth
regularizers. They show that under some mild conditions, such approximations
are accurate in high-dimensional settings. Unfortunately, the approximations
offered in \cite{kamiar2018scalable} only cover twice differentiable loss
functions and regularizers. On the other hand, numerous modern regularizers,
such as generalized LASSO and nuclear norm, and also many loss functions are
not smooth.

In this paper, we propose two powerful frameworks for
calculating an approximate leave-one-out estimator (ALO) of the LOOCV risk
that are capable of offering accurate parameter tuning
even for non-differentiable losses and regularizers. Our first approach is
based on the smoothing and quadratic approximation of the primal problem
\eqref{eq:main-problem}. The second approach is based on the approximation of
the dual of \eqref{eq:main-problem}. While the
two approaches consider different approximations that happen in different domains,
we will show that when both $\ell$ and $r$ are twice
differentiable, the two frameworks produce the same ALO formulas, which are
also the same as the formulas proposed in \cite{kamiar2018scalable}.

We use our platforms to obtain concise formulas for several popular examples
including generalized LASSO, support vector machine (SVM) and nuclear norm
minimization. As will be clear from our examples, despite of the equivalence of
the two frameworks for smooth loss functions and regularizers, the technical
aspects of the derivations involved for obtaining ALO formulas have major
variations in different examples. Finally, we present extensive simulations to
confirm the accuracy of our formulas on various important machine learning
models. Code is available at \url{github.com/wendazhou/alocv-package}.

\subsection{Other Related Work}
The importance of parameter tuning in learning systems has encouraged many
researchers to study this problem from different perspectives. In addition to
cross validation, several other
approaches have been proposed including Stein's unbiased risk estimate
(SURE), Akaike information criterion (AIC), and Mallow's $C_p$. While AIC is
designed for smooth parametric models, SURE has been extended to emerging
optimization problems, such as generalized LASSO and nuclear norm
minimization \cite{candes2013unbiased, dossal2013degrees, tibshirani2012degrees,
vaiter2017degrees, zou2007degrees}.

Unlike cross validation which approximates the out-of-sample prediction
error, SURE, AIC, and $C_p$ offer estimates for in-sample prediction error
\cite{EOSL:chapter7}. This makes cross validation more appealing for
many learning systems. Furthermore, unlike ALO,
both SURE and $C_p$ only work on linear models (and not generalized linear
models) and their unbiasedness is only guaranteed under the Gaussian model
for the errors. There has been little success in extending SURE beyond this
model \cite{efron2004estimation}. 

Another class of parameter tuning schemes are based on approximate message
passing framework \cite{bayati2013estimating, mousavi2017consistent,
obuchi2016cross}. As pointed out in \cite{obuchi2016cross}, this approach is
intuitively related to LOOCV. It offers
consistent parameter tuning in high-dimensions \cite{mousavi2017consistent},
but the results strongly depend on the independence of the elements of $\Xv$.

\subsection{Notation}

Lowercase and uppercase bold letters denote vectors and matrices,
respectively. For subsets $A \subset \{1,2, \ldots, n\}$ and $B \subset \{ 1,2,
\ldots, p\}$ of indices and a matrix $\Xv$, let $\Xv_{A, \cdot}$
and $\Xv_{\cdot, \Bv}$ denote the submatrices that include only
rows of $\Xv$ in $A$, and columns of $\Xv$ in $B$ respectively. Let
$\{a_i\}_{i \in S}$ denote the vector whose components are $a_i$ for $i \in S$. We
may omit $S$, in which case we consider all indices valid in the context.
For a function $f: \mathbb{R} \rightarrow \mathbb{R}$, let
$\dot{f}$, $\ddot{f}$ denote its 1\textsuperscript{st} and
2\textsuperscript{nd} derivatives. For a
vector $\av$, we use $\diag[\av]$ to denote a diagonal matrix
$\Av$ with $A_{ii} = a_i$. Finally,
let $\nabla R$ and $\nabla^2 R$ denote the gradient and Hessian of
a function $R: \mathbb{R}^p \rightarrow \mathbb{R}$.

\section{Preliminaries}

\subsection{Problem Description}
In this paper, we study the statistical learning models in form
\eqref{eq:main-problem}. For each value of $\lambda$, we evaluate the
following LOOCV risk estimate with respect to some error function $d$:
\begin{equation}\label{eq:err-func}
    \loo_{\lambda}: = \frac{1}{n}\sum_{i=1}^n d(y_i, \xv_i^\top \leavei{\estim{\betav}}),
\end{equation}
where $\estimi{\betav}$ is the solution of the leave-$i$-out problem
\begin{equation}\label{eq:leave-i-out}
    \estimi{\betav} := \argmin_{\betav}\; \sum_{j\neq i} \ell(\xv_j^\top\betav; y_j) + \lambda R(\betav).
\end{equation}

Calculating \eqref{eq:leave-i-out} requires training the model $n$ times,
which may be time-consuming in high-dimensions. As an alternative, we
propose an estimator $\surrogi{\betav}$ to approximate $\estimi{\betav}$
based on the full-data estimator $\estim{\betav}$ to reduce the computational
complexity. We consider two frameworks for obtaining $\surrogi{\betav}$, and
denote the corresponding risk estimate by:
\begin{equation} \label{eq:alo-formula}
    \alo_\lambda: = \frac{1}{n}\sum_{i = 1}^n d(y_i, \xv_i^\top \surrogi{\betav}).
\end{equation}

The estimates we obtain will be called approximated leave-one-out (ALO) throughout the paper. 

\subsection{Primal and Dual Correspondence}

The objective function of penalized regression problem with loss $\ell$ and
regularizer $R$ is given by:
\begin{equation} \label{eq:methods:primal}
    P(\betav) := \sum_{j = 1}^n \ell(\xv_j^\top \betav; y_j) + R(\betav).
\end{equation}

Here and subsequently, we absorb the value of $\lambda$ into $R$
to simplify the notation. We also consider the Lagrangian dual problem, which can be written in the form:
\begin{equation} \label{eq:methods:dual}
    \min_{\dualv \in \mathbb{R}^n} D(\thetav) := \sum_{j = 1}^n
    \ell^*(-\theta_j; y_j) + R^*(\Xv^\top \thetav),
\end{equation}
where $\ell^*$ and $R^*$ denote the \textit{Fenchel conjugates}
\footnote{The Fenchel conjugate $f^*$ of a function $f$ is defined as
$f^*(x):=\sup_y\{\langle x, y\rangle - f(y)\}$.}
of $\ell$ and $R$ respectively. See the derivation in Appendix \ref{sec:dual-derivation}.

It is known that under mild conditions, \eqref{eq:methods:primal}
and \eqref{eq:methods:dual} are equivalent \cite{boyd2004convex}. In this case,
we have the primal-dual correspondence relating the primal optimal
$\estim{\betav}$ and the dual optimal $\estim{\thetav}$:
\begin{equation} \label{eq:primal-dual-correspondence}
\begin{gathered}
    \estim{\betav} \in \partial R^* (\Xv^\top  \estim{\dualv}), \quad
    \Xv^\top \estim{\dualv} \in \partial R(\estim{\betav}), \\
    \xv_j^\top\estim{\betav} \in \partial \ell^* (-\estim{\duals}_j; y_j), \quad
    - \estim{\duals}_j \in \partial \ell(\xv_j^\top \estim{\betav}; y_j),
\end{gathered}
\end{equation}
where $\partial f$ denotes the set of subgradients of a function $f$.
Below we will use both primal and dual perspectives for approximating
$\loo_\lambda$.

\section{Approximation in the Dual Domain}\label{sec:approximatedual}

\subsection{The First Example: LASSO}
\label{sec:example:lasso}

Let us first start with a simple example that illustrates our dual method in deriving
an approximate leave-one-out (ALO) formula for the standard LASSO. The LASSO
estimator, first proposed in \cite{tibshirani1996regression}, can be formulated as
the penalized regression framework in \eqref{eq:methods:primal} by setting
$\ell(\mu; y) = (\mu - y)^2 / 2$, and $R(\betav) = \lambda\norm{\betav}_1$.

We recall the general formulation of the dual for penalized regression problems
\eqref{eq:methods:dual}, and note that in the case of the LASSO we have:
\begin{equation*}
    \ell^*(\duals_i; y_i) = \frac{1}{2}(\duals_i - y_i)^2, \\
    \quad
    R^*(\betav) = \begin{cases}
        0 & \text{ if } \norm{\betav}_\infty \leq \lambda, \\
        +\infty & \text{ otherwise.}
    \end{cases}
\end{equation*}

In particular, we note that the solution of the dual problem
\eqref{eq:methods:dual} can be obtained from:
\begin{equation}
    \label{eq:lasso:projection}
    \estim{\dualv} = \Pi_{\Delta_X}(\yv).
\end{equation}

Here $\Pi_{\Delta_X}$ denotes the projection onto $\Delta_X$, where $\Delta_X$
is the polytope given by:
\begin{equation*}
    \Delta_X
    =
    \{ \dualv \in \RR^n : \norm{\Xv^\top \dualv}_{\infty} \leq \lambda \}.
\end{equation*}

Let us now consider the leave-$i$-out problem. Unfortunately, the dimension of
the dual problem is reduced by 1 for the leave-$i$-out problem, making it
difficult to leverage the information from the full-data solution to help
approximate the leave-$i$-out solution. We propose to augment the leave-$i$-out
problem with a virtual $i$\tsup{th} observation which does not affect the result of
the optimization, but restores the dimensionality of the problem.

More precisely, let $\yv_a$ be the same as $\yv$, except that its
$i$\tsup{th} coordinate is replaced by $\leavei{\estim{y}}_i = \xv_i^\top
\leavei{\estim{\betav}}$, the leave-$i$-out predicted value. We note that the
leave-$i$-out solution $\leavei{\estim{\betav}}$  is also the solution for the following
augmented problem:
\begin{eqnarray}\label{eq:augmentedmethod}
   \min_{\betav \in \mathbb{R}^p}  \sum_{j = 1}^n \ell(\xv_j^\top \betav; y_{a,j}) + R(\betav).
\end{eqnarray}

Let $\leavei{\estim{\dualv}}$ be the corresponding dual solution of
\eqref{eq:augmentedmethod}. Then, by \eqref{eq:lasso:projection}, we know
that
\begin{equation*}
    \leavei{\estim{\dualv}} = \Pi_{\Delta_X} (\yv_a).
\end{equation*}

Additionally, the primal-dual correspondence
\eqref{eq:primal-dual-correspondence} gives that $\leavei{\estim{\dualv}}=\yv_a
- \Xv \leavei{\estim{\betav}}$, which is
the residual in the augmented problem, and hence that $\leavei{\estim{\duals}}_i = 0$.
These two features allow us to characterize the leave-$i$-out
predicted value $\leavei{\estim{y}}_i$, as satisfying:
\begin{equation}
    \label{eq:augmented-dual-equation}
    \ev_i^\top \Pi_{\Delta_X}\big(\yv - (y_i - \leavei{\estim{y}}_i)
    \ev_i\big) = 0,
\end{equation}
where $\ev_i$ denotes the $i$\tsup{th} standard vector. Solving exactly for the above
equation is in general a procedure that is computationally comparable to
fitting the model, which may be expensive. However, we may attempt to obtain an
approximate solution of \eqref{eq:augmented-dual-equation}
by linearizing the projection operator at the full data solution
$\estim{\dualv}$, or equivalently
performing a single Newton step to solve the leave-$i$-out problem from the
full data solution. The approximate leave-$i$-out fitted value
$\leavei{\surrog{y}}_i$ is thus given by:
\begin{equation}\label{eq:dualsimpleJaco}
    \leavei{\surrog{y}}_i = y_i - \frac{\estim{\duals}_i}{J_{ii}},
\end{equation}
where $\Jv$ denotes the Jacobian of the projection operator $\Pi_{\Delta_X}$ at
the full data problem $\yv$.
Note that $\Delta_X$ is a polytope, and thus the projection onto $\Delta_X$ is
almost everywhere locally affine \cite{tibshirani2012degrees}.
Furthermore, it is straightforward to calculate the Jacobian of
$\Pi_{\Delta_X}$. Let $E = \{ j: \abs{\Xv_j^\top \estim{\dualv}} = \lambda \}$ be
the equicorrelation set (where $\Xv_j$ denotes the $j^{\text{th}}$ column of $\Xv$),
then we have that the projection at the full data problem $\yv$ is locally
given by a projection onto the orthogonal complement of the span of $\Xv_{\cdot, E}$,
thus giving $\Jv = \Iv - \Xv_{\cdot, E} (\Xv_{\cdot, E}^\top \Xv_{\cdot,
E})^{-1} \Xv_{\cdot, E}^\top$. We can then obtain $\leavei{\surrog{y}}$ by
plugging $\Jv$ in \eqref{eq:dualsimpleJaco}. Finally, by replaceing $
\xv_i^\top \surrogi{\betav}$ with $\leavei{\surrog{y}}_i$ in
\eqref{eq:alo-formula} we obtain an estimate of the risk.

\subsection{General Case}
\label{ssec:dual:general-case}

In this section we extend the dual approach outlined in Section
\ref{sec:example:lasso} to more general loss functions and regularizers.

\paragraph{General regularizers}
Let us first extend the dual approach
to other regularizers, while the loss function remains $\ell(\mu, y) = (\mu-y)^2/2$.
In this case the dual problem \eqref{eq:methods:dual} has the following form:
\begin{equation}\label{eq:dual:general-regularizer}
    \min_{\dualv} \frac{1}{2} \sum_{j = 1}^n (\duals_j - y_j)^2 + R^*(\Xv^\top \dualv).
\end{equation}

We note that the optimal value of $\dualv$ is by definition the value of the proximal operator of
$R^*(\Xv^\top\cdot)$ at $\yv$:
\begin{equation*}
    \estim{\dualv} = \proxv_{R^*(\Xv^\top \cdot)}(\yv).
\end{equation*}

Following the argument of Section \ref{sec:example:lasso}, we obtain
\begin{equation}\label{eq:dual:alo-least-square}
    \leavei{\surrog{y}}_i = y_i - \frac{\estim{\duals}_i}{J_{ii}},
\end{equation}
with $\Jv$ now denoting the Jacobian of $\proxv_{R^*(\Xv^\top \cdot)}$. We
note that the Jacobian matrix $\Jv$ exists almost everywhere, because the
non-expansiveness of the proximal operator guarantees its almost-everywhere
differentiability \cite{Combettes2011}. In particular, if $\yv$ has distribution
which is absolutely continuous with respect to the Lebesgue measure, $\Jv$
exists with probability 1. This approach is particularly useful when $R$ is a norm, as
its Fenchel conjugate is then the convex indicator of the unit ball of the dual
norm, and the proximal operator reduces to a projection operator.

\paragraph{General smooth loss}
Let us now assume we have a convex smooth loss in \eqref{eq:methods:primal}, such as those that
appear in generalized linear models. As we are arguing from a second-order
perspective by considering Newton's method, we will attempt to expand the
loss as a quadratic form around the full data solution. We will thus consider the
approximate problem obtained by expanding $\ell^*$ around the dual optimal
$\estim{\dualv}$:
\begin{equation}\label{eq:dual-general-optimal}
    \min_{\dualv} \frac{1}{2}\sum_{j = 1}^n \ddot{\ell}^*(-\estim{\duals}_j; y_j)
    \Bigg(
        \duals_j - \estim{\duals}_j
        - \frac{\dot{\ell}^*(-\estim{\duals}_j; y_j)}{\ddot{\ell}^*(-\estim{\duals}_j; y_j)}
    \Bigg)^2 + R^*(\Xv^\top \dualv).
\end{equation}

The constant term has been removed from \eqref{eq:dual-general-optimal} for
simplicity. We note that we have reduced the problem to a problem with a
weighted $\ell_2$ loss which may be further reduced to a simple $\ell_2$
problem by a change of variable and a rescaling of $\Xv$. Indeed, let $\Kv$ be
the diagonal matrix such that $K_{jj} = \sqrt{\ddot{\ell}^*(-\estim{\duals}_j;
y_j)}$, and note that we have: $\dot{\ell}^*(-\estim{\duals}_j; y_j) =
\xv_j^\top \estim{\betav} := \estim{y}_j$ by the primal-dual correspondence
\eqref{eq:primal-dual-correspondence}. Consider the change of variable $\uv =
\Kv \dualv$ to obtain:
\begin{equation*}\label{eq:dual:alo-general}
    \min_{\uv} \frac{1}{2} \sum_{j = 1}^n \left( u_j
    - \frac{\estim{\duals}_j \ddot{\ell}^*(-\estim{\duals}_j; y_j) + \estim{y}_j}
                       {\sqrt{\ddot{\ell}^*(-\estim{\duals}_j; y_j)}}
    \right)^2 + R^*(\Xv^\top \Kv^{-1} \uv).
\end{equation*}

We may thus reduce to the $\ell_2$ loss case in \eqref{eq:dual:general-regularizer}
with a modified $\Xv$ and $\yv$:
\begin{equation}\label{eq:dual:general-loss}
    \Xv_u = \Kv^{-1} \Xv,
    \quad
    \yv_u = \left\{\frac{\estim{\duals}_j \ddot{\ell}^*(-\estim{\duals}_j; y_j) + \estim{y}_j}
    {\sqrt{\ddot{\ell}^*(-\estim{\duals}_j; y_j)}}\right\}_j.
\end{equation}

Similar to \eqref{eq:dual:alo-least-square}, the ALO formula in the case of
general smooth loss can be obtained as $\leavei{\surrog{y}_{i}} =
K_{ii}\leavei{\surrog{y}_{u,i}}$, with
\begin{equation}\label{eq:dual:alo-general-loss}
    \leavei{\surrog{y}_{u,i}}
    =
    y_{u, i} - \frac{K_{ii}\estim{\theta}_i}{J_{ii}},
\end{equation}
where $\Jv$ is the Jacobian of $\proxv_{R^*(\Xv_u^\top\cdot)}$.

\section{Approximation in the Primal Domain}
\label{sec:primal-smoothing}

\subsection{Smooth Loss and Regularizer}\label{ssec:primal-smooth}
To obtain $\loo_\lambda$ we need to solve
\begin{equation}\label{eq:leaveoneoutopt_1}
    \estimi{\betav} := \argmin_{\betav}\; \sum_{j\neq i}
    \ell(\xv_j^\top\betav; y_j) + R(\betav).
\end{equation}

Assuming $\estimi{\betav}$ is close to $\estim{\betav}$, we can take a
\textit{Newton step} from $\estim{\betav}$ towards $\estimi{\betav}$ to obtain
its approximation $\surrogi{\betav}$ as:
\begin{equation}\label{eq:primal-alo-smooth0}
    \surrogi{\betav}
    =
    \estim{\betav}
    +
    \Big[\sum_{j\neq i} \xv_j
    \xv_j^\top\ddot{\ell}(\xv_j^\top\estim{\betav}; y_j) + \nabla^2
R(\estim{\betav})\Big]^{-1} \xv_i\dot{\ell}(\xv_i^\top\estim{\betav}; y_i).
\end{equation}

By employing the matrix inversion lemma \cite{hager1989updating} we obtain:
\begin{equation}\label{eq:primal-alo-smooth}
    \xv_i^\top\surrogi{\betav}
    =
    \xv_i^\top\estim{\betav}
    +
    \frac{H_{ii}}{1 - H_{ii}\ddot{\ell}(\xv_i^\top\estim{\betav};
    y_i)}\dot{\ell}(\xv_i^\top\estim{\betav}; y_i),
\end{equation}
where
\begin{equation}\label{eq:smooth-H}
    \Hv = \Xv[\Xv^\top \diag[\{\ddot{\ell}(\xv_i^\top\estim{\betav};
    y_i)\}_i]\Xv + \nabla^2 R(\estim{\betav})]^{-1}\Xv^\top.
\end{equation}

This is the formula reported in \cite{kamiar2018scalable}. By calculating
$\estim{\betav}$ and $\Hv$ in advance, we can cheaply approximate the
leave-$i$-out prediction for all $i$ and efficiently evaluate the LOOCV risk.
On the other hand, in order to use the above strategy, twice differentiability
of both the loss and the regularizer is necessary in a neighborhood of
$\estim{\betav}$. However, this assumption is violated for many machine
learning models including LASSO, Nuclear norm, and SVM.
In the next two sections, we introduce a smoothing technique which lifts the scope
of the above primal approach to nondifferentiable losses and regularizers.

\subsection{Nonsmooth Loss and Smooth Regularizer}\label{ssec:nonsmooth-loss}
In this section we study the piecewise smooth loss functions and twice
differentiable regularizers. Such problems arise in SVM \cite{cortes1995support}
and robust regression \cite{huber1973robust}. Before proceeding further, we
clarify our assumptions on the loss function.
\begin{definition}
    A singular point of a function is called $q$\textsuperscript{th} order, if
    at this point the function is $q$ times differentiable, but its
    $(q+1)$\textsuperscript{th} order derivative does not exist.
\end{definition}

Below we assume the loss $\ell$ is piecewise twice differentiable with $k$
zero-order singularities $v_1, \ldots, v_k \in \mathbb{R}$. 
The existence of singularities prohibits us from directly applying strategies in
\eqref{eq:primal-alo-smooth0} and \eqref{eq:primal-alo-smooth}, where twice
differentiability of $\ell$ and $R$ is necessary. A natural solution is to
first smooth out the loss function $\ell$, then apply the framework in the
previous section to the smoothed version and finally reduce the smoothness to
recover the ALO formula for the original nonsmooth problem.

As the first step, consider the following smoothing idea:
\begin{equation*}
    \ell_h(\mu; y) =: \frac{1}{h} \int \ell(u; y) \phi ((\mu- u)/h) du,
\end{equation*}
where $h>0$ is a fixed number and $\phi$ is a symmetric, infinitely many times
differentiable function with the following properties:
\vspace{-2mm}
\begin{description}
    \setlength{\itemsep}{1pt}
    \setlength{\parskip}{1pt}
    \setlength{\parsep}{1pt}
    \item[] \textit{Normalization}:
        $\int \phi(w)dw = 1$, $\phi(w) \geq 0$, $\phi(0)>0$;
    \item[] \textit{Compact support}:
        $\mathrm{supp}(\phi) = [-C, C]$ for some $C>0$.
\end{description}

Now plug in this smooth version $\ell_h$ into \eqref{eq:leaveoneoutopt_1} to
obtain the following formula from \eqref{eq:primal-alo-smooth0}:
\begin{equation}\label{eq:smoothedbeta}
    \surrogi{\betav}_h
    :=
    \estim{\betav}_h + \Big[\sum_{j\neq i} \xv_j
    \xv_j^\top\ddot{\ell}_h(\xv_j^\top\estim{\betav}_h; y_j) + \nabla^2 R(\estim{\betav}_h)\Big]^{-1}
    \xv_i\dot{\ell}_h(\xv_i^\top\estim{\betav}_h; y_i).
\end{equation}
where $\estim{\betav}_h$ is the minimizer on the full data from loss
$\ell_h$ and $R$.  $\surrogi{\betav}_h$ is a good approximation to the leave-$i$-out
estimator $\estimi{\betav}_h$ based on smoothed loss $\ell_h$.

Setting $h \rightarrow 0$, we have that $\ell_h(\mu, y)$ converges to
$\ell(\mu, y)$ uniformly in the region of interest (see Appendix
\ref{append:ssec:property-kernel-smoothing} for the proof), implying that
$\lim_{h\rightarrow 0} \surrogi{\betav}_h$ serves as a good estimator of
$\lim_{h\rightarrow 0}\estimi{\betav}_h$, which is heuristically close to the
true leave-$i$-out $\estimi{\betav}$. Equation \eqref{eq:smoothedbeta} can be
simplified in the limit $h \rightarrow 0$. We define the sets of indices $V$
and $S$ for the samples at singularities and smooth parts respectively:
\begin{align*}\label{eq:indexset}
    V &:= \{ j: \xv_j^\top \estim{\betav} = v_t \text{ for some } t \in \{1, \ldots, k\} \}, \\
    S &:= \{ 1, \dotsc, n \} \setminus V.
\end{align*}
We characterize the limit of $\xv_i^\top\surrogi{\betav}_h$ below.

\begin{theorem}\label{thm:nonsmooth-loss-approx}
    Under some mild conditions, as $h \rightarrow 0$,
    \begin{equation*}
        \xv_i^\top\surrogi{\betav}_h \rightarrow  \xv_i^\top\estim{\betav}+
        a_i g_{\ell,i},
    \end{equation*}
    where
    \begin{align*}\label{eq:nonsmooth-H}
        a_i &= \begin{cases}
            \frac{W_{ii}}{1 - W_{ii}\ddot{\ell}(\xv_i^\top\estim{\betav};y_i)}
            & \text{ if } i \in S, \\
            \frac{1}{[(\Xv_{V\cdot}\Yv^{-1}\Xv_{V\cdot}^\top)^{-1}]_{ii}} & \text{ if } i \in V, \\
        \end{cases} \\
        \Yv &= \nabla^2 R(\estim{\betav})
        + \Xv_{S\cdot}^\top \diag[\{\ddot{\ell}(\xv_j^\top\estim{\betav})\}_{j \in S}] \Xv_{S\cdot}, \\
        W_{ii} &= \xv_i^\top \Yv^{-1} \xv_i - \xv_i^\top
        \Yv^{-1}\Xv_{V,\cdot}^\top(\Xv_{V,\cdot}
        \Yv^{-1}\Xv_{V,\cdot}^\top)^{-1}\Xv_{V,\cdot} \Yv^{-1}\xv_i.
    \end{align*}
    
    For $i \in S$, $g_{\ell,i} = \dot{\ell}(\xv_i^\top\estim{\betav}; y_i)$, and for $i \in V$, we have:
    \begin{equation}\label{eq:nonsmooth-subg}
        \gv_{\ell,V} = (\Xv_{V,\cdot}\Xv_{V,\cdot}^\top)^{-1}\Xv_{V,\cdot}[\nabla R(\estim{\betav}) -
        \sum_{j \in S}\xv_j\dot{\ell}(\xv_j^\top \estim{\betav}; y_j)]. \nonumber
    \end{equation}
\end{theorem}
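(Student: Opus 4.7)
\emph{Proof sketch.} The plan is to compute the $h\to 0$ limit of the Newton-step expression \eqref{eq:primal-alo-smooth}--\eqref{eq:smooth-H} applied to the smoothed loss $\ell_h$ and to show that the observations $j\in V$, whose second derivatives $\ddot{\ell}_h(\xv_j^\top\estim{\betav}_h;y_j)$ diverge like $1/h$, produce blow-up directions in the Hessian that can be peeled off using Woodbury's identity. The regularizer is already $C^2$, and $\estim{\betav}_h\to\estim{\betav}$ follows from the uniform convergence $\ell_h\to\ell$ on compacts (Appendix~\ref{append:ssec:property-kernel-smoothing}) together with standard stability of minimizers of strongly convex objectives; hence for $j\in S$ one has $\ddot{\ell}_h(\xv_j^\top\estim{\betav}_h;y_j)\to\ddot{\ell}(\xv_j^\top\estim{\betav};y_j)$ and $\dot{\ell}_h(\xv_j^\top\estim{\betav}_h;y_j)\to\dot{\ell}(\xv_j^\top\estim{\betav};y_j)$, while for $j\in V$ only the second derivative blows up.

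Writing the matrix in \eqref{eq:smooth-H} as $M_h=\Yv_h+\Xv_{V,\cdot}^\top D_{V,h}\Xv_{V,\cdot}$, where $D_{V,h}$ collects the diverging diagonal entries and $\Yv_h\to\Yv$, Woodbury's identity yields $M_h^{-1}=\Yv_h^{-1}-\Yv_h^{-1}\Xv_{V,\cdot}^\top(D_{V,h}^{-1}+\Xv_{V,\cdot}\Yv_h^{-1}\Xv_{V,\cdot}^\top)^{-1}\Xv_{V,\cdot}\Yv_h^{-1}$. For $i\in S$, simply taking $h\to 0$ sends $D_{V,h}^{-1}\to 0$ and gives $H_{ii}\to W_{ii}$ as in the theorem; combined with the smooth limits of $\ddot{\ell}_h$ and $\dot{\ell}_h$, formula \eqref{eq:primal-alo-smooth} then yields the $i\in S$ case directly.

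The delicate case is $i\in V$, where $H_{ii}\to 0$ and $1-H_{ii}\ddot{\ell}_h\to 0$, so that the ratio in \eqref{eq:primal-alo-smooth} is $0/0$. The strategy is to retain two terms of the Neumann expansion of the Woodbury middle factor: with $A:=\Xv_{V,\cdot}\Yv^{-1}\Xv_{V,\cdot}^\top$, one has $(D_{V,h}^{-1}+A)^{-1}=A^{-1}-A^{-1}D_{V,h}^{-1}A^{-1}+A^{-1}D_{V,h}^{-1}A^{-1}D_{V,h}^{-1}A^{-1}+\dotsb$. Substituting this into Woodbury and using that for $i\in V$ the vector $\Xv_{V,\cdot}\Yv^{-1}\xv_i$ is the $i$\tsup{th} column of $A$ yields $H_{ii}=\ddot{\ell}_h^{-1}-[A^{-1}]_{ii}\ddot{\ell}_h^{-2}+O(\ddot{\ell}_h^{-3})$ and therefore $1-H_{ii}\ddot{\ell}_h=[A^{-1}]_{ii}\ddot{\ell}_h^{-1}+O(\ddot{\ell}_h^{-2})$, so the ratio converges to $1/[A^{-1}]_{ii}$, matching the $a_i$ stated in the theorem.

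Finally, $g_{\ell,i}=\lim_{h\to 0}\dot{\ell}_h(\xv_i^\top\estim{\betav}_h;y_i)$ for $i\in V$ is identified by passing to the limit in the full-data optimality condition $\sum_j\xv_j\dot{\ell}_h(\xv_j^\top\estim{\betav}_h;y_j)+\nabla R(\estim{\betav}_h)=0$. Separating $S$ from $V$ gives the linear constraint $\Xv_{V,\cdot}^\top\gv_{\ell,V}=-\nabla R(\estim{\betav})-\sum_{j\in S}\xv_j\dot{\ell}(\xv_j^\top\estim{\betav};y_j)$; left-multiplying by $\Xv_{V,\cdot}$ and inverting $\Xv_{V,\cdot}\Xv_{V,\cdot}^\top$ (non-singularity of which is part of the ``mild conditions'') produces the claimed formula for $\gv_{\ell,V}$, up to the authors' sign convention. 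The main obstacle is the $0/0$ cancellation for $i\in V$, which requires the two-term Neumann expansion above; once this is handled, the remainder of the argument is careful bookkeeping of limits of finite quantities.
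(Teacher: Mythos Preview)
Your proposal is correct and follows the same overall structure as the paper: write the smoothed Hessian as $\Yv_h+\Xv_{V,\cdot}^\top D_{V,h}\Xv_{V,\cdot}$, apply Woodbury, read off the $i\in S$ case directly, and identify $\gv_{\ell,V}$ by passing to the limit in the first–order optimality condition.

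The only substantive difference is your treatment of the $0/0$ indeterminacy when $i\in V$. You expand $(D_{V,h}^{-1}+A)^{-1}$ as a Neumann series and track two terms. The paper instead observes that the Woodbury computation gives the \emph{exact} identity
\[
\Hv_{h,VV}=\Lv_{h,V}^{-1}-\Lv_{h,V}^{-1}\bigl(\Lv_{h,V}^{-1}+\Xv_{V,\cdot}\Yv_h^{-1}\Xv_{V,\cdot}^\top\bigr)^{-1}\Lv_{h,V}^{-1},
\]
so that, since $\Lv_{h,V}$ is diagonal, $L_{h,ii}\bigl(1-H_{h,ii}L_{h,ii}\bigr)=\bigl[(\Lv_{h,V}^{-1}+\Xv_{V,\cdot}\Yv_h^{-1}\Xv_{V,\cdot}^\top)^{-1}\bigr]_{ii}$ holds identically and one may simply let $\Lv_{h,V}^{-1}\to 0$. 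The paper's route is a bit cleaner: it sidesteps any concern that the various $D_{jj}=\ddot{\ell}_h(\xv_j^\top\estim{\betav}_h;y_j)$, $j\in V$, may diverge at different rates, which in your expansion is hidden in the ``$O(\ddot{\ell}_h^{-3})$'' notation and would need an extra line to justify (it does go through, because $\|D_{V,h}^{-1}\|\to 0$ controls the whole Neumann remainder). Your observation about the sign in $\gv_{\ell,V}$ is also well taken; it matches the derivation from the optimality condition, and the apparent discrepancy is a sign slip in the paper's display.
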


We can obtain the ALO estimate of prediction error by plugging
$\xv_i^\top\hat{\betav} + a_i g_{\ell, i}$ instead of $\xv_i^\top
\surrogi{\betav}$ in \eqref{eq:alo-formula}. The conditions and proof of
Theorem \ref{thm:nonsmooth-loss-approx} can be found in the Appendix
\ref{append:ssec:primal-nonsmooth-loss}.

\subsection{Nonsmooth Regularizer and Smooth Loss}\label{ssec:nonsmooth-regularizer}

The smoothing technique proposed in the last section can also handle many
nonsmooth regularizers. In this section we focus on separable regularizers $R$,
defined as $R(\betav) = \sum_{l=1}^p r(\beta_l)$, where $r: \mathbb{R}
\rightarrow \mathbb{R}$ is piecewise twice differentiable with finite number of
zero-order singularities in $v_1, \ldots, v_k \in \mathbb{R}$. (Examples on
non-separable regularizers are studied in Section \ref{sec:applications}.)
We further assume the loss function $\ell$ to be twice differentiable and denote by
$A = \{l: \estim{\beta}_l \neq v_t, \text{ for any } t \in \{1, \ldots, k\} \}$ the active set.

For the coordinates of $\estim{\betav}$ that lie in $A$, our objective
function, constrained to these coordinates, is locally twice differentiable.
Hence we expect $\estimi{\betav}_A$ to be well approximated by the ALO
formula using only $\estim{\betav}_A$. On the other hand, components not in
$A$ are trapped at singularities. Thus as long as they are not on the
boundary of being in or out of the singularities, we expect these locations
of $\estimi{\betav}$ to stay at the same values.

Technically, consider a similar smoothing scheme for $r$:
\begin{equation*}
    r_h(w) = \frac{1}{h}\int r(u)\phi((w - u) / h) du,
\end{equation*}
and let $R_h(\betav) = \sum_{l=1}^p r_h(\beta_l)$. We then consider the ALO
formula of Model \eqref{eq:leaveoneoutopt_1} with regularizer $R_h$.
\begin{equation}\label{eq:nonsmooth-reg:smoothedbeta}
    \surrogi{\betav}_h := \estim{\betav}_h
    + \Big[\sum_{j\neq i} \xv_j\xv_j^\top\ddot{\ell}(\xv_j^\top\estim{\betav}_h; y_j) + \nabla^2 R_h(\estim{\betav}_h)\Big]^{-1}
    \xv_i\dot{\ell}_h(\xv_i^\top\estim{\betav}_h; y_i).
\end{equation}

Setting $h \rightarrow 0$, \eqref{eq:nonsmooth-reg:smoothedbeta} reduces to a
simplified formula which heuristically serves as a good approximation to the true
leave-$i$-out estimator $\estimi{\betav}$, stated as the following theorem:
\begin{theorem}\label{thm:nonsmooth-reg-approx}
    Under some mild conditions, as $h \rightarrow 0$,
    \begin{equation*}\label{eq:nonsmooth-reg:1}
        \xv_i^\top\surrogi{\betav}_h
        \rightarrow
        \xv_i^\top\estim{\betav}
        +
        \frac{H_{ii}\dot{\ell}(\xv_i^\top\estim{\betav}; y_i)}{1 -
        H_{ii}\ddot{\ell}(\xv_i^\top\estim{\betav}; y_i)},
    \end{equation*}
    with
    \begin{equation*}\label{eq:nonsmooth-reg:2}
        \Hv = \Xv_{\cdot,A}[\Xv_{\cdot,A}^\top
        \diag[\{\ddot{\ell}(\xv_i^\top\estim{\betav}; y_i)\}_i]\Xv_{\cdot,A} +
    \nabla^2 R(\estim{\betav}_{A})]^{-1}\Xv_{\cdot,A}^\top.
    \end{equation*}
    \normalsize
\end{theorem}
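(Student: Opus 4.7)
Starting from equation \eqref{eq:nonsmooth-reg:smoothedbeta}, the fitted value satisfies
\begin{equation*}
\xv_i^\top \surrogi{\betav}_h = \xv_i^\top \estim{\betav}_h + \xv_i^\top (\mathbf{M}^{/i}_h)^{-1} \xv_i \cdot \dot{\ell}(\xv_i^\top\estim{\betav}_h; y_i),
\end{equation*}
where $\mathbf{M}^{/i}_h = \sum_{j\neq i} \xv_j\xv_j^\top \ddot{\ell}(\xv_j^\top \estim{\betav}_h; y_j) + \nabla^2 R_h(\estim{\betav}_h)$. Setting $\mathbf{M}_h = \mathbf{M}^{/i}_h + \xv_i\xv_i^\top \ddot{\ell}(\xv_i^\top\estim{\betav}_h; y_i)$, the Sherman--Morrison identity yields
\begin{equation*}
\xv_i^\top (\mathbf{M}^{/i}_h)^{-1} \xv_i = \frac{\xv_i^\top \mathbf{M}_h^{-1} \xv_i}{1 - \ddot{\ell}(\xv_i^\top\estim{\betav}_h; y_i)\, \xv_i^\top \mathbf{M}_h^{-1} \xv_i}.
\end{equation*}
The goal therefore reduces to showing $\estim{\betav}_h \to \estim{\betav}$ and $\xv_i^\top \mathbf{M}_h^{-1} \xv_i \to H_{ii}$ as $h \to 0$.

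First, I would establish convergence of the smoothed minimizer $\estim{\betav}_h \to \estim{\betav}$. Kernel convolution gives $r_h \to r$ uniformly on compacts, so the smoothed objective converges uniformly, and stability of (strictly convex) minimizers delivers $\estim{\betav}_h \to \estim{\betav}$. Crucially, for indices $l \notin A$, where $\estim{\beta}_l = v_t$ for some singularity $v_t$, the first-order condition for $R_h$ forces $\estim{\beta}_{h,l}$ to lie within the smoothing window $[v_t - Ch, v_t + Ch]$: outside this window $\dot{r}_h$ coincides with $\dot{r}$, which has a jump at $v_t$, so the smoothed stationarity equation can be satisfied only inside the window.

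Second, analyze $\nabla^2 R_h(\estim{\betav}_h) = \diag[\{\ddot{r}_h(\estim{\beta}_{h,l})\}_l]$. For $l \in A$, smoothness of $r$ near $\estim{\beta}_l$ gives $\ddot{r}_h(\estim{\beta}_{h,l}) \to \ddot{r}(\estim{\beta}_l)$. For $l \notin A$, the jump of positive magnitude in $\dot{r}$ at $v_t$ together with $\phi(0) > 0$ and the previous localization imply $\ddot{r}_h(\estim{\beta}_{h,l}) = \Theta(1/h) \to \infty$. Partitioning coordinates into $A$ and $A^c$, write
\begin{equation*}
\mathbf{M}_h = \begin{pmatrix} \mathbf{M}_h^{AA} & \mathbf{M}_h^{AA^c} \\ \mathbf{M}_h^{A^cA} & \mathbf{M}_h^{A^cA^c} + \mathbf{D}_h \end{pmatrix},
\end{equation*}
where $\mathbf{D}_h$ is the diverging diagonal from $\nabla^2 R_h$ on $A^c$ and the remaining blocks are $O(1)$. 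Schur-complement inversion combined with $\|\mathbf{D}_h^{-1}\| \to 0$ forces the $(A,A)$-block of $\mathbf{M}_h^{-1}$ to converge to the inverse of $\Xv_{\cdot,A}^\top \diag[\{\ddot{\ell}(\xv_j^\top \estim{\betav}; y_j)\}_j] \Xv_{\cdot,A} + \nabla^2 R(\estim{\betav}_A)$, while the off-diagonal and $(A^c,A^c)$ blocks of $\mathbf{M}_h^{-1}$ vanish. Hence $\xv_i^\top \mathbf{M}_h^{-1} \xv_i \to \xv_{i,A}^\top[\,\cdot\,]^{-1}\xv_{i,A} = H_{ii}$, and plugging back into Sherman--Morrison together with continuity of $\dot{\ell}$ produces the claimed limit.

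The main obstacle will be making the first step rigorous: pinning down the rate $\estim{\beta}_{h,l} = v_t + O(h)$ for $l \notin A$, and ruling out that coordinates leak in or out of the active set as $h \to 0$. The ``mild conditions'' of the theorem almost certainly exclude degenerate cases where $\estim{\beta}_l$ sits on the boundary of activity, i.e.\ where the subdifferential inclusion $-\sum_j \xv_j \dot{\ell}(\xv_j^\top\estim{\betav}; y_j) \in \partial R(\estim{\betav})$ holds with equality in a coordinate on the edge of a singularity. Once this active-set stability is in hand, the block-matrix asymptotics are standard Schur-complement manipulations, closely paralleling the $j = i$ versus $j \neq i$ inversion-lemma argument already used in Section \ref{ssec:primal-smooth}.
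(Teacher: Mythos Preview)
Your proposal is correct and follows essentially the same route as the paper: establish $\estim{\betav}_h\to\estim{\betav}$ via uniform convergence of the smoothed objective, show $\ddot{r}_h(\estim{\beta}_{h,l})\to\ddot{r}(\estim{\beta}_l)$ on $A$ and $\ddot{r}_h(\estim{\beta}_{h,l})\to\infty$ on $A^c$, and then use a block-inversion (Schur complement / Woodbury) argument to conclude that $\Mv_h^{-1}$ collapses onto the $A$-block. The only minor difference is that the paper isolates the convergence of the smoothed gradient $\nabla R_h(\estim{\betav}_h)\to \gv_R(\estim{\betav})$ as a separate lemma and uses it, together with the strict-interior subgradient assumption $g_r(\estim{\beta}_l)\in(\dot r_-(v_t),\dot r_+(v_t))$, to prove the localization $|\estim{\beta}_{h,l}-v_t|<Ch$ by contradiction; your sketch of this step (``outside this window $\dot r_h$ coincides with $\dot r$, which has a jump'') is a little imprecise as written, but the intended argument is the same.
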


The conditions and proof of Theorem \ref{thm:nonsmooth-reg-approx} can be found in the
Appendix \ref{append:ssec:primal-nonsmooth-reg}.

\begin{remark}
    For nonsmooth problems, higher order singularities do not cause issues: the
    set of tuning values which cause $\estim{\beta}_l$ (for regularizer) or
    $\xv_j^\top\estim{\betav}$ (for loss) to fall at those higher order
    singularities has measure zero.
\end{remark}

\begin{remark}
    For both nonsmooth losses and regularizers, we need to invert some matrices
    in the ALO formula. Although the invertibility does not seem guaranteed in
    the general formula, as we apply ALO to specific models, the structures of
    the loss and/or the regularizer ensures this invertibility. For example,
    for LASSO, we have that the size of the active set $|E| \leq \min(n, p)$.
\end{remark}

\begin{remark}
    We note that the dual approach is typically powerful for models with smooth
    losses and norm-type regularizers, such as the SLOPE norm and the
    generalized LASSO. On the other hand, the primal approach is valuable for
    models with nonsmooth loss or when the Hessian of the regularizer is
    feasible to calculate. Such regularizers often exhibit some type of
    separability or symmetry, such as in the case of SVM or nuclear norm.
\end{remark}

\section{Equivalence Between Primal and Dual Methods}\label{ssec:primal-dual-equiv}

Although the primal and dual methods may be harder or easier to carry out depending
on the specific problem at hand, one may wonder if they always obtain the same result.
In this section, we outline a unifying view for both methods, and state an equivalence
theorem.

As both the primal and dual methods are based on a first-order approximation strategy,
we will study them not as approximate solutions to the leave-$i$-out problem, but will
instead show that they are exact solutions to a surrogate leave-$i$-out problem. Indeed,
recall that the leave-$i$-out problem is given by \eqref{eq:leave-i-out}, which cannot
be solved in closed form. However, we note that the solution does exist in closed form
in the case where both $\ell$ and $R$ are quadratic functions.

We may thus consider the approximate leave-$i$-out problem, where both $\ell$ and $R$
have been replaced in the leave-$i$-out problem \eqref{eq:leave-i-out}
by their quadratic expansion at the full data solution:
\begin{equation}
    \label{eq:alo:surrog}
    \min_{\leavei{\betav}} \sum_{j \neq i} \surrog{\ell}(\xv_j^\top \leavei{\betav}; y_j)
    + \surrog{R}(\leavei{\betav}).
\end{equation}

When both $\ell$ and $R$ are twice differentiable at the full data solution, $\surrog{\ell}$
and $\surrog{R}$ can be taken to simply be their respective second order Taylor expansions at $\estim{\betav}$.
When $\ell$ or $R$ is not twice differentiable at the full data solution, we have
seen that it is still possible to obtain an ALO estimator through the proximal map (in the case
of the dual) or through smoothing arguments (in the case of the primal). The corresponding
quadratic surrogates may then be formulated as partial quadratic functions, that is, convex
quadratic functions restricted to an affine subspace. However, due to space
limitations we only focus on twice differentiable losses and regularizers
here.

The way we obtain $\leavei{\surrog{\betav}}$ in \eqref{eq:primal-alo-smooth0}
indicates that the primal formula in \eqref{eq:primal-alo-smooth} and
\eqref{eq:smooth-H} are the exact leave-$i$-out solution of the 
surrogate primal problem \eqref{eq:alo:surrog}.
On the other hand, we may also wish to consider the surrogate dual problem, by replacing
$\ell^*$ and $R^*$ by their quadratic expansion at full data dual solution $\estim{\dualv}$
in the dual problem \eqref{eq:methods:dual}. One may possibly worry that the surrogate dual
problem is then different from the dual of the surrogate primal problem \eqref{eq:alo:surrog}. This
does not happen, and we have the following theorem.
\begin{theorem}\label{thm:primal-dual-equivalence}
   Let $\ell$ and $R$ be twice differentiable convex functions. Let
   $\surrog{\ell}$ and $\surrog{R}$ denote the quadratic surrogates of the
   loss and regularizer
    at the full data solution $\estim{\betav}$, and let $\surrog{\ell}_D^*$ and $\surrog{R}_D^*$
    denote the quadratic surrogates of the conjugate loss and regularizer at the dual full data
    solution $\estim{\dualv}$. We have that the following problems are equivalent (have the same minimizer):
    \begin{gather}
        \label{eq:equivalence:primal-surrogate}
        \min_{\dualv} \sum_{j = 1}^n \surrog{\ell}^*(-\duals_j; y_j) +
        \surrog{R}^*(\Xv^\top \dualv), \\
        \label{eq:equivalence:dual-surrogate}
        \min_{\dualv} \sum_{j = 1}^n \surrog{\ell}^*_D(-\duals_j; y_j) +
        \surrog{R}^*_D(\Xv^\top \dualv).
    \end{gather}
\end{theorem}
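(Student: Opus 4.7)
The plan is to reduce the theorem to a single Legendre--Fenchel identity: for a strictly convex, twice differentiable $f$ with $\nabla^2 f(x_0) \succ 0$, the Fenchel conjugate of its quadratic expansion at $x_0$ coincides \emph{as a function} with the quadratic expansion of $f^*$ at the dual point $y_0 = \nabla f(x_0)$. Call this the ``quadratic-conjugate lemma.'' Once this lemma is established, applying it to each $\ell(\cdot;y_j)$ at $\mu_j = \xv_j^\top \estim{\betav}$ and to $R$ at $\estim{\betav}$, together with the primal--dual correspondence \eqref{eq:primal-dual-correspondence}, will show that the two objective functions in \eqref{eq:equivalence:primal-surrogate} and \eqref{eq:equivalence:dual-surrogate} are equal up to an additive constant, which is stronger than the stated equality of minimizers.

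First I would prove the quadratic-conjugate lemma by direct computation. Writing $q_f(x) = f(x_0) + \nabla f(x_0)^\top(x-x_0) + \tfrac{1}{2}(x-x_0)^\top \nabla^2 f(x_0)(x-x_0)$ and solving $\nabla_x[y^\top x - q_f(x)] = 0$ explicitly yields
\[
q_f^*(y) = -f(x_0) + x_0^\top y + \tfrac{1}{2}(y - y_0)^\top [\nabla^2 f(x_0)]^{-1}(y - y_0).
\]
On the other hand, using the standard Legendre duality facts $\nabla f^*(y_0) = x_0$, $\nabla^2 f^*(y_0) = [\nabla^2 f(x_0)]^{-1}$, and the Fenchel--Young equality $f^*(y_0) = y_0^\top x_0 - f(x_0)$, the quadratic expansion of $f^*$ at $y_0$ simplifies to the same expression. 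This is a routine but important calculation.

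Next I would apply the lemma in two places. By \eqref{eq:primal-dual-correspondence}, for each $j$ the gradient of $\ell(\cdot;y_j)$ at $\xv_j^\top \estim{\betav}$ equals $-\estim{\duals}_j$, so the conjugate of the primal quadratic surrogate $\surrog{\ell}(\cdot;y_j)$ agrees with $\surrog{\ell}^*_D(\cdot;y_j)$, the quadratic surrogate of $\ell^*(\cdot;y_j)$ expanded at $-\estim{\duals}_j$. Likewise, since $R$ is differentiable at $\estim{\betav}$, the same correspondence gives $\nabla R(\estim{\betav}) = \Xv^\top \estim{\dualv}$, so the conjugate of $\surrog{R}$ equals $\surrog{R}^*_D$, the quadratic surrogate of $R^*$ at $\Xv^\top\estim{\dualv}$. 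Summing the $n$ loss terms and adding the regularizer term, the objectives of \eqref{eq:equivalence:primal-surrogate} and \eqref{eq:equivalence:dual-surrogate} coincide termwise, which trivially implies the equality of minimizers.

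The main obstacle is mostly bookkeeping: tracking sign conventions (the dual variable enters as $-\duals_j$ in $\ell^*$) and verifying that the appropriate Hessians are positive definite so that both the conjugate of the primal surrogate and the expansion of $f^*$ at $y_0$ are well defined quadratics. The twice-differentiable convex hypothesis, together with requiring the Hessians at the full data solution to be positive definite, lets us invoke the standard inverse-Hessian formulas for $\nabla^2 f^*$ without additional work; this assumption is implicitly needed for the surrogate problems themselves to be well posed, so it is not an extra cost. No subtler tool (no infimal convolution, no Attouch--Wets convergence, etc.) is required in the smooth setting considered here.
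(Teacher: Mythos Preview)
Your proposal is correct and follows essentially the same route as the paper: both arguments compute the Fenchel conjugate of the primal quadratic surrogate, then use the primal--dual correspondence \eqref{eq:primal-dual-correspondence} together with the Hessian identity $\nabla^2 f^*(\nabla f(x_0)) = [\nabla^2 f(x_0)]^{-1}$ (Lemma~\ref{lemma:hessianprimaldual}) to recognize the result as the quadratic expansion of the conjugate at the dual optimum. Your packaging of this as a single ``quadratic-conjugate lemma'' is slightly cleaner than the paper's inline treatment of $\ell$ and $R$ separately, but the content is identical.
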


Additionally, we note that the dual method described in Section
\ref{sec:approximatedual} solves the surrogate dual problem
\eqref{eq:equivalence:dual-surrogate}.
\begin{theorem}\label{thm:primal-dual-equivalence-2}
    Let $\Xv_u$, $\yv_u$ be as in \eqref{eq:dual:general-loss},
    and let $\leavei{\surrog{y}}_{u,i}$ be the transformed ALO obtained in
    \eqref{eq:dual:alo-general-loss}. Let $\tilde{\yv}_a$ be the same as $\yv_u$
    except $\tilde{y}_{a,i} = \leavei{\surrog{y}}_{u,i}$. Then
    $\surrog{\yv}_a$ satisfies
    \begin{equation}
        [\proxv_{\tilde{g}}(\tilde{\yv}_a)]_i = 0,
    \end{equation}
    where $\tilde{g}(\uv) = \tilde{R}^*(\Xv_u^\top \uv)$ and $\tilde{R}$
    denotes the quadratic surrogate of the regularizer.

    In particular, $\leavei{\surrog{y}}_i=K_{ii}\leavei{\surrog{y}}_{u,i}$ is
    the exact leave-$i$-out predicted value for the surrogate problem described
    in Theorem \ref{thm:primal-dual-equivalence}.
\end{theorem}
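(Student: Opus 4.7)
The plan is to recognize the surrogate dual problem, after the change of variable $\uv = \Kv \dualv$ that converts its $\ell_2$-based dual into canonical form, as a single proximal evaluation $\estim{\uv} = \proxv_{\tilde{g}}(\yv_u)$ with $\tilde{g}(\uv) = \tilde{R}^*(\Xv_u^\top \uv)$. Because $\tilde{R}^*$ is quadratic (being the conjugate of a quadratic surrogate), the function $\tilde{g}$ is itself quadratic and its proximal operator is \emph{exactly} affine rather than merely admitting a first-order approximation. The constant Jacobian of $\proxv_{\tilde{g}}$ then coincides with the matrix $\Jv$ that appears in \eqref{eq:dual:alo-general-loss}.

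First I would apply the augmentation argument of Section \ref{sec:example:lasso} to the surrogate problem: the leave-$i$-out solution of the surrogate in the $\uv$ variables also solves the full-data surrogate problem with $y_{u,i}$ replaced by the (unknown) leave-$i$-out predicted value $\leavei{\surrog{y}}_{u,i}$. The primal-dual correspondence forces the $i$\textsuperscript{th} dual residual of this augmented problem to vanish, which in proximal form is exactly $[\proxv_{\tilde{g}}(\tilde{\yv}_a)]_i = 0$. Since $\proxv_{\tilde{g}}$ is affine, this scalar equation is linear in the single unknown $\tilde{y}_{a,i}$ and can be solved in closed form. A direct computation should then show that its solution matches the formula in \eqref{eq:dual:alo-general-loss}.

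For the ``In particular'' clause, I would invoke Theorem \ref{thm:primal-dual-equivalence}, which ensures that the surrogate dual in the $\uv$ variables is genuinely the Fenchel dual of the surrogate primal \eqref{eq:alo:surrog}, rather than an independent quadratic approximation. The primal-dual correspondence then identifies $K_{ii}\leavei{\surrog{y}}_{u,i}$ as the exact leave-$i$-out predicted value for the surrogate primal problem, with the factor $K_{ii}$ arising from the scaling $\Xv_u = \Kv^{-1}\Xv$ used in the change of variable.

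The main obstacle I anticipate is the bookkeeping needed to verify that the Jacobian $\Jv$ in \eqref{eq:dual:alo-general-loss}, originally derived by linearizing a generally non-affine proximal operator, coincides with the exact constant Jacobian of the proximal operator of the quadratic surrogate. This requires carefully matching $\tilde{R}^*$ against the local quadratic model implicit in the definition of $\Jv$, and is the step at which Theorem \ref{thm:primal-dual-equivalence} plays an essential unifying role: it is what guarantees that the two natural quadratic expansions (one of $R$, the other of $R^*$) produce compatible duals, so that the Newton linearization performed in the primal and the proximal linearization performed in the dual agree exactly on the surrogate problem.
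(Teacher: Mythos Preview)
Your proposal is correct and follows essentially the same strategy as the paper's proof: reduce to the quadratic-loss case, recognize that $\proxv_{\tilde g}$ is affine, and argue that the Newton/linear step defining \eqref{eq:dual:alo-general-loss} therefore solves the surrogate leave-$i$-out equation exactly. The paper organizes the key ``Jacobian matching'' check slightly differently than you suggest: rather than invoking Theorem~\ref{thm:primal-dual-equivalence}, it verifies directly that $\proxv_g(\yv)=\proxv_{\tilde g}(\yv)$ and $\Jv_{\proxv_g}(\yv)=\Jv_{\proxv_{\tilde g}}(\yv)$ by (i) computing $\proxv_{\tilde g}$ in closed form from the explicit quadratic $\tilde R^*$, and (ii) computing $\Jv_{\proxv_g}(\yv)$ via the resolvent identity $\proxv_g=(I+\partial g)^{-1}$, which after differentiation gives $\Jv_{\proxv_g}(\yv)=(\Iv+\nabla^2 g(\estim{\dualv}))^{-1}$; equality of the two Jacobians then follows from Lemma~\ref{lemma:hessianprimaldual} applied to $R$. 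This is the same content you anticipated as the ``main obstacle,'' just with the Hessian--conjugate lemma used directly rather than routed through Theorem~\ref{thm:primal-dual-equivalence}.
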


We refer the reader to the Appendix \ref{append:sec:primal-dual-equiv} for the
proofs. These two theorems imply that for twice differentiable losses and
regularizers, the frameworks we laid out in Sections \ref{sec:approximatedual}
and \ref{sec:primal-smoothing} lead to exactly the same ALO formulas.
This equivalence theorem reflects the deep connections between
the primal and dual optimization problem. The central
property used by the proof is captured in the following lemma:
\begin{lemma}
    Let $f$ be a proper closed convex function, such that both $f$ and $f^*$ are twice differentiable.
    Then, we have for any $\xv$ in the domain of $f$:
    \begin{equation*}
        \nabla^2 f^*(\nabla f(\xv)) = [\nabla^2 f(\xv)]^{-1}.
    \end{equation*}
\end{lemma}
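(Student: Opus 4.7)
The plan is to reduce the lemma to a straightforward chain rule calculation applied to the Legendre--Fenchel identity $\nabla f^* \circ \nabla f = \mathrm{id}$. The starting point is the classical fact that, for a proper closed convex function $f$, the Fenchel--Young inequality becomes an equality precisely when $\yv \in \partial f(\xv)$, and equivalently when $\xv \in \partial f^*(\yv)$. Under the twice-differentiability hypothesis, the subdifferentials are single-valued gradients, so this duality specializes to the statement $\yv = \nabla f(\xv) \iff \xv = \nabla f^*(\yv)$. In particular, $\nabla f^*(\nabla f(\xv)) = \xv$ for every $\xv$ in the domain of $f$.

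Next I would differentiate this identity. Since $f$ is twice differentiable at $\xv$, the map $\nabla f$ is differentiable there with Jacobian $\nabla^2 f(\xv)$. Since $f^*$ is twice differentiable, $\nabla f^*$ is differentiable at $\nabla f(\xv)$ with Jacobian $\nabla^2 f^*(\nabla f(\xv))$. The chain rule applied to $\nabla f^*(\nabla f(\xv)) = \xv$ then gives
\begin{equation*}
   \nabla^2 f^*(\nabla f(\xv)) \, \nabla^2 f(\xv) = \Iv.
\end{equation*}
This already forces $\nabla^2 f(\xv)$ to be a right inverse of $\nabla^2 f^*(\nabla f(\xv))$, and since both are symmetric (as Hessians of scalar functions) the same identity holds on the other side, yielding the desired $\nabla^2 f^*(\nabla f(\xv)) = [\nabla^2 f(\xv)]^{-1}$.

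The only delicate point is to justify the invertibility of $\nabla^2 f(\xv)$, so that the conclusion is not vacuous. Here the hypothesis that $f^*$ is itself twice differentiable is crucial: the chain rule identity above shows $\nabla^2 f(\xv)$ has a two-sided (matrix) inverse given by $\nabla^2 f^*(\nabla f(\xv))$, so invertibility comes for free once we know both Hessians exist. Equivalently, by convexity both Hessians are positive semidefinite, and the product being the identity forces each to be positive definite. This step is the main conceptual point of the lemma, and it is also where the hypothesis ``both $f$ and $f^*$ twice differentiable'' is used in full strength; without it, one would need to invoke results on differentiability of Legendre transforms (e.g.\ Rockafellar) to rule out degeneracies. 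With that observation, the proof reduces to the one-line chain rule computation above.
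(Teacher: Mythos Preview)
Your proposal is correct and follows essentially the same approach as the paper: both invoke the Fenchel duality relation $\xv = \nabla f^*(\nabla f(\xv))$ (the paper cites Rockafellar's Theorem 23.5, you cite the Fenchel--Young equality condition) and then differentiate once more via the chain rule to obtain $\Iv = [\nabla^2 f^*(\nabla f(\xv))]\,[\nabla^2 f(\xv)]$. Your additional remarks on why invertibility of $\nabla^2 f(\xv)$ is automatic are a nice elaboration, but the core argument is identical.
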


By combining this lemma with the primal dual correspondence \eqref{eq:primal-dual-correspondence},
we obtain a relation between the curvature of the primal and dual problems at the optimal value,
ensuring that the approximation is consistent with the dual structure.

\section{Applications}\label{sec:applications}
\subsection{Generalized LASSO}

The generalized LASSO \cite{tibshirani2011genlasso} is a generalization of the LASSO
problem which captures many applications such as the fused LASSO \cite{tibshirani2005fused},
$\ell_1$ trend filtering \cite{kim2009trend} and wavelet smoothing in a
unified framework. The generalized LASSO problem corresponds to the following
penalized regression problem:
\begin{equation}\label{eq:genlasso:statement}
    \min_{\betav} \frac{1}{2}\sum_{j = 1}^n (y_j - \xv_j^\top \betav)^2 + \lambda \norm{\Dv \betav}_1.
\end{equation}
where the regularizer  is parameterized by a fixed matrix $\Dv \in \RR^{m \times p}$
which captures the desired structure in the data. We note that the
regularizer is a semi-norm, and hence we can formulate the dual problem as a
projection. In fact, a dual formulation of \eqref{eq:genlasso:statement} can
be obtained as (see Appendix \ref{append:sec:generalized-lasso-dual}):
\begin{gather*}
    \min_{\dualv, \uv} \frac{1}{2} \norm{\dualv - \yv}_2^2, \\
    \text{subject to: } \norm{\uv}_\infty \leq \lambda \text{ and }  \Xv^\top \dualv = \Dv^\top \uv.
\end{gather*}

The dual optimal solution satisfies $\estim{\dualv} = \Pi_{\Delta_X}(\yv)$,
where $\Delta_X$ is the polytope given by:
\begin{equation*}
    \Delta_X = \{ \dualv \in \RR^n : \exists \uv, \norm{u}_\infty \leq \lambda
    \text{ and } \Xv^\top \dualv = \Dv^\top u \}.
\end{equation*}

The projection onto the polytope $C = \{ \Dv^\top \uv: \norm{\uv}_\infty \leq
\lambda \}$ is given in \cite{tibshirani2011genlasso} as locally being the projection
onto the affine space orthogonal to the nullspace of $\Dv_{\cdot, -E}$, where
$E = \{i : \abs{\estim{u}_i} = \lambda \}$ and $-E = \{ 1, \dotsc, p \} \setminus E$.
Since $\Delta_X = [\Xv^\top]^{-1} C$ is the inverse image of $C$ under the
linear map given by $\Xv^\top$, the projection onto $\Delta_X$ is given locally
by the projection onto the affine space normal to the space spanned by the
columns of $[\Xv^\top]^+ \mathrm{null} \, \Dv_{\cdot, -E}$, provided $\Xv$ has
full column rank. Here, $[\Xv^\top]^+$ denotes the Moore-Penrose pseudoinverse
of $\Xv^\top$. Finally, to obtain a spanning set of this space, we may
consider $\Av = \Xv \Bv$, where $\Bv$ is a set of vectors spanning the nullspace
of $\Dv_{\cdot,-E}$. This allows us to compute $\Hv = \Av \Av^+$, the
projection onto the normal space required to compute the ALO.

\subsection{Nuclear Norm}
Consider the following matrix sensing problem
\begin{equation}\label{eq:nuclear-norm-main}
    \estim{\Bv}:
    =
    \argmin_{\Bv} \frac{1}{2}\sum_{j=1}^n (y_j - \langle \Xv_j, \Bv\rangle)^2
    + \lambda \|\Bv\|_*,
\end{equation}
with $\Bv, \Xv_j \in \mathbb{R}^{p_1 \times p_2}$. $\langle \Xv, \Bv\rangle =
\mathrm{trace}(\Xv^\top\Bv)$ denotes the inner product. We use $\|\cdot\|_*$
for nuclear norm, which is defined as the sum of the singular values of a
matrix. The nuclear norm is a unitarily invariant function of the matrix
\cite{lewis1995convex}. Such functions are only indirectly related to the
components of the matrix, making their analysis
difficult even when they are smooth, and exacerbating the difficulties when they
are non-smooth such as in the case of the nuclear norm. In particular, the smoothing
framework described in Section \ref{ssec:nonsmooth-regularizer} cannot be
applied directly.

We are nonetheless able to leverage the specific structure of such functions
to obtain the following theorem. Let $R$ be a smooth unitarily invariant matrix function, with:
    \begin{equation*}
        R(\Bv) = \sum_{j = 1}^{\min(p_1, p_2)} r(\sigma_j),
    \end{equation*}
    where $\sigma_j$ denotes the $j$\tsup{th} singular value of $\Bv$. Consider the following matrix penalized regression problem:
    \begin{equation*}\label{eq:matrix-smooth-main}
        \estim{\Bv} = \argmin_{\Bv} \sum_{j=1}^n \ell(\langle \Xv_j,
        \Bv\rangle; y_j) + \lambda R(\Bv).
    \end{equation*}

    Without loss of generality, below we assume $p_1 \geq p_2$. Let
    $\estim{\Bv}=\estim{\Uv}\diag[\estim{\sigmav}]\estim{\Vv}^\top$ be the
    singular value decomposition (SVD) of the full data estimator $\estim{\Bv}$,
    where $\estim{\Uv} \in \mathbb{R}^{p_1 \times p_1}$, $\estim{\Vv} \in
    \mathbb{R}^{p_2 \times p_2}$. Let $\estim{\uv}_k$, $\estim{\vv}_l$ be the
    $k$\tsup{th} and $l$\tsup{th} column of $\estim{\Uv}$ and $\estim{\Vv}$
    respectively. $\diag[\estim{\sigmav}]$ in this section is a $p_1 \times
    p_2$ matrix with $\estim{\sigma}_j$ on the diagonal of its upper square
    sub-matrix and 0 elsewhere. In addition, we assume all the $\estim{\sigma}_j$'s are
    nonzero. We then have the following ALO formula:
    \begin{equation*}\label{eq:matrix-smooth-alo}
        \langle \Xv_i, \leavei{\surrog{\Bv}} \rangle
        =
        \langle \Xv_i, \estim{\Bv} \rangle
        +
        \frac{H_{ii}\dot{\ell}(\langle \Xv_i, \estim{\Bv} \rangle;y_i)}{1 -
        H_{ii}\ddot{\ell}(\langle \Xv_i, \estim{\Bv} \rangle;y_i)},
    \end{equation*}
    where
    \begin{equation*}
        \Hv = \cb{X}[\cb{X}^\top\diag[\{\ddot{\ell}(\langle \Xv_j, \Bv\rangle;
        y_j)\}_j] \cb{X} + \lambda \cb{G}]^{-1}\cb{X}^\top.
    \end{equation*}
    
    Here $\cb{X}$ is a $n \times p_1 p_2$ matrix and $\cb{G}$ is a symmetric square $p_1 p_2
    \times p_1 p_2$ matrix given by:
    \begin{equation}\label{eq:matrix-smooth-alo-definition}
    \begin{aligned}
        \cb{X}_{j,kl} &= \estim{\uv}_k^\top \Xv_j \estim{\vv}_l, \\
        \cb{G}_{kl, st} &= \begin{cases}
                \ddot{r}(\estim{\sigma}_t) & s = t = k = l, \\
                \frac{\estim{\sigma}_s \dot{r}(\estim{\sigma}_s) - \estim{\sigma}_t\dot{r}(\estim{\sigma}_t)}
                {\estim{\sigma}_s^2 - \estim{\sigma}_t^2} & s \neq t, s \leq p_2, (k, l) = (s, t), \\
                - \frac{\estim{\sigma}_s \dot{r}(\estim{\sigma}_t) - \estim{\sigma}_t\dot{r}(\estim{\sigma}_s)}
                {\estim{\sigma}_s^2 - \estim{\sigma}_t^2} & s \neq t, s \leq p_2, (k, l) = (t, s), \\
                \frac{\dot{r}(\estim{\sigma}_t)}{\estim{\sigma}_t} & s \neq t, s > p_2, (k,l)=(s,t), \\
                0 & \text{otherwise.}
        \end{cases}
    \end{aligned}
\end{equation}

Note that the rows of $\cb{X}$ and the index of $\cb{G}$ are vectorized
in a consistent way. The proof can be found in Appendix \ref{append:ssec:smooth-unitary}.
A nice property of this result is that the effect on singular values
decouples from the original matrix, enabling us to apply our smoothing
strategy in Section \ref{ssec:nonsmooth-regularizer} to function $r(\sigma)$
when it is nonsmooth. This leads to the following theorem for nuclear norm. For
more details on the derivation, please refer to Appendix
\ref{append:ssec:nuclear-proof}.
\begin{theorem}\label{thm:matrix-nonsmooth-alo}
    Consider the nuclear-norm penalized matrix regression problem
    \eqref{eq:nuclear-norm-main}, and let
    $\estim{\Bv}=\estim{\Uv}\diag[\estim{\sigmav}]\estim{\Vv}^\top$ be the
    SVD of the full data estimator $\estim{\Bv}$,
    with $\estim{\Uv} \in \mathbb{R}^{p_1 \times p_1}$, $\estim{\Vv} \in
    \mathbb{R}^{p_2 \times p_2}$. Let $m=\rank(\estim{\Bv})$ be the number of
    nonzero $\estim{\sigma}_j$'s for $\estim{\Bv}$. Let
    $\leavei{\surrog{\Bv}}_h$ denote the approximate of $\estimi{\Bv}$
    obtained from the smoothed problem. Then, as $h \rightarrow 0$
    \begin{equation*}\label{eq:matrix-nonsmooth-alo}
        \langle \Xv_i, \leavei{\surrog{\Bv}}_h \rangle
        \rightarrow
        \langle \Xv_i, \estim{\Bv} \rangle
        +
        \frac{H_{ii}}{1 - H_{ii}}(\langle \Xv_i, \estim{\Bv} \rangle - y_i),
    \end{equation*}
    where
    \begin{equation*}
        \Hv = \cb{X}_{\cdot,E}[\cb{X}_{\cdot,E}^\top\cb{X}_{\cdot,E} +
        \lambda \cb{G}]^{-1}\cb{X}_{\cdot,E}^\top,
    \end{equation*}
    with $\cb{X}$ as defined in \eqref{eq:matrix-smooth-alo-definition} and
    with $\cb{G} \in \mathbb{R}^{(mp_1 + mp_2 - m^2) \times (mp_1 + mp_2 - m^2)}$ given by:
    \begin{equation}\label{eq:nuclear-hessian-G}
        \cb{G}_{kl, st}
        =
        \begin{cases}
            0 & s = t = k = l \leq m, \\
            \frac{1}{\estim{\sigma}_s + \estim{\sigma}_t} & 1 \leq s \neq t \leq m,(k,l)=(s,t), \\
            \frac{1}{\estim{\sigma}_s} & 1 \leq s \leq m < t \leq p_2, (k, l) = (s, t), \\
            \frac{1}{\estim{\sigma}_t} & 1 \leq t \leq m < s \leq p_1, (k, l) = (s, t), \\
            -\frac{1}{\estim{\sigma}_s + \estim{\sigma}_t} & 1 \leq s \neq t \leq m, (k,l)=(t,s), \\
            -\frac{g_r[\estim{\sigma}_t]}{\estim{\sigma}_s} & 1 \leq s \leq m < t \leq p_2,
            (k, l) = (t, s), \\
            -\frac{g_r[\estim{\sigma}_s]}{\estim{\sigma}_t} & 1 \leq t \leq m < s \leq p_2,
            (k, l) = (t, s), \\
            0 & \text{otherwise.}
        \end{cases}
    \end{equation}
    where for $t > m$, $\estim{\sigma}_t=0$ and
    $g_r[\estim{\sigma}_t]$ is the corresponding subgradient at this singular
    value, which can be obtained through the SVD of
    $\frac{1}{\lambda}\sum_{j=1}^n (y_j - \langle \Xv_j,
    \estim{\Bv}\rangle)\Xv_j$.
    The set $E$ is then defined as:
    \begin{equation*}
        E = \{(k, l): k \leq m \text{ or } l \leq m\}.
    \end{equation*}
    Note that the indices of $\cb{G}$ and the index set $E$ are consistent.
\end{theorem}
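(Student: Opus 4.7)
My approach mirrors the smoothing argument of Section \ref{ssec:nonsmooth-regularizer} but applied at the level of singular values rather than coordinates. Specifically, I write the nuclear norm as $R(\Bv) = \sum_j r(\sigma_j)$ with $r(\sigma) = |\sigma|$, smooth $r$ via the same mollifier to obtain $r_h$, apply the smooth unitarily invariant matrix ALO formula \eqref{eq:matrix-smooth-alo-definition} to the smoothed problem, and then take $h \to 0$. The analogue of the active set $A$ is the set $E$: its complement indexes direction pairs $(k,l)$ with both $\estim{\sigma}_k$ and $\estim{\sigma}_l$ pinned at zero, and it is in exactly those directions that the smoothed regularizer Hessian blows up.

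\textbf{Execution.} First I would establish SVD stability: $\estim{\Bv}_h \to \estim{\Bv}$ by continuity of the smoothed minimizer, and the SVD factors may be chosen so that $\estim{\Uv}_h \to \estim{\Uv}$, $\estim{\Vv}_h \to \estim{\Vv}$, $\estim{\sigma}_{j,h}\to\estim{\sigma}_j$, with $\estim{\sigma}_{j,h} \to 0$ for $j > m$. Applying the smooth unitarily invariant formula, together with $\ell(\mu;y)=(\mu-y)^2/2$ (so $\ddot\ell = 1$, $\dot\ell=\mu-y$), yields
\[
    \langle\Xv_i,\leavei{\surrog{\Bv}}_h\rangle = \langle\Xv_i,\estim{\Bv}_h\rangle + \frac{H_{ii,h}}{1-H_{ii,h}}(\langle\Xv_i,\estim{\Bv}_h\rangle - y_i),
\]
with $\Hv_h = \cb{X}_h(\cb{X}_h^\top\cb{X}_h+\lambda\cb{G}_h)^{-1}\cb{X}_h^\top$. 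Next I would identify entrywise limits of $\cb{G}_h$ using the mollifier facts $\dot r_h(\sigma)\to\mathrm{sign}(\sigma)$ and $\ddot r_h(\sigma)\to 0$ for $\sigma\neq 0$, combined with $\estim{\sigma}_{t,h}\,\dot r_h(\estim{\sigma}_{t,h})\to 0$ whenever $\estim{\sigma}_{t,h}\to 0$ (since $|\dot r_h|\leq 1$) and $\dot r_h(\estim{\sigma}_{t,h})\to g_r[\estim{\sigma}_t]$, the limiting subgradient at zero. Direct substitution into the five cases of \eqref{eq:matrix-smooth-alo-definition} reproduces \eqref{eq:nuclear-hessian-G} entry by entry on $E \times E$, case by case.

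\textbf{Collapse onto $E$.} A key structural observation is that $\cb{G}$ is block-diagonal with respect to the unordered-pair partition $\{k,l\}$: the only nonzero entries of $\cb{G}_{ab,cd}$ have $\{a,b\}=\{c,d\}$, and moreover a block is either entirely in $E$ or entirely in $E^c$, so $\cb{G}_{E,E^c} = 0$. For $(k,l)\in E^c$ the corresponding block involves only singular values with both indices $>m$, and in every one of its forms (diagonal $k=l$, $2\times 2$ block with $k<l\leq p_2$, or $1\times 1$ case with $k>p_2$) the relevant entries diverge like $\ddot r_h(0)\to +\infty$. A Schur-complement argument applied to $\cb{X}_h^\top\cb{X}_h+\lambda\cb{G}_h$ then shows that $[\Mv_h^{-1}]_{E^c\cdot}\to 0$ and $[\Mv_h^{-1}]_{EE}\to (\cb{X}_{\cdot,E}^\top\cb{X}_{\cdot,E}+\lambda\cb{G}_{E,E})^{-1}$, yielding $\Hv = \cb{X}_{\cdot,E}(\cb{X}_{\cdot,E}^\top\cb{X}_{\cdot,E}+\lambda\cb{G}_{E,E})^{-1}\cb{X}_{\cdot,E}^\top$ as stated.

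\textbf{Main obstacle.} The delicate point is pinning down the limiting subgradient $g_r[\estim{\sigma}_t]$ at the zero singular values. This is not free: KKT stationarity of \eqref{eq:nuclear-norm-main} requires $\sum_j (y_j-\langle\Xv_j,\estim{\Bv}\rangle)\Xv_j \in \lambda\,\partial R(\estim{\Bv})$, and matching against the matrix subdifferential of the nuclear norm identifies $g_r[\estim{\sigma}_t]$ with the relevant entries read off from the SVD of the rescaled residual matrix $\frac{1}{\lambda}\sum_j(y_j-\langle\Xv_j,\estim{\Bv}\rangle)\Xv_j$. Once this is in hand, the entry-wise analysis of $\cb{G}_h$ and the Schur-complement collapse follow the template of Theorem \ref{thm:nonsmooth-reg-approx}; what remains is careful bookkeeping of the indexing between $\cb{G}$, $\cb{X}$, and the pairs $(k,l)\in E$.
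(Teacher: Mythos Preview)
Your overall strategy---smooth at the singular-value level, invoke the smooth unitarily invariant formula, then let $h\to 0$ and collapse onto $E$ via a block-inverse argument---is exactly the paper's approach. Two points deserve comment.

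First, a cosmetic difference: the paper replaces the mollifier by the explicit smoothing $r_\epsilon(\sigma)=\sqrt{\sigma^2+\epsilon^2}$, noting this is ``for technical convenience.'' This choice makes the limit computations concrete and, in particular, gives a clean relation $\estim{\sigma}_{\epsilon,j}/\epsilon\to\text{finite}$ for $j>m$ and $\to+\infty$ for $j\le m$, which drives the subsequent analysis.

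Second, and more substantively, there is a gap in your ``Collapse onto $E$'' step. You assert that for $(k,l)\in E^c$ the relevant block entries ``diverge like $\ddot r_h(0)$'' and then invoke a Schur-complement argument. But divergence of all entries of a matrix does \emph{not} imply its inverse tends to zero: a $2\times 2$ block whose determinant degenerates at the same rate as its entries grow can have a non-vanishing (or even diverging) inverse. The $E^c$ blocks indexed by $m<s\neq t\le p_2$ are genuinely $2\times 2$, of the form
\[
\frac{1}{\sigma_s^2-\sigma_t^2}
\begin{pmatrix}
\sigma_s\dot r(\sigma_s)-\sigma_t\dot r(\sigma_t) & -(\sigma_s\dot r(\sigma_t)-\sigma_t\dot r(\sigma_s))\\
-(\sigma_s\dot r(\sigma_t)-\sigma_t\dot r(\sigma_s)) & \sigma_s\dot r(\sigma_s)-\sigma_t\dot r(\sigma_t)
\end{pmatrix},
\]
and the paper singles these out for separate treatment: it computes the inverse explicitly and shows, via a change of variable $u=\sigma^2/(\sigma^2+\epsilon^2)$, that both distinct entries of the inverse are $O(\epsilon)$ and hence vanish. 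Only after this check does the block-inverse lemma (the paper's Lemma~\ref{lemma:woodbury-block}) apply. Your write-up should either carry out the analogous computation for your mollifier, or argue directly that both eigenvalues $a\pm b$ of each such block diverge (which reduces to showing $\frac{\dot r_h(\sigma_s)-\dot r_h(\sigma_t)}{\sigma_s-\sigma_t}\to\infty$ and $\frac{\dot r_h(\sigma_s)+\dot r_h(\sigma_t)}{\sigma_s+\sigma_t}\to\infty$ along the relevant sequence).
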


\subsection{Linear SVM}
The linear SVM optimization can be written as
\begin{equation*}\label{eq:svm-main}
    \argmin_{\betav}\sum_{j = 1}^n (1 - y_j\xv_j^\top\betav)_+ +
    \frac{\lambda}{2}\|\betav\|_2^2,
\end{equation*}
with $y_j\in\{-1, 1\}$ and $(\cdot)_+=\max\{\cdot, 0\}$. Note that this is a special instance of the problem
we studied in Section \ref{ssec:nonsmooth-loss}. Here,
$\ell(u; y_j) = (1- y_j u)_+$ has only one zero order singularity at $y_j$.
Using Theorem \ref{thm:nonsmooth-loss-approx} and simplifying the expressions,
we obtain the following ALO formula for SVM:
\begin{equation*}
    \xv_i^\top\leavei{\surrog{\betav}} = \xv_i^\top\estim{\betav} +  a_i g_{\ell,i},
\end{equation*}
where 
\begin{equation*}\label{eq:svm-H}
    a_i =
    \left\{
        \begin{array}{ll}
            \frac{1}{\lambda}\xv_i^\top (\Iv_p -
            \Xv_{V,\cdot}^\top(\Xv_{V,\cdot}\Xv_{V,\cdot}^\top)^{-1}\Xv_{V,\cdot})\xv_i
            & i \in S, \\
            \big(\lambda[(\Xv_{V,\cdot}\Xv_{V,\cdot}^\top)^{-1}]_{ii}\big)^{-1} & i \in V, \\
        \end{array}
    \right.
\end{equation*}
and for $i \in S$, $g_{\ell,i} = -y_i$ if $y_i\xv_i^\top\estim{\betav} < 1$,
$g_{\ell,i} = 0$ if $y_i\xv_i^\top\estim{\betav} > 1$, and for $i \in V$
\begin{equation*}\label{eq:svm-subg}
    \gv_{\ell,V} =
    (\Xv_{V,\cdot}\Xv_{V,\cdot}^\top)^{-1}\Xv_{V,\cdot}[\lambda\estim{\betav} +
    \sum_{j: y_j\xv_j^\top\estim{\betav} < 1}y_j \xv_j].
\end{equation*}

Recall that $V=\{j: \xv_j^\top\estim{\betav} = y_j\}$ and $S=[1,\ldots,
n]\backslash V$.

\section{Numerical Experiments}
\label{sec:experiments}

We illustrate the performance of ALO through three experiments. The first two
compare the ALO risk estimate with that of LOOCV. The third experiment compares
the computational complexity of ALO with that of LOOCV. We have also evaluated
the performance of ALO on real-world datasets. Due to lack of space, these
results are presented in Appendix \ref{ssec:real-world-data}. For the first
experiment (Figure \ref{fg:experiment1}), we run ALO and LOOCV for the three
models studied in Section \ref{sec:applications} (using fused LASSO
\cite{tibshirani2005fused} as a special case of generalized LASSO) and compare
their risk estimates under the settings $ n> p$ and $n < p$ respectively. The
full details of the experiments are provided in Appendix
\ref{append:sec:experiment}.

For the second experiment (Figure \ref{fg:experiment2}), we consider the risk
estimates for LASSO from ALO and LOOCV under settings with model
mis-specification, heavy-tail noise and correlated design. For all three cases,
ALO approximates LOOCV well.

\begin{figure}[!htbp]
    \begin{center}
        \setlength\tabcolsep{2pt}
        \renewcommand{\arraystretch}{0.3}
        \subfloat[][]{
        \begin{tabular}{r|rrr}
            & \multicolumn{1}{c}{\small svm} & \multicolumn{1}{c}{\small fused
               lasso} & \multicolumn{1}{c}{\small nuclear norm} \\
            \hline
            \rotatebox{90}{\small \hspace{1.6cm} $n > p$} &
            \includegraphics[scale=0.4]{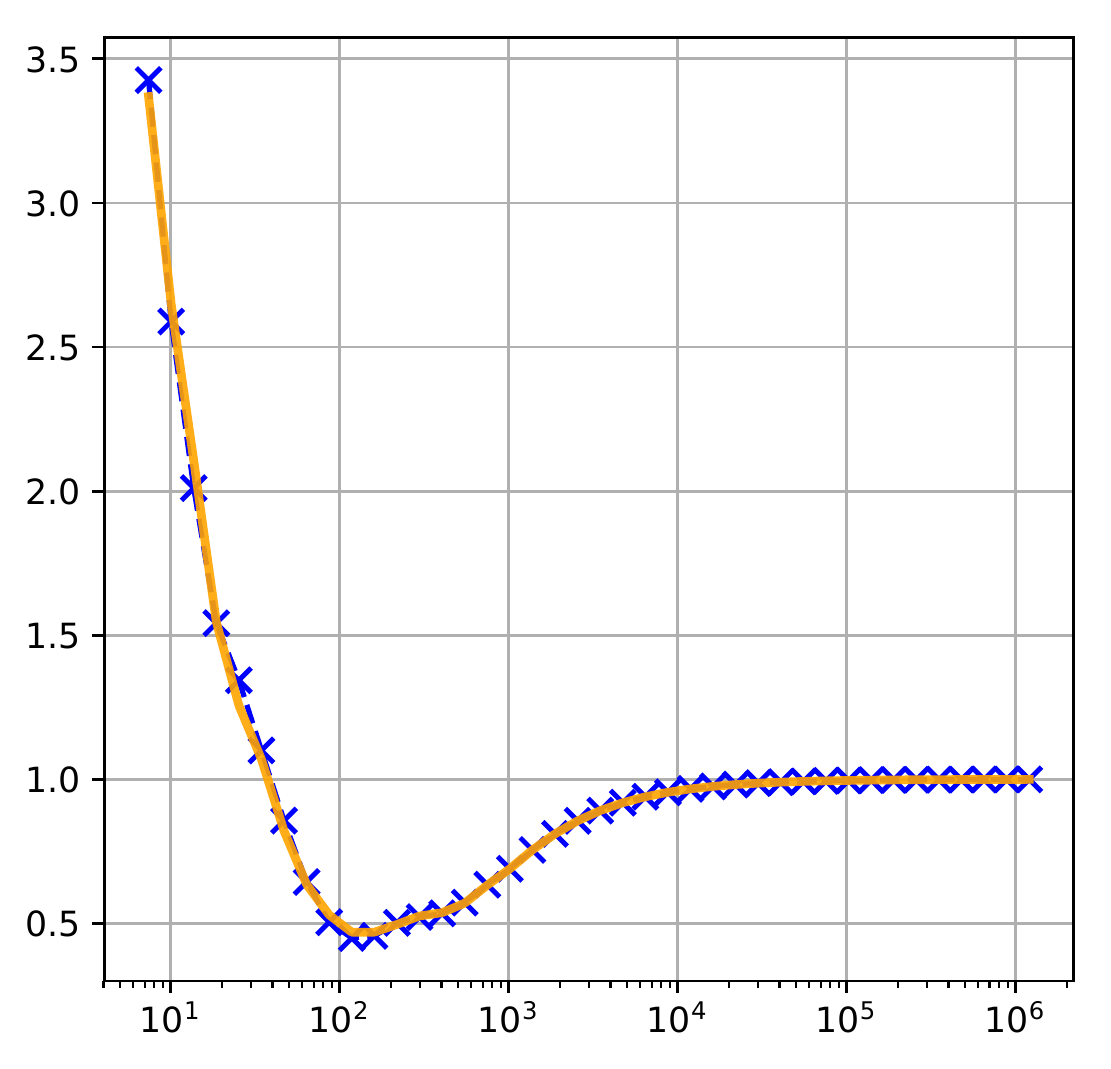} & 
            \includegraphics[scale=0.4]{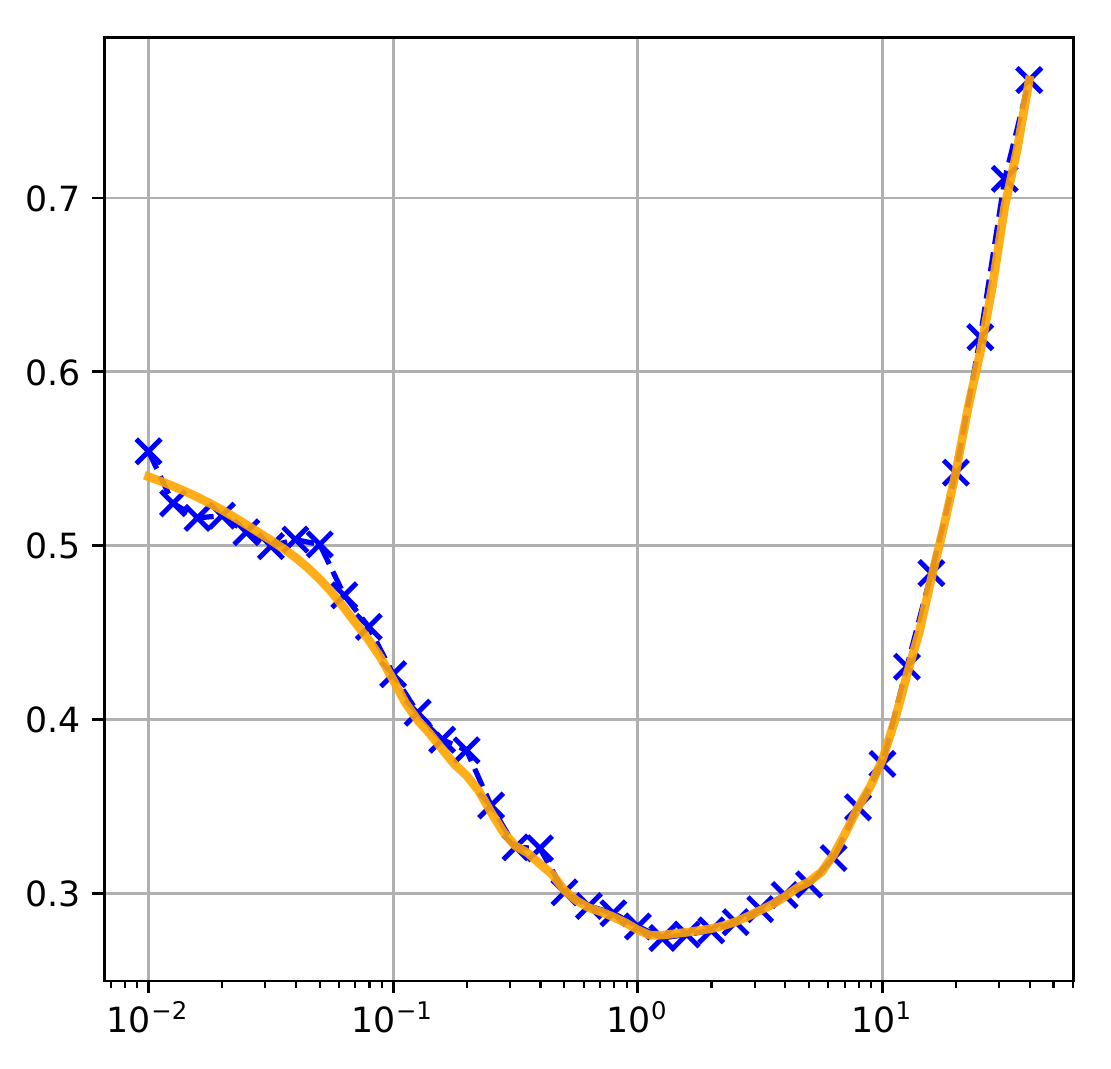} &
            \includegraphics[scale=0.4]{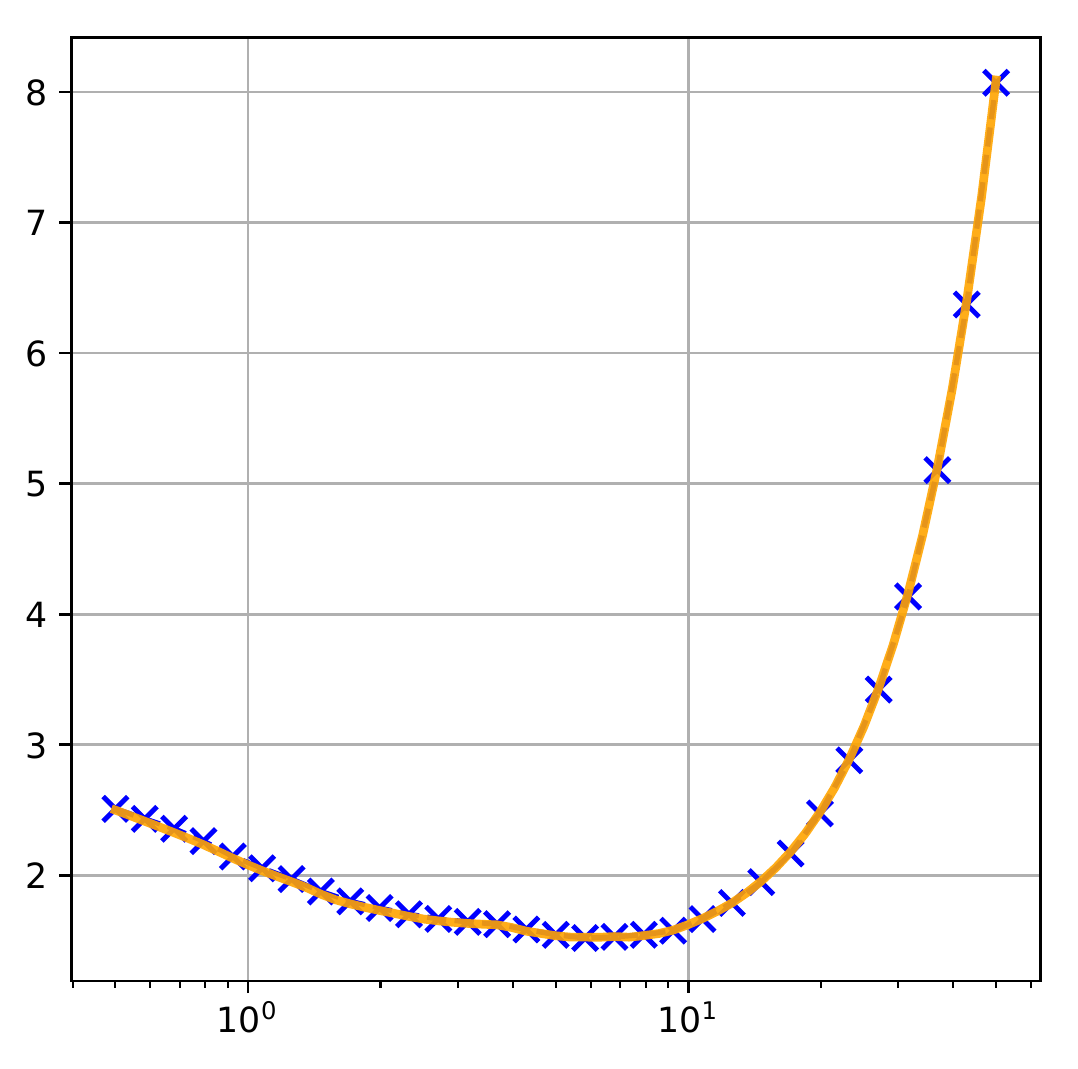} \\
            & \multicolumn{1}{c}{{\fontsize{7}{6} $n=300, p=80$}}
            & \multicolumn{1}{c}{{\fontsize{7}{6} $n=200, p=100$}}
            & \multicolumn{1}{c}{{\fontsize{7}{6} $n=600, p_1=p_2=20$}} \\
            \rotatebox{90}{\small \hspace{1.6cm} $n < p$} &
            \includegraphics[scale=0.4]{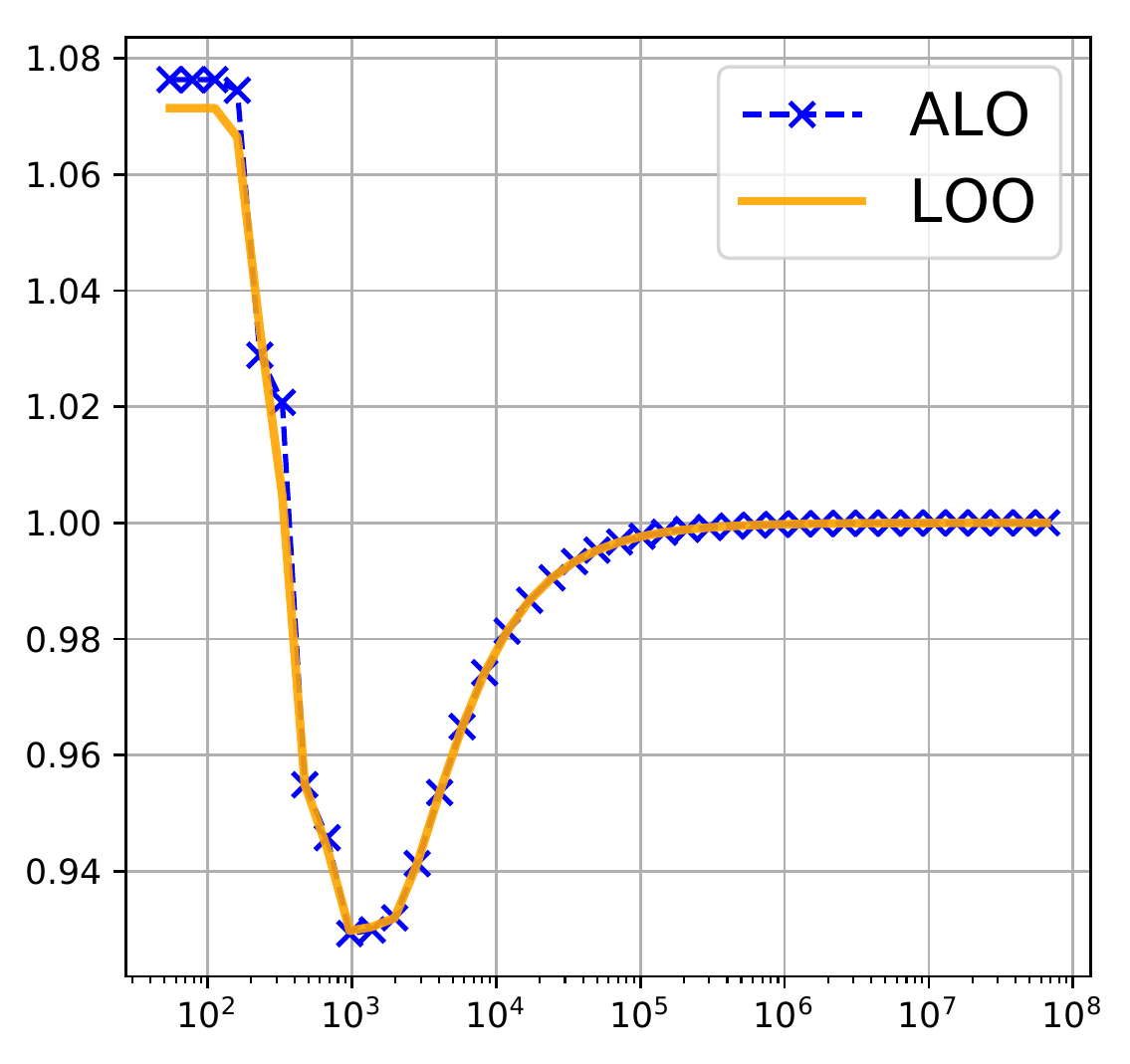} &
            \includegraphics[scale=0.4]{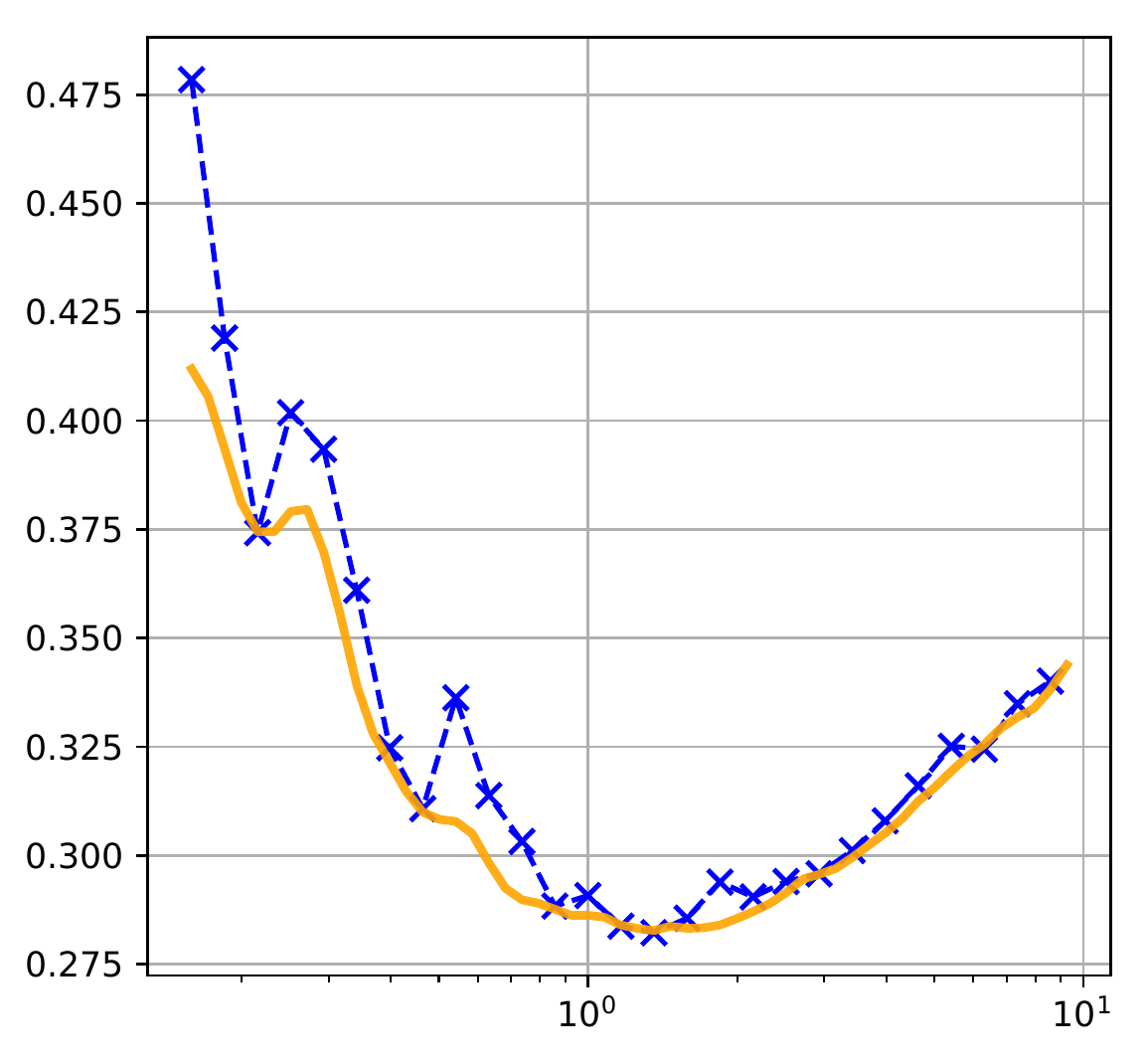} &
            \includegraphics[scale=0.4]{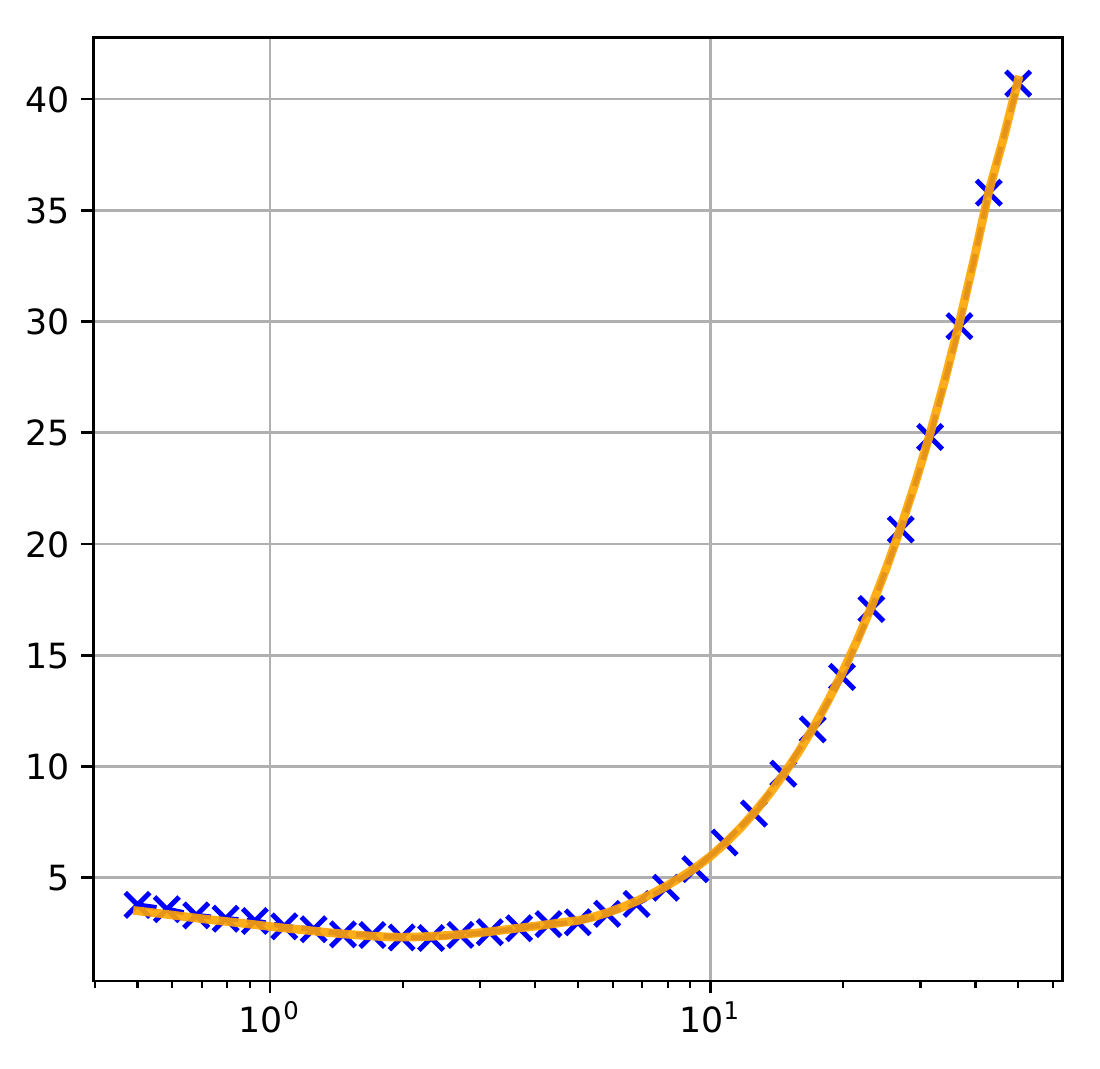} \\
            & \multicolumn{1}{c}{{\fontsize{7}{6} $n=300, p=600$}}
            & \multicolumn{1}{c}{{\fontsize{7}{6} $n=200, p=400$}}
            & \multicolumn{1}{c}{{\fontsize{7}{6} $n=200, p_1=p_2=20$}} \\
        \end{tabular} \label{fg:experiment1}}
        
        \subfloat[][]{
        \begin{tabular}{r|rrr}
            & \multicolumn{1}{c}{\small misspecification}
            & \multicolumn{1}{c}{\small heavy-tailed noise}
            & \multicolumn{1}{c}{\small correlated design} \\
            \hline
            \rotatebox{90}{\small \hspace{1.2cm} lasso risk} &
            \includegraphics[scale=0.4]{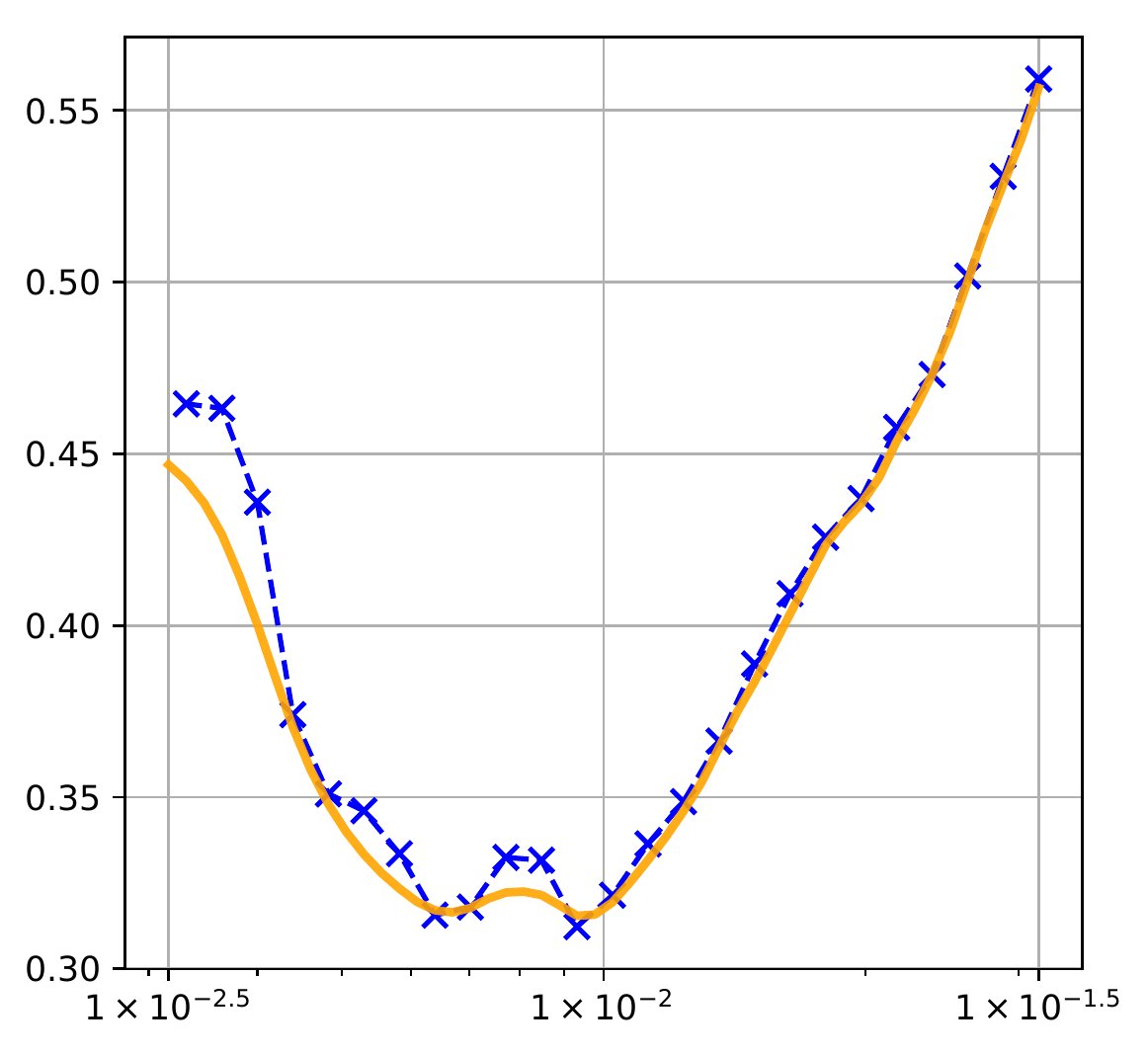} & 
            \includegraphics[scale=0.4]{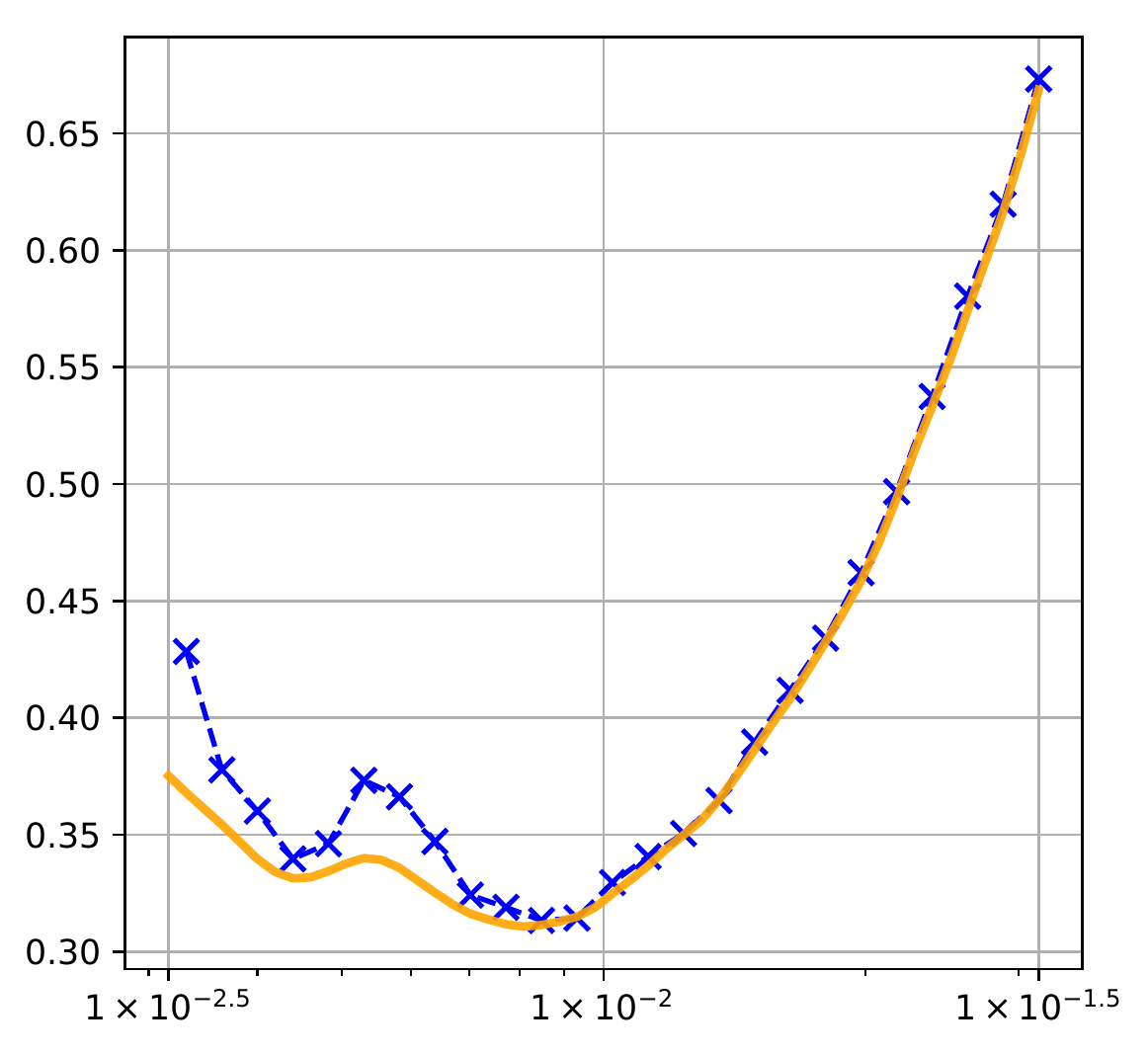} & 
            \includegraphics[scale=0.4]{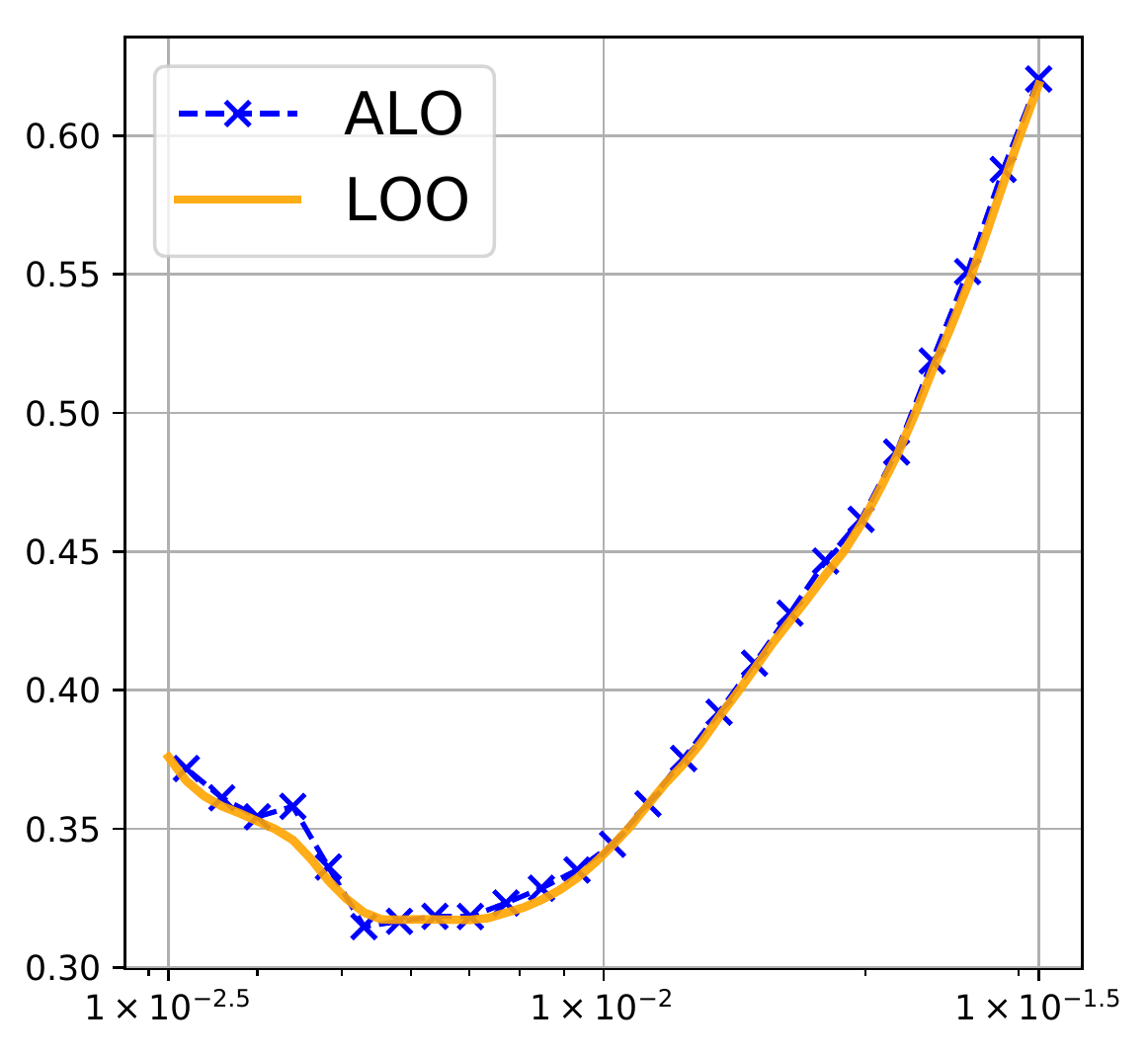} \\
    \end{tabular} \label{fg:experiment2}}
        \caption{Risk estimates from ALO versus LOOCV. The $x$-axis is the
        tuning parameter value on $\log$-scale, the $y$-axis is the risk
        estimate. In part \protect\subref{fg:experiment1}, the comparison is based on SVM,
        fused LASSO and nuclear norm. Different settings for the number of observations
        $n$ and the number of features $p$ are considered. For nuclear norm, $p_1, p_2$
        are dimensions of a matrix. In part \protect\subref{fg:experiment2}, we consider the
        risk estimates of LASSO under model mis-specification, heavy-tailed noise and
        correlated design scenarios. We use $n=300$, $p=600$ and $k=30$ for all three
        where $k$ is the number of nonzeros in the true
        $\betav$.}\label{fig:risk-alo-loo1}
    \end{center}
\end{figure}

\begin{table}[!htbp]
    \begin{center}
        \caption{Timing (in \textit{sec}) of one single fit, ALO and LOOCV. In
        the upper and lower tables, we fix $n=800$ and $p=800$ respectively.}
        \label{tab:timing}
        \begin{tabular}{l|llll}
            $p$ & 200 & 400 & 800 & 1600 \\
            \hline
            single fit & $0.035 \pm 0.001$ & $0.13 \pm 0.003$ & $0.56
            \pm 0.02$ & $0.60 \pm 0.01$ \\
            ALO & $0.060 \pm 0.001$ & $0.21 \pm 0.003$ & $0.77 \pm
            0.02$ & $0.89 \pm 0.01$ \\
            LOOCV & $27.52 \pm 0.03$ & $107.4 \pm
            0.5$ & $437.9 \pm 2.9$ & $479 \pm 2$ \\
            \hline
        \end{tabular}
        \begin{tabular}{l|llll}
            $n$ & 200 & 400 & 800 & 1600 \\
            \hline
            single fit & $0.055 \pm 0.002$ & $0.19 \pm 0.006$ & $0.56
            \pm 0.02$ & $0.76 \pm 0.02$ \\
            ALO & $0.065 \pm 0.001$ & $0.24 \pm 0.001$ & $0.77 \pm
            0.02$ & $1.20 \pm 0.01$ \\
            LOOCV & $11.44 \pm 0.049$ & $74.7 \pm
            0.5$ & $437.9 \pm 2.9$ & $1249 \pm 3$
        \end{tabular}
    \end{center}
\end{table}

In general, we observe that the estimates given by ALO are close
to LOOCV, although the performance may 
deteriorate for very small values of $\lambda$, as is clear in the fused-LASSO
($n<p$) example. These values of $\lambda$ correspond to ``dense'' solutions,
and are far from the optimal choice. Hence, such inaccuracies do not harm the
parameter tuning algorithm.


Our last experiment compares the computational complexity of ALO with that of
LOOCV. In Table \ref{tab:timing}, we provide the timing of LASSO for different
values of $n$ and $p$. The time required by ALO, which involves a single fit
and a matrix inversion (in the
construction of $\Hv$ matrix), is in all experiments no more than twice that of a single fit.
We refer the reader to Appendix \ref{append:sec:experiment} for the details of this experiment.


\section{Discussion}
ALO offers a highly efficient approach for parameter tuning and risk estimation
for a large class of statistical machine learning models. We focus on nonsmooth
models and propose two general frameworks for calculating ALO. One is from the
primal perspective, the other from the dual.

By approximating LOOCV, ALO inherits desirable properties of LOOCV in
high-dimensional settings where $n$ and $p$ are comparable. In particular, ALO
can overcome the bias issues that $k$-fold cross validation displays
in these settings.

\section*{Acknowledgements}
We acknowledge computing resources from Columbia University's Shared Research
Computing Facility project, which is supported by NIH Research Facility
Improvement Grant 1G20RR030893-01, and associated funds from the New York State
Empire State Development, Division of Science Technology and Innovation
(NYSTAR) Contract C090171, both awarded April 15, 2010.

\bibliography{reference}
\bibliographystyle{plain}

\clearpage

\appendix

\section{Proof of Equation \ref{eq:methods:dual}}
\label{sec:dual-derivation}
In this Section, we prove the primal-dual correspondence in
\eqref{eq:methods:primal} and \eqref{eq:methods:dual}.
Recall the form of the primal problem:
\begin{equation} \label{eq:primal-dual-proof:primal}
    \min_{\betav} \sum_{j = 1}^n \ell(\xv_j^\top \betav; y_j) + R(\betav).
\end{equation}

With a change of variable, we may transform \eqref{eq:primal-dual-proof:primal}
into the following form:
\begin{equation*}
    \min_{\betav, \muv} \sum_{j = 1}^n \ell(-\mu_j; y_j) + R(\betav),
    \quad
    \text{subject to: }
    \muv = -\Xv\betav.
\end{equation*}

We may further absorb the constraint into the objective function by adding a
Lagrangian multiplier $\thetav \in \mathbb{R}^n$:
\begin{equation} \label{eq:primal-dual-proof:primal1}
    \max_{\thetav}\min_{\betav, \muv}
    \sum_{j = 1}^n \ell(-\mu_j; y_j) + R(\betav) - \thetav^\top(\Xv\betav + \muv).
\end{equation}

Note that in \eqref{eq:primal-dual-proof:primal1}, $\betav$ and $\muv$ decoupled
from each other and we can optimize over them respectively. Specifically, we
have that
\begin{align}
    &\min_{\betav} R(\betav) - \thetav^\top\Xv\betav
    =
    - \max_{\betav} \big\{ \langle \betav, \Xv^\top\thetav \rangle - R(\betav)
    \big\}
    =
    - R^*(\Xv^\top\thetav), \label{eq:primal-dual-proof:beta} \\
    &\min_{\mu_j} \ell(-\mu_j; y_j) - \theta_j \mu_j
    =
    - \max \{ \mu_j\theta_j - \ell(-\mu_j; y_j) \}
    =
    - \ell^*(-\theta_j; y_j). \label{eq:primal-dual-proof:theta}
\end{align}

We plug \eqref{eq:primal-dual-proof:beta} and
\eqref{eq:primal-dual-proof:theta} in \eqref{eq:primal-dual-proof:primal1} and
obtain that
\begin{equation} \label{eq:primal-dual-proof:dual1}
    \max_{\thetav}
    \sum_{j = 1}^n - \ell^*(-\theta_j; y_j) - R^*(\Xv^\top \thetav).
\end{equation}

\section{Primal Dual Equivalence (Proofs of Theorems
\ref{thm:primal-dual-equivalence} and \ref{thm:primal-dual-equivalence-2})}
\label{append:sec:primal-dual-equiv}

In this section we prove the equivalence between the two stated methods in the case
where the loss and regularizer are twice differentiable.
Let $\ell$, $\ell^*$, $R$ and $R^*$ be twice differentiable. We
construct quadratic surrogates by Taylor extensions. The following lemma
plays a key role in our analysis:\\

\begin{lemma}\label{lemma:hessianprimaldual}
    Let $f$ be a proper closed convex function, such that both $f$ and $f^*$ are twice differentiable.
    Then, we have for any $\xv$ in the domain of $f$ and any $\uv$ in the domain of $f^*$:
    \begin{align*}
        \nabla^2 f^*(\nabla f(\xv)) =& [\nabla^2 f(\xv)]^{-1}, \nonumber \\
        \nabla^2 f(\nabla f^*(\uv)) =& [\nabla^2 f^*(\uv)]^{-1}. 
    \end{align*}
\end{lemma}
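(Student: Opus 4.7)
The plan is to reduce this to the standard fact from convex analysis that, under the stated hypotheses, the gradient maps $\nabla f$ and $\nabla f^*$ are mutual inverses on their respective domains, and then simply differentiate this identity.

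First I would recall why $\nabla f$ and $\nabla f^*$ invert one another. Since $f$ is proper closed and convex, $f^{**} = f$, and the Fenchel--Young inequality $f(x) + f^*(u) \geq \langle x, u \rangle$ holds with equality iff $u \in \partial f(x)$, equivalently iff $x \in \partial f^*(u)$. Under the twice differentiability assumptions, $\partial f(x) = \{\nabla f(x)\}$ and $\partial f^*(u) = \{\nabla f^*(u)\}$ are single-valued on their domains, so the biconditional collapses to the identity $\nabla f^*(\nabla f(x)) = x$ for all $x$ in the domain of $f$, and symmetrically $\nabla f(\nabla f^*(u)) = u$ for all $u$ in the domain of $f^*$.

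Next I would differentiate the identity $\nabla f^*(\nabla f(x)) = x$ with respect to $x$. By the chain rule, this yields
\begin{equation*}
    \nabla^2 f^*(\nabla f(x)) \, \nabla^2 f(x) = I,
\end{equation*}
which simultaneously establishes invertibility of $\nabla^2 f(x)$ and gives $\nabla^2 f^*(\nabla f(x)) = [\nabla^2 f(x)]^{-1}$. Applying the same argument to the symmetric identity $\nabla f(\nabla f^*(u)) = u$ gives the second formula.

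I do not expect a serious obstacle here; the content is entirely contained in the Legendre transform duality. The only point that requires minor care is the justification for treating $\partial f$ and $\partial f^*$ as single-valued and for invoking differentiability of their inverses, but this is immediate from the twice differentiability hypotheses stated in the lemma, which in particular force strict convexity along the relevant directions and thus invertibility of the Hessians (as confirmed a posteriori by the chain rule identity itself).
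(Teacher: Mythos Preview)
Your proof is correct and essentially identical to the paper's: both establish $\nabla f^*(\nabla f(\xv)) = \xv$ via the subgradient duality for proper closed convex functions (the paper cites Rockafellar's Theorem 23.5, you invoke the Fenchel--Young equality condition, which is the same fact), then differentiate and apply the chain rule to obtain the Hessian inverse relation. The second identity follows by symmetry in both treatments.
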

\begin{proof}
    This lemma is a known result in convex optimization. However, since the
    proof is short and for the sake of completeness we include the proof here.
    For $f$ a proper closed convex function, we have by Theorem 23.5 of
    \cite{rockafellar1970convex} that for all $\xv, \xv^*$:
    \begin{equation*}
        \xv^* \in \partial f(\xv) \Rightarrow \xv \in \partial f^*(\xv^*).
    \end{equation*}
    
    In particular, if $f$ and $f^*$ are differentiable, we obtain:
    \begin{equation*}
        \xv = \nabla f^*(\nabla f(\xv)).
    \end{equation*}
    
    Taking derivative in $\xv$ once more, we obtain that:
    \begin{equation*}
        \Iv = [\nabla^2 f^*( \nabla f(\xv))] [\nabla^2 f(\xv)],
    \end{equation*}
    
    which immediately gives:
    \begin{equation*}
        \nabla^2 f^*(\nabla f(\xv)) = [\nabla^2 f(\xv)]^{-1}.
    \end{equation*}

    The proof of the second part is immediate by applying the existing
    result to $f^*$.
\end{proof}

\begin{proof}[Proof of Theorem \ref{thm:primal-dual-equivalence}]
    We have the following expressions for $\surrog{\ell}$ and $\surrog{R}$:
    \begin{align*}
        \surrog{\ell}(z_j; y_j) =& \frac{1}{2} \ddot{\ell}(\xv_j^\top \estim{\betav}; y_j) (z_j - \xv_j^\top \estim{\betav})^2
        + \dot{\ell}(\xv_j^\top \estim{\betav}; y_j) (z_j - \xv_j^\top \estim{\betav}) + c, \\
        \surrog{R}(\betav) =& \frac{1}{2} (\betav - \estim{\betav})^\top [\nabla^2 R(\estim{\betav})] (\betav - \estim{\betav})
        + [\nabla R(\estim{\betav})]^\top (\betav - \estim{\betav}) + d,
    \end{align*}
    where $c, d \in \RR$ are constants that do not affect the location of the
    optimizer. We now compute the convex conjugate of $\surrog{\ell}$ and
    $\surrog{R}$, and we obtain that:
    \begin{align}
        \surrog{\ell}^*(w_j; y_j) =& \frac{1}{2} \frac{1}{\ddot{\ell}(\xv_j^\top
        \estim{\betav}; y_j)} (w_j - \dot{\ell}(\xv_j^\top \estim{\betav}; y_j))^2
        + (\xv_j^\top \estim{\betav}) (w_j - \dot{\ell}(\xv_j^\top
        \estim{\betav}; y_j)) + c', \label{eq:equiv:dual-quad-primal-loss}\\
        \surrog{R}^*(\muv) =& \frac{1}{2} (\muv - \nabla R(\estim{\betav}))^\top
        [\nabla^2 R(\estim{\betav})]^{-1} (\muv - \nabla R(\estim{\betav}))
        + \estim{\betav}^\top (\muv - \nabla R(\estim{\betav})) +
        d',\label{eq:equiv:dual-quad-primal-reg}
    \end{align}
    where again $c', d' \in \RR$ are constants.

    Now, we wish to relate \eqref{eq:equiv:dual-quad-primal-loss} and
    \eqref{eq:equiv:dual-quad-primal-reg} to $\surrog{\ell}_D^*$ and
    $\surrog{R}_D^*$. By substituting the primal-dual correspondence described in
    \eqref{eq:primal-dual-correspondence} of the main text for components of
    \eqref{eq:equiv:dual-quad-primal-loss} and
    \eqref{eq:equiv:dual-quad-primal-reg}, we obtain that:
    \begin{align}
        \surrog{\ell}^*(w_j; y_j)
        &= \frac{1}{2} \frac{1}{\ddot{\ell}(\dot{\ell}^*(-\estim{\duals}_j; y_j); y_j)} (w_j + \estim{\duals}_j)^2
            + \dot{\ell}^*(-\estim{\duals}_j; y_j) (w_j + \estim{\duals}_j) +
            c', \label{eq:equivalence:primal-substituted-l} \\
        \surrog{R}^*(\muv)
        &= \frac{1}{2}(\muv - \Xv^\top \estim{\dualv})^\top[\nabla^2 R(\nabla
    R^*(\Xv^\top \estim{\dualv}))]^{-1} (\muv - \Xv^\top \estim{\dualv})
    \nonumber \\
        &\quad + [\nabla R^* (\Xv^\top \estim{\dualv})]^\top (\muv - \Xv^\top \estim{\dualv}) + d'.
    \label{eq:equivalence:primal-substituted-R}
    \end{align}
    
    To conclude, we note that according to Lemma \ref{lemma:hessianprimaldual} we have
    \begin{equation}\label{eq:equivalence:hessian-correspondence}
    \begin{gathered}
        \ddot{\ell}(\dot{\ell}^*(-\estim{\duals}_j; y_j); y_j) =
        (\ddot{\ell}^*(-\estim{\duals}_j; y_j))^{-1}, \\
        \nabla^2 R(\nabla R^*(\Xv^\top \estim{\dualv})) = [\nabla^2 R^*(\Xv^\top \estim{\dualv})]^{-1}.
    \end{gathered}
    \end{equation}
    
    Substitute \eqref{eq:equivalence:hessian-correspondence} in
    \eqref{eq:equivalence:primal-substituted-l} and
    \eqref{eq:equivalence:primal-substituted-R} we obtain the dual of the
    quadratic surrogate equals
    \begin{align}\label{eq:quadratic-dual}
        \frac{1}{2}\sum_j \surrog{\ell}^*(-\theta_j; y_j) +
        \surrog{R}^*(\Xv^\top\theta)
        &=
        \frac{1}{2} \sum_j \ddot{\ell}^*(-\estim{\duals}_j; y_j) \Big(-\theta_j +
        \estim{\duals}_j + \frac{\dot{\ell}^*(-\estim{\theta}_j;
        y_j)}{\ddot{\ell}^*(-\estim{\theta}_j; y_j)}\Big)^2 \nonumber \\
        &\quad +
        \frac{1}{2}(\Xv^\top\thetav - \Xv^\top \estim{\dualv}) \nabla^2
        R^*(\Xv^\top \estim{\dualv}) (\Xv^\top\thetav - \Xv^\top
        \estim{\dualv}) \nonumber \\
        &\quad + [\nabla R^*(\Xv^\top \estim{\dualv})]^\top
        (\Xv^\top\thetav - \Xv^\top \estim{\dualv}) + c'.
    \end{align}
    
    We recognize that the formula given in \eqref{eq:quadratic-dual} exactly
    corresponds to the second-order Taylor expansion of
    \eqref{eq:dual-general-optimal} in the main paper, which is just the form
    of $\surrog{\ell}_D^*$ and $\surrog{R}_D^*$.
\end{proof}

Additionally, we show that the augmented dual method solves the surrogate quadratic problem.
\begin{proof}[Proof of Theorem \ref{thm:primal-dual-equivalence-2}]
    We noted in Section \ref{ssec:dual:general-case} of the main text that our dual method as
    described explicitly approximates the loss by its quadratic expansion
    at the optimal value. We may thus assume without loss of generality that the loss is given by
    $\ell(\mu; y) = (\mu - y)^2 / 2$.

    In this case, as stated in Section \ref{ssec:dual:general-case}, we have
    that
    \begin{equation*}
        \estim{\dualv} = \proxv_{g}(\yv),
    \end{equation*}
    where we have defined $g(\uv) = R^*(\Xv^\top \uv)$. In addition, we note that the augmented observation
    vector $\yv_{a}$ must have its $i$\tsup{th} observation lie on the leave-$i$-out regression line by definition,
    and in particular we have that:
    \begin{equation*}
        [\proxv_{g}(\yv_{a})]_i = 0.
    \end{equation*}

    This motivated us to solve for $\leavei{\surrog{y}_{i}}$ by linearly
    expanding $\proxv_g$ and considering the intersection of its $i$\tsup{th}
    coordinate with 0. Specifically, the desired $\leavei{\surrog{y}_i}$ is
    obtained from the solution of the following linear equation in $z$:
    \begin{equation}\label{eq:dual-optim-linear}
        [\proxv_{g}(\yv) + \Jv_{\proxv_g}(\yv)\ev_i(z - y_i)]_i = 0.
    \end{equation}
    where $\Jv_{\proxv_g}(\yv)$ denotes the Jacobian matrix of $\proxv_g$ at
    $\yv$.

    We show that if $R^*$ is replaced with its quadratic surrogate $\surrog{R}^*$ as defined in the
    Theorem \ref{thm:primal-dual-equivalence}, then:
    \begin{equation*}
        [\proxv_{\surrog{g}}(\surrog{\yv}_a)]_i = 0,
    \end{equation*}
    where $\surrog{g}(\uv) = \surrog{R}^*(\Xv^\top\uv)$, and $\surrog{\yv}_a$ denotes the vector $\yv$,
    except with its $i$\tsup{th} coordinate replaced by the ALO value $\leavei{\surrog{\yv}}_i$.
    Let us note that as $\surrog{g}$ is quadratic, its proximal map $\proxv_{\surrog{g}}$ is linear,
    and the equation may thus be solved directly by a single Newton's step. As
    a linear map is characterized by its intercept and slope, compared with
    \eqref{eq:dual-optim-linear}, it remains to show that:
    \begin{align}
        \proxv_{g}(\yv) &= \proxv_{\surrog{g}}(\yv),
        \label{eq:equiv:check-prox-intercept} \\
        \Jv_{\proxv_{g}}(\yv) &= \Jv_{\proxv_{\surrog{g}}}(\yv).
        \label{eq:equiv:check-prox-coef}
    \end{align}
    
    We note that \eqref{eq:equiv:check-prox-intercept} is immediate from the definition of $\tilde{g}$, as
    both the left and right hand sides are equal to the dual optimal
    $\hat{\dualv}$. In order to show \eqref{eq:equiv:check-prox-coef}, since $\surrog{g}$ is quadratic, we may compute its proximal map
    exactly. From the previous section, we have that:
    \begin{equation*}
        \surrog{g}(\dualv)
        =
        \frac{1}{2}(\thetav - \estim{\dualv})^\top\Xv[\nabla^2 R(\nabla
        R^*(\Xv^\top \estim{\dualv}))]^{-1}
        \Xv^\top(\thetav - \estim{\dualv}) + [\nabla R^*
        (\Xv^\top\estim{\dualv})]^\top\Xv^\top(\thetav - \estim{\dualv}),
    \end{equation*}
    
    We minimize $\frac{1}{2}\|\yv - \thetav\|_2^2 + \surrog{g}(\thetav)$ in
    $\thetav$ and get
    \begin{equation*}
        \proxv_{\surrog{g}}(\yv)
        = (\Iv + \Xv [\nabla^2 R(\nabla R^*(\Xv^\top\estim{\thetav}))]^{-1} \Xv^\top)^{-1}(\yv -
        \Xv\nabla R^*(\Xv^\top\estim{\thetav})),
    \end{equation*}
    
    Notice the primal dual correspondence implies $\estim{\betav} = \nabla
    R^*(\Xv^\top\estim{\thetav})$. In particular we may compute the Jacobian of
    $\proxv_{\surrog{g}}$ at $\yv$ as $(\Iv + \Xv [\nabla^2
    R(\estim{\betav})]^{-1} \Xv^\top)^{-1}$.

    On the other hand, we know that the proximal operator $\proxv_g$ is exactly the resolvent of
    the subgradient $\partial g$:
    \begin{equation*}
        \proxv_g = (I + \partial g)^{-1},
    \end{equation*}
    and in particular we have that:
    \begin{equation*}
        \proxv_g(\yv) + \nabla g(\proxv_g(\yv)) = \yv.
    \end{equation*}

    Taking derivative again with respect to $\yv$ and applying the chain rule, we obtain that:
    \begin{equation*}
        \Jv_{\proxv_g}(\yv)(\Iv + \nabla^2 g(\proxv_g(\yv))) = \Iv,
    \end{equation*}
    and hence that:
    \begin{equation*}
        \Jv_{\proxv_g}(\yv) = (\Iv + \nabla^2 g(\proxv_g(\yv))^{-1}.
    \end{equation*}
    
    Now, note that we have $\proxv_g(\yv) = \estim{\dualv}$, and that:
    \begin{equation*}
        \nabla^2 g(\estim{\dualv}) = \Xv [\nabla^2 R^*(\Xv^\top\estim{\dualv})] \Xv^\top.
    \end{equation*}
    
    We are thus done by Lemma \ref{lemma:hessianprimaldual}.
\end{proof}

\section{Proof of Primal Approximation Approach}\label{sec:primal-approx}
In this section we prove the results of our primal approach on nonsmooth models presented
in Section \ref{sec:primal-smoothing} of the main paper rigorously. Since we use a kernel smoothing
strategy, we start with some useful preliminary results on kernel smoothing. We then
discuss nonsmooth loss and nonsmooth regularizer respectively.

\subsection{Properties of Kernel Smoothing}
\label{append:ssec:property-kernel-smoothing}

In the paper, we consider the following smoothing strategy for a convex
function $f: \mathbb{R} \rightarrow \RR$:
\begin{equation}\label{eq:smoothingffunc}
    f_h(z) = \frac{1}{h}\int f(u)\phi((z - u)/h) du.
\end{equation}

We make the following assumption about the kernel $\phi$:
\begin{description}
    \item Compact support:
        $\phi$ has  a compact support, i.e., $\mathrm{supp}(\phi) = [-C, C]$ for some $C>0$;
    \item Normalization:
        $\phi$ kernel: $\int \phi(w)dw = 1$, $\phi(0) > 0$; $\phi (x) \geq 0$
        for every $x$;
    \item Symmetry: $\phi$ is smooth and symmetric around 0 on $\mathbb{R}$.
\end{description}

Let $K := \{ v_1, \ldots, v_k \}$ denote the set of zero-order singularities of the
function $f$. Denote by $\dot{f}_-$ and $\dot{f}_+$ the left and right
derivative of $f$. Our next lemma summarizes some of the basic properties of
$f$ that may be used in the proofs of Theorem \ref{thm:nonsmooth-loss-approx}
and \ref{thm:nonsmooth-reg-approx} of the main text.

\begin{lemma}\label{lemma:kernel-smooth-property}
The smooth function $f_h$ verifies the following properties:
\begin{enumerate}
    \item
        $f_h(z) \geq f(z)$ for all $z \in \mathbb{R}$;
    \item For all $z \in K^C$, for all $h$ small enough:
        \begin{equation*}
            \dot{f}_h(z) = \frac{1}{h}\int \dot{f}(u)\phi((z - u)/h) du
            ,\quad
            \ddot{f}_h(z) = \frac{1}{h}\int \ddot{f}(u)\phi((z - u)/h) du.
        \end{equation*}
    \item For all $z \in K$:
        \begin{equation*}
            \lim_{h \rightarrow 0} \dot{f}_h(z) = \frac{\dot{f}_{-}(z) + \dot{f}_{+}(z)}{2}
            ,\quad
            \lim_{h \rightarrow 0} \ddot{f}_h(z) = +\infty.
        \end{equation*}
    \item
        If $f$ is locally Lipschiz in the sense that, for any $A > 0$, and for any
        $x, y \in [-A, A]$, we have $|f(x) - f(y)| \leq L_A|x - y|$, where
        $L_A$ is a constant that only depends on $A$; then $f_h(z)$ converges
        to $f(z)$ uniformly on any compact set.
\end{enumerate}
\end{lemma}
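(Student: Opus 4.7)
\textbf{Plan for proving Lemma \ref{lemma:kernel-smooth-property}.} The overall strategy is to rewrite the smoothing as an expectation against the probability density $\phi$ via the change of variable $u = z - hw$, giving $f_h(z) = \int f(z-hw)\phi(w)\,dw$, and then exploit (a) convexity of $f$, (b) compact support and symmetry of $\phi$, and (c) piecewise smoothness of $f$ away from $K$. I would handle the four parts as a sequence of increasingly delicate applications of this representation.

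For part 1, the symmetry of $\phi$ gives $\int w\,\phi(w)\,dw = 0$, so Jensen's inequality applied to the probability density $\phi(w)\,dw$ yields $f_h(z) = \int f(z - hw)\phi(w)\,dw \geq f\!\bigl(\int (z - hw)\phi(w)\,dw\bigr) = f(z)$. For part 4, local Lipschitz continuity of $f$ on $[-A - hC,\,A + hC]$ combined with compact support of $\phi$ gives the uniform bound $|f_h(z) - f(z)| \leq L_{A+hC}\,h\!\int |w|\phi(w)\,dw \leq L_{A+hC}\,h C$, which tends to $0$ uniformly in $z \in [-A,A]$.

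For part 2, if $z \in K^C$ then there is $\delta > 0$ such that $f$ is twice continuously differentiable on $[z-\delta, z+\delta]$. For $h < \delta/C$ the compact support of $\phi$ ensures $z - hw$ stays in this interval for all $w \in \mathrm{supp}(\phi)$, so $f(z - hw)$ is $C^2$ in $z$ with derivatives bounded uniformly in $w$. Dominated convergence then justifies differentiating under the integral sign to obtain $\dot{f}_h(z) = \int \dot{f}(z - hw)\phi(w)\,dw$ and similarly for $\ddot{f}_h$; reverting the change of variable gives the stated formulas. For part 3 with $z \in K$, we take $h$ small enough that $[z - hC, z + hC]$ contains no other singularity. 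Splitting $\int \dot{f}(z - hw)\phi(w)\,dw$ over $w > 0$ and $w < 0$ and using $\dot{f}(z - hw) \to \dot{f}_-(z)$ for $w > 0$ and $\to \dot{f}_+(z)$ for $w < 0$, together with $\int_0^C \phi = \int_{-C}^0 \phi = 1/2$ by symmetry, yields the claimed limit $(\dot{f}_-(z) + \dot{f}_+(z))/2$.

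The main obstacle is the blow-up claim $\ddot{f}_h(z) \to +\infty$ in part 3. Here I cannot simply swap differentiation and integration in the original variables because $\dot{f}$ has a jump at $z$. Instead I differentiate in $z$ the expression $\dot{f}_h(z) = \int \dot{f}(z - hw)\phi(w)\,dw$ after a change of variable back to $u = z - hw$, writing $\dot{f}_h(z) = \tfrac{1}{h}\int \dot{f}(u)\phi((z-u)/h)\,du$. Decomposing $\dot{f} = \dot{f}_{\mathrm{smooth}} + (\dot{f}_+(z) - \dot{f}_-(z))\,\mathbf{1}_{\{u > z\}}$ in a neighborhood of the singularity, the smooth piece contributes a bounded term (by the argument of part 2 applied to $\dot{f}_{\mathrm{smooth}}$), while differentiating the indicator contribution in $z$ produces $\tfrac{1}{h}(\dot{f}_+(z) - \dot{f}_-(z))\phi(0)$ by evaluating the convolution kernel at $u = z$. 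Since $f$ has a zero-order singularity at $z$ and is convex, $\dot{f}_+(z) - \dot{f}_-(z) > 0$, and combined with $\phi(0) > 0$ this term diverges to $+\infty$ as $h \to 0$, finishing the lemma.
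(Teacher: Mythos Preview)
Your proposal is correct and follows essentially the same route as the paper: Jensen via the change of variable $u=z-hw$ for part~1, the local Lipschitz bound for part~4, differentiation under the integral on the smooth region for part~2, and splitting at the singularity with the symmetry of $\phi$ for part~3. For the blow-up of $\ddot{f}_h(z)$, your decomposition $\dot{f}=\dot{f}_{\mathrm{smooth}}+(\dot{f}_+(z)-\dot{f}_-(z))\mathbf{1}_{\{u>z\}}$ is just a repackaging of the paper's direct integration-by-parts across the jump of $\dot{f}$, and both extract the identical leading term $\tfrac{1}{h}\phi(0)\bigl(\dot{f}_+(z)-\dot{f}_-(z)\bigr)$.
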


\begin{proof}
    For part 1, by the normalization property of $\phi$, we can treat $\phi$ as a probability
    density. Consider the random variable $U \sim \frac{1}{h}\phi(\frac{z -
    u}{h})$. From the convexity of $f$ and Jensen's inequality we have
    \begin{equation*}
        f_h(z) = \mathbb{E}f(U) \geq f(\mathbb{E}U) = f(z).
    \end{equation*}
   
    For part 2, note that 
    \begin{equation*}
        \dot{f}_h(z)
        =
        \frac{1}{h^2}\int f(u)\dot{\phi}((z - u) / h) du
        =
        \int \dot{f}(u)\frac{1}{h}\phi((z - u) / h) du.
    \end{equation*}
    
    A similar computation gives the stated equation for $\ddot{f}_h(z)$.
    
    For part 3, when $z \in K$, we have by compact support of $\phi$ that
    as $h \rightarrow 0$:
    \begin{align*}
        \dot{f}_h(z)
        &=
        \frac{1}{h^2}\int_{z-hC}^{z} f(u)\dot{\phi}((z - u) / h) du +
        \frac{1}{h^2}\int_{z}^{z+hC} f(u)\dot{\phi}((z - u) / h) du \nonumber \\
        &=
        \int_{-C}^{0} \dot{f}(z - hw)\phi(w) dw + \int_{0}^{C} \dot{f}(z - hw)\phi(w) dw \nonumber \\
        & \rightarrow
        \int_{-C}^{0} \dot{f}_+(z)\phi(w) dw
        + \int_{0}^{C} \dot{f}_{-}(z)\phi(w) dw \nonumber \\
        &= \frac{\dot{f}_+(z) + \dot{f}_-(z)}{2}.
    \end{align*}
    
    A similar computation for the second-order derivative yields:
    \begin{align*}
        \ddot{f}_h(z)
        &=
        \frac{1}{h^3}\int_{z-hC}^{z} f(u)\ddot{\phi}((z - u) / h) du +
        \frac{1}{h^3}\int_{z}^{z+hC} f(u)\ddot{\phi}((z - u) / h) du \nonumber
        \\
        &=
        \frac{1}{h}\phi(0)(\dot{f}_+(z) - \dot{f}_-(z)) + \int_0^C \ddot{f}(z -
        hw)\phi(w)dw + \int_{-C}^0 \ddot{f}(z - hw)\phi(w)dw \nonumber \\
        &\rightarrow
        \infty.
    \end{align*}
    noticing that $\dot{f}_+(z) > \dot{f}_-(z)$.

    For part 4, for any compact set $\mathcal{C}$ which can be covered by a large enough set
    $[-A, A]$ for some $A > 0$, we have
    \begin{equation*}
        \sup_{z \in \mathcal{C}}|f_h(z) - f(z)|
        \leq
        \sup_{z \in \mathcal{C}}\int_{-C}^C|f(z - hw) - f(z)|\phi(w)dw
        \leq 2hCL_{A+C}
        \rightarrow 0
        ,\quad
        \text{ as }h \rightarrow 0
    \end{equation*}
\end{proof}

Having established the basic properties of our kernel smoothing strategy,
we apply them to non-smooth loss and non-smooth regularizer respectively.

\subsection{Proof of Theorem \ref{thm:nonsmooth-reg-approx}:
Nonsmooth Separable Regularizer With Smooth Loss}
\label{append:ssec:primal-nonsmooth-reg}

Consider the penalized regression problem:
\begin{equation}\label{eq:nonsmooth-reg-main}
    \estim{\betav} = \argmin_{\betav} \sum_{j = 1}^n \ell(\xv_j^\top\betav; y_j)
    + \lambda \sum_l r(\beta_l).
\end{equation}
with $\ell$ and $r$ being twice differentiable and nonsmooth functions
respectively. Let $r_h$ be the smoothed version of $r$ constructed as
in \eqref{eq:smoothingffunc}. Define
\begin{equation*}
    \estim{\betav}_h=\arg\min_{\betav}\sum_j \ell(\xv_j^\top\betav; y_j) + \lambda \sum_l r_h(\beta_l).
\end{equation*}

As before, let $K$ denote the set of all zero-order singularities of $r$.
We make the following assumptions on the regularizer.

\begin{assumption}\label{assum:nonsmooth-reg-assump1}
    We will need the following assumptions on the problem.
    \begin{enumerate}
        \item
            $r$ is locally Lipschiz in the sense that, for any $A > 0$, and for any
            $x, y \in [-A, A]$, we have $|r(x) - r(y)| \leq L_A|x - y|$, where
            $L_A$ is a constant that only depends on $A$;
        \item
            $\estim{\betav}$ is the unique minimizer of
            \eqref{eq:nonsmooth-reg-main};
        \item
            When $\estim{\beta}_l = v \in K$, the subgradient $g_r(\estim{\beta}_l)$
            of $r$ at $\estim{\beta}_l$ satisfies
            $g_r(\estim{\beta}_l) \in (\dot{r}_-(v), \dot{r}_+(v))$.
        \item
            $r$ is coercive in the sense that $|r(z)| \rightarrow \infty$ as
            $|z| \rightarrow \infty$. \\
    \end{enumerate}
\end{assumption}

\begin{lemma}\label{lemma:nonsmooth-reg-boundedness}
    Suppose that Assumption \ref{assum:nonsmooth-reg-assump1} holds.
    There exists $M > 0$ that only depends on $r, \ell$ and $\lambda$, such that
    we have for any $h \leq 1$:
    \begin{equation*}
        \norm{\estim{\betav}}_\infty, \norm{\estim{\betav}_h}_\infty < M.
    \end{equation*}
\end{lemma}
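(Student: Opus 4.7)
The plan is to use feasibility of the candidate point $\betav = \mathbf{0}$ as a competitor, together with the coercivity of $r$ (for the original problem) and of $r_h$ (for the smoothed problem), to obtain a uniform $\ell_\infty$ bound on the minimizer. The only work beyond routine bookkeeping is verifying that the convolution $r_h$ preserves the relevant properties uniformly in $h$ for $h \leq 1$, and this is exactly where the compact support of $\phi$ is needed.

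For $\|\estim{\betav}\|_\infty$, I would first note that the convex losses under consideration are bounded below (in the concrete examples of the paper, nonnegative), and that by convexity together with the coercivity in Assumption \ref{assum:nonsmooth-reg-assump1} the regularizer $r$ itself is bounded below, say $r \geq -B$ for some $B \geq 0$. Plugging $\betav = \mathbf{0}$ into the primal objective and using optimality of $\estim{\betav}$ yields
\begin{equation*}
\lambda \sum_{l=1}^p r(\estim{\beta}_l) \;\leq\; \sum_{j=1}^n \ell(0; y_j) \;+\; \lambda p\, r(0) \;=:\; C_0,
\end{equation*}
so coordinate-wise $\lambda\, r(\estim{\beta}_l) \leq C_0 + (p-1)\lambda B$. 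The coercivity $|r(z)| \to \infty$ as $|z| \to \infty$ then produces a finite $M_0$ with $|\estim{\beta}_l| \leq M_0$ for every $l$.

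To extend the bound to $\estim{\betav}_h$, I would replicate the argument after establishing uniform-in-$h$ control of $r_h(0)$. Since $\phi$ is a probability density supported in $[-C, C]$, and a convex function attains its maximum over a compact interval at one of the endpoints,
\begin{equation*}
r_h(0) \;=\; \int r(-hw)\phi(w)\,dw \;\leq\; \max_{|u|\leq Ch} r(u) \;\leq\; \max\{r(-C),\,r(C)\}
\end{equation*}
uniformly for all $h \leq 1$. Part 1 of Lemma \ref{lemma:kernel-smooth-property} gives $r_h \geq r \geq -B$, so the same feasibility-at-zero computation yields $\lambda\, r_h(\estim{\beta}_{h,l}) \leq C_0' + (p-1)\lambda B$ for a constant $C_0'$ independent of $h$. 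Using $r_h \geq r$ once more reduces the problem back to the coercivity of $r$ alone, producing a universal $M_1$ with $|\estim{\beta}_{h,l}| \leq M_1$ for all $h \leq 1$. Taking $M > \max(M_0, M_1)$ concludes the proof.

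The principal obstacle, which is more cosmetic than substantive, is the uniform bound on $r_h(0)$; without compact support for $\phi$ the convolution could sample $r$ arbitrarily far from the origin and pick up arbitrarily large values, breaking uniformity in $h$. A secondary benign concern is ensuring $\ell$ is bounded below on the relevant range, which is automatic here since $\ell(\cdot; y_j)$ is convex and finite (and nonnegative in every concrete example of the paper).
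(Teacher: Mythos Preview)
Your proposal is correct and follows essentially the same route as the paper: compare the minimizer against $\betav=\mathbf{0}$, use the compact support of $\phi$ to bound $r_h(0)$ uniformly for $h\le 1$, use $r_h\ge r$ (Lemma~\ref{lemma:kernel-smooth-property}, part~1) to pass from $r_h$ back to $r$, and conclude from coercivity. Your write-up is in fact more careful than the paper's in two places---you make explicit the lower bound $r\ge -B$ needed to go from a bound on $\sum_l r(\estim{\beta}_l)$ to a bound on each coordinate, and you note the implicit assumption that the loss is bounded below---both of which the paper glosses over.
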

\begin{proof}
    Let $h \leq 1$, then the minimizer of the smoothed version $\estim{\betav}_h$ satifies
    \begin{align*}
        \lambda \sum_{l=1}^p r([\estim{\betav}_h]_l)
        &\leq \lambda \sum_{l=1}^p r_h([\estim{\betav}_h]_l) \nonumber \\
        &\leq \sum_i \ell(y_i; 0) + \lambda p r_h(0) \nonumber \\
        &= \sum_i \ell(y_i; 0) + \lambda p \int_{-C}^C r(hw) \phi(w) dw \nonumber \\
        & \leq \sum_i \ell(y_i; 0) + \lambda p \sup_{|w|\leq C} r(w).
    \end{align*}

    On the other hand, the minimizer $\estim{\betav}$ of the original problem satisfies
    \begin{equation*}
        \lambda \sum_{l=1}^p r([\estim{\betav}]_l) \leq \sum_i \ell(y_i; 0) + \lambda p r(0)
        \leq \sum_i \ell(y_i; 0) + \lambda p \sup_{|w|\leq C} r(w).
    \end{equation*}
    
    The convexity and coerciveness of $r$ implies that there exists an $M$, such that
    for all $h \leq 1$:
    \begin{equation*}
        \norm{\estim{\betav}_h}_\infty \leq M \text{ and } \norm{\estim{\betav}}_\infty \leq M.
    \end{equation*}
\end{proof}

\begin{lemma}\label{lemma:nonsmooth-reg-estimator-converge}
    Suppose that Assumption \ref{assum:nonsmooth-reg-assump1} holds. Then the
    smoothed version converges to the original problem in the sense that:
    \begin{equation*}
        \norm{\estim{\betav}_h - \estim{\betav}}_2 \rightarrow 0 \text{ as } h \rightarrow 0.
    \end{equation*}
\end{lemma}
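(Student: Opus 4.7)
The plan is to prove convergence via a standard $\Gamma$-convergence style argument, leveraging the boundedness result in Lemma \ref{lemma:nonsmooth-reg-boundedness} together with uniform convergence of $r_h$ to $r$ (part 4 of Lemma \ref{lemma:kernel-smooth-property}).

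First I would define the objectives $F(\betav) = \sum_j \ell(\xv_j^\top\betav;y_j) + \lambda\sum_l r(\beta_l)$ and $F_h(\betav) = \sum_j \ell(\xv_j^\top\betav;y_j) + \lambda\sum_l r_h(\beta_l)$. By Lemma \ref{lemma:nonsmooth-reg-boundedness}, there exists $M > 0$ such that both $\estim{\betav}$ and $\estim{\betav}_h$ (for all $h \leq 1$) lie in the compact box $B = [-M,M]^p$. Since $r$ is convex (and hence continuous on the interior of its effective domain) and satisfies the local Lipschitz condition in Assumption \ref{assum:nonsmooth-reg-assump1}, part 4 of Lemma \ref{lemma:kernel-smooth-property} gives $r_h \to r$ uniformly on $[-M,M]$. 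Summing over coordinates and noting that the loss term does not depend on $h$, this lifts to $F_h \to F$ uniformly on $B$.

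Next I would pass to the limit using the usual two-sided squeeze. Take any subsequence $h_k \to 0$. By compactness of $B$, there is a further subsequence along which $\estim{\betav}_{h_k} \to \tilde{\betav}$ for some $\tilde{\betav} \in B$. Using the optimality inequality $F_{h_k}(\estim{\betav}_{h_k}) \leq F_{h_k}(\estim{\betav})$ and the decomposition
\begin{equation*}
    |F_{h_k}(\estim{\betav}_{h_k}) - F(\tilde{\betav})|
    \leq \sup_{\betav \in B}|F_{h_k}(\betav) - F(\betav)|
    + |F(\estim{\betav}_{h_k}) - F(\tilde{\betav})|,
\end{equation*}
the first term vanishes by uniform convergence and the second by continuity of $F$. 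Similarly $F_{h_k}(\estim{\betav}) \to F(\estim{\betav})$. Passing to the limit yields $F(\tilde{\betav}) \leq F(\estim{\betav})$, so $\tilde{\betav}$ is a minimizer of $F$, and by the uniqueness assumption $\tilde{\betav} = \estim{\betav}$.

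Since every subsequence of $\{\estim{\betav}_h\}$ admits a further subsequence converging to the same limit $\estim{\betav}$, the full sequence converges, giving $\norm{\estim{\betav}_h - \estim{\betav}}_2 \to 0$. The main subtlety is ensuring continuity of $F$ at $\estim{\betav}$ (which is automatic once we restrict $r$ to the interior of its domain, guaranteed by the uniform bound $M$) and the uniform convergence step, both of which rely crucially on the local Lipschitz assumption on $r$; without it, the smoothing could distort $r_h$ near singularities in a way that only gives pointwise rather than uniform control.
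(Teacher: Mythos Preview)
Your proposal is correct and follows essentially the same approach as the paper: uniform convergence of the smoothed objective to the original one on the compact set furnished by Lemma~\ref{lemma:nonsmooth-reg-boundedness}, followed by a subsequence--compactness argument and an appeal to uniqueness of the minimizer. The only cosmetic difference is that the paper re-derives the uniform bound $0 \leq r_h(z) - r(z) \leq 2CL_{M+C}h$ inline rather than invoking part~4 of Lemma~\ref{lemma:kernel-smooth-property}, and then writes the chain $P(\estim{\betav}_0) = \lim P(\estim{\betav}_h) = \lim P_h(\estim{\betav}_h) \leq \lim P_h(\estim{\betav}) = P(\estim{\betav})$ directly instead of your two-term decomposition.
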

\begin{proof}
    By the local Lipschitz condition of $r$, we have for any $z \leq M$ and $h \leq 1$:
    \begin{equation}\label{eq:nonsmooth-reg-estimator-proof-lipschitz}
        0 \leq r_h(z) - r(z)
        =
        \int_{-C}^C [r(z - hw) - r(z)]\phi(w)dw \leq 2CL_{M + C}h.
    \end{equation}
    
    Let $P_h(\betav) := \sum_j \ell(\xv_j^\top\betav; y_j) + \lambda \sum_l
    r_h(\beta_l)$ denote the primal objective value. \eqref{eq:nonsmooth-reg-estimator-proof-lipschitz}
    implies that:
    \begin{equation*}
        \sup_{\|\betav\|_\infty \leq M} \abs{P(\betav) - P_h(\betav)} \leq 2hpCL_{M+C}
    \end{equation*}
    
    By Lemma \ref{lemma:nonsmooth-reg-boundedness} $\estim{\betav}_h$ is in a
    compact set. Hence, any of its subsequence
    contains a convergent sub-subsequence. Let us abuse the notation and denote
    by $\estim{\betav}_h$ any of such convergent sub-subsquence, that is, assume that
    $\estim{\betav}_h \rightarrow \estim{\betav}_0$. Along such a sub-subsequence, we
    have that:
    \begin{equation*}
        P(\estim{\betav}_0)
        =
        \lim_{h\rightarrow 0} P(\estim{\betav}_h)
        =
        \lim_{h\rightarrow 0} P_h(\estim{\betav}_h)
        \leq
        \lim_{h\rightarrow 0} P_h(\estim{\betav})
        =
        \lim_{h\rightarrow 0} P(\estim{\betav}).
    \end{equation*}

    The uniqueness of the minimizer implies $\estim{\betav}_0 = \estim{\betav}$.
    As the above holds along any convergent sub-subsequence, we have that:
    \begin{equation*}
        \norm{\estim{\betav}_h - \estim{\betav}}_2 \rightarrow 0 \text{ as } h \rightarrow 0.
    \end{equation*}
\end{proof}

\begin{lemma}[Convergence of the subgradients]\label{lemma:nonsmooth-reg-gradient-converge}
    Suppose that Assumption \ref{assum:nonsmooth-reg-assump1} holds.
    Recall that we use $R(\betav) = \sum_{l=1}^p r(\beta_l)$. We have that:
    \begin{equation*}
        \|\nabla R_h(\estim{\betav}_h) - \gv_R(\estim{\betav})\|_2
        \rightarrow 0
        ,\quad
        \text{ as } h  \rightarrow 0.
    \end{equation*}
    where $g_R(\estim{\betav})$ is the subgradient of $R$ at $\estim{\betav}$.
\end{lemma}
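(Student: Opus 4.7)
The plan is to identify both $\nabla R_h(\estim{\betav}_h)$ and the intended subgradient $\gv_R(\estim{\betav})$ through the first-order stationarity conditions of the two problems, and then to combine the convergence $\estim{\betav}_h \to \estim{\betav}$ from Lemma \ref{lemma:nonsmooth-reg-estimator-converge} with the smoothness of $\ell$.

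First, I would write down optimality for the smoothed problem: since both $\ell$ and $R_h$ are differentiable, $\estim{\betav}_h$ satisfies
\begin{equation*}
    \lambda \nabla R_h(\estim{\betav}_h) = -\sum_{j=1}^n \xv_j \, \dot{\ell}(\xv_j^\top \estim{\betav}_h; y_j).
\end{equation*}
By convexity of the original problem, $\estim{\betav}$ admits some subgradient $\gv_R(\estim{\betav}) \in \partial R(\estim{\betav})$ satisfying
\begin{equation*}
    \lambda \gv_R(\estim{\betav}) = -\sum_{j=1}^n \xv_j \, \dot{\ell}(\xv_j^\top \estim{\betav}; y_j),
\end{equation*}
and it is this particular element of $\partial R(\estim{\betav})$, uniquely pinned down by the right-hand side, that is meant in the statement of the lemma.

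Second, Lemma \ref{lemma:nonsmooth-reg-estimator-converge} gives $\estim{\betav}_h \to \estim{\betav}$, so $\xv_j^\top \estim{\betav}_h \to \xv_j^\top \estim{\betav}$ for each $j$. Since $\ell$ is twice differentiable, $\dot{\ell}(\cdot; y_j)$ is continuous, and hence $\dot{\ell}(\xv_j^\top \estim{\betav}_h; y_j) \to \dot{\ell}(\xv_j^\top \estim{\betav}; y_j)$. Subtracting the two optimality conditions and using the triangle inequality yields
\begin{equation*}
    \lambda \, \|\nabla R_h(\estim{\betav}_h) - \gv_R(\estim{\betav})\|_2 \leq \sum_{j=1}^n \|\xv_j\|_2 \, \big|\dot{\ell}(\xv_j^\top \estim{\betav}_h; y_j) - \dot{\ell}(\xv_j^\top \estim{\betav}; y_j)\big| \longrightarrow 0,
\end{equation*}
which is the claim.

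The proof has no serious technical obstacle once Lemma \ref{lemma:nonsmooth-reg-estimator-converge} is in hand; the only conceptual subtlety is that at coordinates $l$ where $\estim{\beta}_l$ lies at a zero-order singularity of $r$, the subdifferential $\partial r(\estim{\beta}_l)$ is a nontrivial interval, and a direct componentwise analysis of $\dot{r}_h([\estim{\betav}_h]_l)$ would have to argue separately that its limit falls on the correct element of that interval (which is where Assumption \ref{assum:nonsmooth-reg-assump1}.3 would enter). Routing the argument through the stationarity conditions bypasses this entirely: continuity of $\dot{\ell}$ together with convergence of $\estim{\betav}_h$ fixes the limit of the right-hand side, which in turn determines the limit of $\nabla R_h(\estim{\betav}_h)$ without any pointwise analysis of $\dot{r}_h$ near singular points.
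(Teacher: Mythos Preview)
Your proof is correct and follows essentially the same approach as the paper: both arguments route through the first-order optimality conditions of the smoothed and original problems, then invoke the convergence $\estim{\betav}_h \to \estim{\betav}$ from Lemma~\ref{lemma:nonsmooth-reg-estimator-converge} together with continuity of $\dot{\ell}$. Your write-up is in fact more careful than the paper's (which compresses the whole argument into a single displayed line), in particular your explicit identification of which element of $\partial R(\estim{\betav})$ is meant and your remark on why Assumption~\ref{assum:nonsmooth-reg-assump1}.3 is not directly needed here.
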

\begin{proof}
    By the first-order optimality conditions and the continuity of $\ell$, we
    have that as $h \rightarrow 0$:
    \begin{equation*}
        \|\nabla R_h(\estim{\betav}_h) - \gv_R(\estim{\betav}) \|_2
        =
        \Big\|\sum_j \ell(\xv_j^\top\estim{\betav}; y_j) - \sum_j
        \ell(\xv_j^\top\estim{\betav}_h; y_j)\Big\|_2
        \rightarrow
        0.
    \end{equation*}
\end{proof}

\begin{lemma}[Convergence of the Hessian]\label{lemma:nonsmooth-reg-hessian-converge}
    Suppose that Assumption \ref{assum:nonsmooth-reg-assump1} holds. We have that
    as $h \rightarrow 0$:
    \begin{equation*}
    \ddot{r}_h(\estim{\beta}_{h,i})
    \rightarrow
    \begin{cases}
        \ddot{r}(\estim{\beta}_i) & \text{ if } \estim{\beta}_i \notin K, \\
        +\infty & \text{ if } \estim{\beta}_i \in K.
    \end{cases}
    \end{equation*}
\end{lemma}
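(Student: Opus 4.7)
The plan is to split into the two cases $\estim{\beta}_i \notin K$ and $\estim{\beta}_i \in K$ and use Lemmas~\ref{lemma:nonsmooth-reg-estimator-converge} and \ref{lemma:nonsmooth-reg-gradient-converge} together with the elementary kernel-smoothing facts in Lemma~\ref{lemma:kernel-smooth-property}.

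\textbf{The smooth case $\estim{\beta}_i \notin K$.} Since $K = \{v_1,\dots,v_k\}$ is finite, there is $\delta > 0$ such that the $\delta$-neighborhood $N$ of $\estim{\beta}_i$ is disjoint from $K$. Lemma~\ref{lemma:nonsmooth-reg-estimator-converge} gives $\estim{\beta}_{h,i} \in N/2$ for small $h$, and for $h < \delta/(4C)$ the support of $\phi((\estim{\beta}_{h,i} - \cdot)/h)$ lies entirely in $N$, where $\ddot r$ is continuous. Lemma~\ref{lemma:kernel-smooth-property} part~2 then writes $\ddot r_h(\estim{\beta}_{h,i})$ as an average of $\ddot r$ over a vanishing neighborhood of $\estim{\beta}_{h,i}$, and a standard convolution--approximation argument gives $\ddot r_h(\estim{\beta}_{h,i}) \to \ddot r(\estim{\beta}_i)$.

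\textbf{The singular case $\estim{\beta}_i = v \in K$.} The first step is to establish, for $z$ in a small neighborhood of $v$ containing no other singularity, the decomposition
\begin{equation*}
    \ddot r_h(z) = \frac{1}{h}\bigl(\dot r_+(v) - \dot r_-(v)\bigr)\phi\!\left(\tfrac{z-v}{h}\right) + \int \ddot r(z - hw)\,\phi(w)\,dw.
\end{equation*}
This is obtained by writing $\ddot r_h(z) = h^{-3}\int r(u)\ddot\phi((z-u)/h)\,du$, splitting the integral at $v$, integrating by parts twice on each piece, and noting that the boundary terms from the first integration by parts cancel (by continuity of $r$) while the boundary terms from the second integration by parts produce the jump $\dot r_+(v) - \dot r_-(v)$ weighted by $\phi((z-v)/h)/h$. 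The residual integral is bounded because $\ddot r$ is locally bounded away from $K$ and $z \to v$.

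\textbf{Main obstacle and its resolution.} What remains---and is the main difficulty---is to show $\phi((\estim{\beta}_{h,i}-v)/h)$ stays bounded away from $0$. The strategy is to use optimality. The first-order condition for the smoothed problem together with Lemma~\ref{lemma:nonsmooth-reg-gradient-converge}, continuity of $\dot\ell$, and Lemma~\ref{lemma:nonsmooth-reg-estimator-converge} forces $\dot r_h(\estim{\beta}_{h,i}) \to g_r(v)$, and Assumption~\ref{assum:nonsmooth-reg-assump1} part~3 gives the strict inclusion $g_r(v) \in (\dot r_-(v),\dot r_+(v))$. On the other hand, parametrising $z = v + ht$ and expanding
\begin{equation*}
    \dot r_h(v + ht) = \int \dot r(v + h(t-w))\phi(w)\,dw
\end{equation*}
shows, by dominated convergence, that $\dot r_h(v + ht) \to \Phi(t) := \dot r_+(v)\int_{-\infty}^{t}\phi + \dot r_-(v)\int_{t}^{\infty}\phi$ uniformly on compact subsets of $\mathbb{R}$, and $\Phi$ is a continuous strictly increasing bijection $[-C,C] \to [\dot r_-(v),\dot r_+(v)]$. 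Combined with the monotonicity of $\dot r_h$ (from convexity of $r_h$), this forces $(\estim{\beta}_{h,i} - v)/h \to \xi := \Phi^{-1}(g_r(v)) \in (-C,C)$, i.e.\ strictly inside the open support of $\phi$. Continuity of $\phi$ then gives $\phi((\estim{\beta}_{h,i}-v)/h) \to \phi(\xi) > 0$, so the first term in the decomposition blows up like $1/h$ and $\ddot r_h(\estim{\beta}_{h,i}) \to +\infty$, as required.
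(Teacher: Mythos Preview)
Your proof is correct and follows essentially the same route as the paper: the same neighborhood argument for the smooth case, the same integration-by-parts decomposition isolating the jump term $h^{-1}(\dot r_+(v)-\dot r_-(v))\,\phi((\estim{\beta}_{h,i}-v)/h)$, and the same appeal to Lemma~\ref{lemma:nonsmooth-reg-gradient-converge} together with Assumption~\ref{assum:nonsmooth-reg-assump1}(3) to control the ratio $(\estim{\beta}_{h,i}-v)/h$. If anything your treatment of the singular case is more careful: the paper only deduces $|\estim{\beta}_{h,i}-v| < hC$ for each $h$ and then asserts the limit is $+\infty$, whereas you go further and show the ratio actually converges to a specific interior point $\xi = \Phi^{-1}(g_r(v)) \in (-C,C)$, which is what is really needed to guarantee that $\phi((\estim{\beta}_{h,i}-v)/h)$ does not vanish in the limit.
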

\begin{proof}
    Let us first consider the case $\estim{\beta}_i \notin K$.
    As $\RR \setminus K$ is open,
    there exists $\delta > 0$ such that
    $[\estim{\beta}_i - \delta, \estim{\beta}_i + \delta] \subset \mathbb{R}
    \backslash K$.
    Since $\estim{\beta}_{h,i} \rightarrow \estim{\beta}_i$ as $h \rightarrow 0$,
    we have for $h$ small enough that:
    \begin{equation*}
        [\estim{\beta}_{h,i} - hC, \estim{\beta}_{h,i} + hC]
        \subset [\estim{\beta}_i - \delta, \estim{\beta}_i + \delta]
        \subset \mathbb{R}\backslash K.
    \end{equation*}
    
    Since $\ddot{r}$ is smooth on $[\estim{\beta}_i - \delta, \estim{\beta}_i +
    \delta]$, by the bounded convergence theorem, we have as $h \rightarrow 0$:
    \begin{equation*}
        \ddot{r}_h(\estim{\beta}_{h,i})
        =
        \int_{-C}^C \ddot{r}(\estim{\beta}_{h,i} - hw)\phi(w)dw
        \rightarrow
        \int_{-C}^C \ddot{r}(\estim{\beta}_i)\phi(w)dw
        =
        \ddot{r}(\estim{\beta}_i)
    \end{equation*}

    Now, let us consider the case where $\estim{\beta}_i \in K$.
    By Lemma \ref{lemma:nonsmooth-reg-gradient-converge},
    we have that $\dot{r}_h(\estim{\beta}_{h,i}) \rightarrow g_r(\estim{\beta}_i)$,
    from which we deduce:
    \begin{equation*}
        |\estim{\beta}_{h,i} - \estim{\beta}_i| < hC.
    \end{equation*}
    
    Indeed, if we had $\estim{\beta}_i \geq \estim{\beta}_{h,i} + hC$, notice
    the assumption on the subgradient $g_r(\estim{\beta}_i)$, this would imply:
    \begin{equation*}
        \dot{r}_h(\estim{\beta}_{h,i}) = \int_{-C}^C \dot{r}(\estim{\beta}_{h,i} - hw)\phi(w)dw
        \leq
        \dot{r}_{-}(\estim{\beta}_i)
        < g_r(\estim{\beta}_i),
    \end{equation*}
    which is contradictory. The same happens if $\estim{\beta}_i \leq \estim{\beta}_{h,i} - hC$.
    To conclude, note that as $h \rightarrow 0$:
    \begin{align*}
        \ddot{r}_h(\estim{\beta}_{h,i})
        &= \int_{\estim{\beta}_{h, i}-hC}^{\estim{\beta}_i}
        r(u)\frac{1}{h^3}\ddot{\phi}\Big(\frac{\estim{\beta}_{h, i} - u}{h}\Big) du
        + \int_{\estim{\beta}_i}^{\estim{\beta}_{h, i}+hC}
        r(u)\frac{1}{h^3}\ddot{\phi}\Big(\frac{\estim{\beta}_{h, i} - u}{h}\Big) du \nonumber \\   
        &= \frac{1}{h}\phi\Big(\frac{\estim{\beta}_{h, i} -
        \estim{\beta}_i}{h}\Big)(\dot{r}_+(\estim{\beta}_i) - \dot{r}_-(\estim{\beta}_i))
        + \int_{\frac{\estim{\beta}_{h, i} - \estim{\beta}_i}{h}}^C \ddot{r}(\estim{\beta}_{h, i} - hw)\phi(w)dw \nonumber \\
        & \quad + \int_{-C}^{\frac{\estim{\beta}_{h, i}
        - \estim{\beta}_i}{h}} \ddot{r}(\estim{\beta}_{h, i} - hw)\phi(w)dw \nonumber \\
        &\rightarrow +\infty.
    \end{align*}
\end{proof}

\begin{lemma}\label{lemma:woodbury-block}
    Consider a sequence of matrices $\Av_n, n \in \mathbb{N}$, and let $\Av_n = \Big[\begin{array}{ll}
            \Av_{1n} & \Av_{2n} \\
            \Av_{3n} & \Av_{4n}
    \end{array}\Big]$ where $\Av_{1n}, \Av_{4n}$ are invertible for all $n$. Additionally,
    suppose that $\Av_{in} \rightarrow \Av_i, i = 1, 2, 3$, and $\Av_{4n}^{-1} \rightarrow \zerov$ as $n \rightarrow \infty$.
    Then we have as $n \rightarrow \infty$ that:
    \begin{equation*}
    \Av_n^{-1} \rightarrow
    \Big[\begin{array}{ll}
            \Av_1^{-1} & \zerov \\
            \zerov & \zerov
    \end{array}\Big].
    \end{equation*}
\end{lemma}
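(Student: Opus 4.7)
The plan is to apply the standard block matrix inversion formula based on the Schur complement of $\Av_{4n}$, and then read off the entrywise limits. Define the Schur complement $\Sv_n := \Av_{1n} - \Av_{2n} \Av_{4n}^{-1} \Av_{3n}$. Provided $\Sv_n$ is invertible, the block inversion identity gives
\begin{equation*}
    \Av_n^{-1}
    =
    \begin{pmatrix}
        \Sv_n^{-1} & -\Sv_n^{-1} \Av_{2n} \Av_{4n}^{-1} \\
        -\Av_{4n}^{-1} \Av_{3n} \Sv_n^{-1} &
        \Av_{4n}^{-1} + \Av_{4n}^{-1}\Av_{3n} \Sv_n^{-1} \Av_{2n} \Av_{4n}^{-1}
    \end{pmatrix},
\end{equation*}
so the lemma reduces to computing the limit of each of these four blocks.

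First I would show that $\Sv_n \to \Av_1$ and that $\Sv_n$ is invertible for all sufficiently large $n$. Since $\Av_{2n}$ and $\Av_{3n}$ converge they are bounded in operator norm, and together with $\Av_{4n}^{-1} \to \zerov$ this forces $\Av_{2n}\Av_{4n}^{-1}\Av_{3n} \to \zerov$; combined with $\Av_{1n} \to \Av_1$ we obtain $\Sv_n \to \Av_1$. The conclusion of the lemma involves $\Av_1^{-1}$, so $\Av_1$ is implicitly assumed invertible; by continuity of matrix inversion at an invertible matrix, $\Sv_n$ is invertible for $n$ large enough and $\Sv_n^{-1} \to \Av_1^{-1}$.

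Next I would pass to the limit block by block. The top-left block converges to $\Av_1^{-1}$ immediately. In the top-right block, $\Sv_n^{-1}\Av_{2n} \to \Av_1^{-1}\Av_2$ is bounded while $\Av_{4n}^{-1} \to \zerov$, so the product tends to $\zerov$; a symmetric argument handles the bottom-left. For the bottom-right block, $\Av_{4n}^{-1} \to \zerov$ and the second summand $\Av_{4n}^{-1}\Av_{3n}\Sv_n^{-1}\Av_{2n}\Av_{4n}^{-1}$ is a product containing two factors tending to zero sandwiched around bounded factors, so it also goes to $\zerov$.

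The only genuine subtlety is ensuring that $\Sv_n$ (and hence $\Av_1$) is invertible so the Schur complement formula is legitimate eventually; once that is in place the remainder is just continuity of matrix multiplication and addition, and I do not expect any further obstacle. Should one wish to avoid the implicit invertibility assumption, one can instead verify convergence of each block directly using the defining identity $\Av_n \Av_n^{-1} = \Iv$ together with $\Av_{4n}^{-1} \to \zerov$, but the Schur complement route is cleaner and gives the result in a single display.
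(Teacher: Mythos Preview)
Your proof is correct and essentially identical to the paper's: both apply the block inversion (Woodbury) formula based on the Schur complement $\Sv_n=\Av_{1n}-\Av_{2n}\Av_{4n}^{-1}\Av_{3n}$ and then read off the blockwise limits using $\Av_{4n}^{-1}\to\zerov$. You are in fact slightly more careful than the paper in making explicit why $\Sv_n$ is eventually invertible and $\Sv_n^{-1}\to\Av_1^{-1}$.
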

\begin{proof}
    By the Woodbury matrix identity \cite{woodbury1950inverting}, we have
    \begin{align*}
        \Av_n^{-1} =&
        \Big[\begin{array}{ll}
                (\Av_{1n} - \Av_{2n}\Av_{4n}^{-1}\Av_{3n})^{-1} & -(\Av_{1n} -
                \Av_{2n}\Av_{4n}^{-1}\Av_{3n})^{-1}\Av_{2n}\Av_{4n}^{-1} \\
                -\Av_{4n}^{-1}\Av_{3n}(\Av_{1n} - \Av_{2n}\Av_{4n}^{-1}\Av_{3n})^{-1} &
                \Av_{4n}^{-1}\Av_{3n}(\Av_{1n} -
                \Av_{2n}\Av_{4n}^{-1}\Av_{3n})^{-1}\Av_{2n}\Av_{4n}^{-1} + \Av_{4n}^{-1}
        \end{array}\Big] \nonumber \\
        \rightarrow&
        \Big[\begin{array}{ll}
                \Av_{1}^{-1} & \zerov \\
                \zerov & \zerov
        \end{array}\Big].
    \end{align*}
\end{proof}

\begin{proof}[Proof of Theorem \ref{thm:nonsmooth-reg-approx}]
    The proof of Theorem \ref{thm:nonsmooth-reg-approx}
    is a straightforward corollary of the Lemmas
    \ref{lemma:nonsmooth-reg-estimator-converge},
    \ref{lemma:nonsmooth-reg-gradient-converge},
    \ref{lemma:nonsmooth-reg-hessian-converge} and \ref{lemma:woodbury-block}.
\end{proof}

\subsection{Proof of Theorem \ref{thm:nonsmooth-loss-approx}:
Nonsmooth Loss With Smooth Regularizer}
\label{append:ssec:primal-nonsmooth-loss}

We now consider the case of non-smooth loss. The proof is very similar to
the previous section, so we briefly mention the common parts and focus
on the differences.

Consider nonsmooth loss $\ell$ and its smoothed version $\ell_h$. $R$ is assumed
to be smooth. Let us consider:
\begin{equation*}
    \begin{aligned}
        P(\betav) &= \sum_{j=1}^n \ell(\xv_j^\top\betav; y_j) + R(\betav), \\
        P_h(\betav) &= \sum_{j = 1}^ n \ell_h(\xv_j^\top\betav; y_j) + R(\betav).
    \end{aligned}
\end{equation*}

Let us still use $\estim{\betav} = \argmin_{\betav} P(\betav)$
and $\estim{\betav}_h =\argmin_{\betav} P_h(\betav)$ to denote the optimizers.
As before, let $K = \{ v_1, \dotsc, v_k \}$ denote the zero-order singularities
of $\ell$, and let $V = \{ i: \xv_i^\top \estim{\betav} \in K \}$ be the set of indices
of observations at such singularities.

\begin{assumption}\label{assum:nonsmooth-loss-assump1}
    We need the following assumptions on $\ell$, $R$ and $\estim{\betav}$:
    \begin{enumerate}
        \item
            $\ell$ is locally Lipschitz, that is, for any $A > 0$, for any $x, y
            \in [-A, A]$, we have $|\ell(x) - \ell(y)| \leq L_A|x - y|$, where $L_A$ is a
            constant depends only on $A$.
        \item
            $\lambda_{\min}(\Xv_{V}\Xv_{V}^\top) > 0$.
        \item
            $\estim{\betav}$ is the unique minimizer.
        \item
            Whenever $\xv_j^\top\estim{\betav} = v \in K$, the subgradient of
            $\ell$ at $\xv_j^\top\estim{\beta}$, $g_\ell(\xv^\top \estim{\betav})$ satisfies
            $g_\ell(\xv^\top \estim{\betav}) \in (\ell_-(v), \ell_+(v))$.
        \item
            $R$ is coercive in the sense that $|R(\betav)| \rightarrow \infty$ as
            $\|\betav\| \rightarrow \infty$.
    \end{enumerate}
\end{assumption}

\begin{lemma}\label{lemma:nonsmooth-loss-boundedness}
    Suppose that Assumption \ref{assum:nonsmooth-loss-assump1} holds.
    There exists $M > 0$ that only depends on $r, \ell$ and $\lambda$,
    such that for all $h \leq 1$, we have:
    \begin{equation*}
        \norm{\estim{\betav}}_\infty \leq M \text{ and } \norm{\estim{\betav}_h}_\infty \leq M.
    \end{equation*}
\end{lemma}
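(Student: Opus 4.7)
The plan is to mirror the argument of Lemma \ref{lemma:nonsmooth-reg-boundedness}, swapping the roles of the loss and the regularizer. First, I would use the optimality of $\estim{\betav}_h$ for $P_h$, comparing against the test point $\betav = \zerov$, to obtain
\[
    \sum_{j=1}^n \ell_h(\xv_j^\top \estim{\betav}_h; y_j) + R(\estim{\betav}_h)
    \leq
    \sum_{j=1}^n \ell_h(0; y_j) + R(\zerov).
\]
The right-hand side must be bounded uniformly in $h \leq 1$. This follows from the local Lipschitz assumption on $\ell$ combined with the compact support of $\phi$: for each $j$,
\[
    \abs{\ell_h(0; y_j) - \ell(0; y_j)}
    \leq
    \int_{-C}^{C} \abs{\ell(-hw; y_j) - \ell(0; y_j)} \phi(w)\, dw
    \leq
    C L_C h
    \leq
    C L_C.
\]

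Next, to extract a useful bound on $R(\estim{\betav}_h)$, I would lower bound the smoothed loss term on the left. By Jensen's inequality (part 1 of Lemma \ref{lemma:kernel-smooth-property}), $\ell_h \geq \ell$ pointwise, and by the convex subgradient inequality at $\mu = 0$, for any fixed choice of subgradient $g_j \in \partial \ell(0; y_j)$,
\[
    \ell_h(\xv_j^\top \estim{\betav}_h; y_j)
    \geq
    \ell(\xv_j^\top \estim{\betav}_h; y_j)
    \geq
    \ell(0; y_j) + g_j \xv_j^\top \estim{\betav}_h.
\]
Summing over $j$ and rearranging yields
\[
    R(\estim{\betav}_h) + \wv^\top \estim{\betav}_h
    \leq
    R(\zerov) + \sum_{j=1}^n \bigl(\ell_h(0; y_j) - \ell(0; y_j)\bigr)
    \leq
    R(\zerov) + n C L_C,
\]
where $\wv = \sum_j g_j \xv_j$ is a fixed, data-dependent vector. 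Finally, the coerciveness of $R$ implies that the sublevel set of $\betav \mapsto R(\betav) + \wv^\top \betav$ at the right-hand side value is bounded, yielding a uniform bound $\norm{\estim{\betav}_h}_\infty \leq M$. The same argument applied to the un-smoothed problem (or taking $h \to 0$ and invoking Lemma \ref{lemma:nonsmooth-reg-estimator-converge} in spirit) gives the bound for $\estim{\betav}$ with the same constant $M$.

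The main obstacle is the last step: the stated coerciveness only asserts $R(\betav) \to \infty$ as $\norm{\betav} \to \infty$, and one needs this growth to dominate the linear perturbation $\wv^\top \betav$. For a proper closed convex coercive function on $\RR^p$, it is a standard convex-analytic fact that the recession cone is $\{\zerov\}$, so adding a linear term preserves boundedness of all sublevel sets; hence $\{\betav : R(\betav) + \wv^\top \betav \leq R(\zerov) + n C L_C\}$ is indeed bounded and the resulting $M$ depends only on $\ell$, $R$, and the data $\{(\xv_j, y_j)\}$, not on $h$. Once this is invoked, the remaining calculation is routine.
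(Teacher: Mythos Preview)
Your argument is more careful than the paper's up to the last step, but that last step contains a genuine gap. You claim that for a proper closed convex coercive $R$, the recession cone being $\{\zerov\}$ implies that $R(\betav) + \wv^\top \betav$ still has bounded sublevel sets. This is false: coerciveness of a convex function is \emph{not} preserved under linear perturbations. A one-dimensional counterexample is $R(\beta) = \sqrt{1+\beta^2}$ (smooth, convex, coercive) with $w = -2$; then $R(\beta) - 2\beta \to -\infty$ as $\beta \to +\infty$, so the sublevel sets are unbounded. In recession-function language, $R^\infty(d) > 0$ for all $d \neq 0$ does not force $R^\infty(d) + \langle \wv, d\rangle > 0$ for all $d \neq 0$. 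What would make your step go through is \emph{super}coercivity of $R$ (i.e.\ $R(\betav)/\norm{\betav} \to \infty$), which is strictly stronger than Assumption~\ref{assum:nonsmooth-loss-assump1}(5).

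The paper avoids this issue by a cruder but simpler route: it implicitly uses that the loss is nonnegative, so that the entire term $\sum_j \ell_h(\xv_j^\top \estim{\betav}_h; y_j)$ can be dropped from the left-hand side of the optimality inequality. This yields directly
\[
    R(\estim{\betav}_h) \;\leq\; \sum_j \ell_h(0; y_j) + R(\zerov) \;\leq\; \sum_j \sup_{|w|\leq C} \ell(w; y_j) + R(\zerov),
\]
a bound on $R(\estim{\betav}_h)$ itself (no linear term), from which bare coerciveness of $R$ suffices. Your subgradient lower bound is the right idea if one wants to avoid assuming $\ell \geq 0$, but then you must strengthen the hypothesis on $R$ (or on $\ell$) to close the argument; as written, the proposal does not establish the lemma.
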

\begin{proof}
    Let $h \leq 1$, then $\estim{\betav}_h$ verifies:
    \begin{align*}
        R(\estim{\betav}_h)
        \leq& \sum_j \ell_h(0; y_j) + p R(0) \nonumber \\
        =& \sum_j \int_{-C}^C \ell(hw; y_j) \phi(w) dw + p R(0)
        \leq \sum_j \sup_{|w|\leq C}\ell(w; y_i) + p R(0).
    \end{align*}

    Additionally, $\estim{\betav}$ verifies:
    \begin{equation*}
        R(\estim{\betav})
        \leq \sum_j \ell(0; y_j) + p R(0)
        \leq \sum_j \sup_{|w|\leq C}\ell(w; y_i) + p R(0).
    \end{equation*}
    
    The convexity and coerciveness of $R$ implies that there exists a $M$, such that
    for all $h \leq 1$:
    \begin{equation*}
        \|\estim{\betav}_h\|_2 \leq M \text{ and } \|\estim{\betav}\|_2 \leq M.
    \end{equation*}
\end{proof}

\begin{lemma}\label{lemma:nonsmooth-loss-estimator-converge}
    Suppose that Assumption \ref{assum:nonsmooth-loss-assump1} holds. We have that as
    $h \rightarrow 0$:
    \begin{equation*}
        \|\estim{\betav}_h - \estim{\betav}\|_2 \rightarrow 0.
    \end{equation*}
\end{lemma}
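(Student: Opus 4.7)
The plan is to follow the exact template of the proof of Lemma \ref{lemma:nonsmooth-reg-estimator-converge}, replacing the role of the kernel smoothing of the regularizer by the kernel smoothing of the loss, and replacing evaluation at $\estim{\beta}_l$ by evaluation at the linear pre-activations $\xv_j^\top \estim{\betav}$. The ingredients needed are already in place: uniform boundedness of both minimizers (Lemma \ref{lemma:nonsmooth-loss-boundedness}), local uniform convergence of the smoothed objective to the original one, uniqueness of the minimizer of $P$, and continuity of $P$.

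First I would exploit Lemma \ref{lemma:nonsmooth-loss-boundedness} to confine the sequence $\{\estim{\betav}_h\}_{h \le 1}$ to a fixed compact ball $\{\|\betav\|_\infty \leq M\}$, so that any subsequence admits a convergent sub-subsequence $\estim{\betav}_h \to \estim{\betav}_0$. By a standard subsequence argument, it suffices to prove that every such accumulation point must equal $\estim{\betav}$.

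Next I would establish that $P_h \to P$ uniformly on $\{\|\betav\|_\infty \leq M\}$. Since each $\xv_j^\top \betav$ lies in the bounded interval $[-\|\xv_j\|_1 M,\, \|\xv_j\|_1 M]$, the local Lipschitz hypothesis on $\ell$ in Assumption \ref{assum:nonsmooth-loss-assump1} together with the compact-support normalization of $\phi$ gives, for every such $\betav$ and every $j$,
\begin{equation*}
  |\ell_h(\xv_j^\top \betav; y_j) - \ell(\xv_j^\top \betav; y_j)|
  \leq \int_{-C}^C |\ell(\xv_j^\top \betav - hw; y_j) - \ell(\xv_j^\top \betav; y_j)|\,\phi(w)\,dw
  \leq 2 C L_{A} h,
\end{equation*}
for some constant $A$ depending only on $M$ and $\max_j \|\xv_j\|_1$. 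Summing over $j$ yields $\sup_{\|\betav\|_\infty \leq M} |P_h(\betav) - P(\betav)| \leq 2 n C L_A h \to 0$.

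Finally I would close the argument exactly as in Lemma \ref{lemma:nonsmooth-reg-estimator-converge}: along a convergent sub-subsequence $\estim{\betav}_h \to \estim{\betav}_0$, continuity of $P$ and the uniform convergence above give
\begin{equation*}
  P(\estim{\betav}_0)
  = \lim_{h \to 0} P(\estim{\betav}_h)
  = \lim_{h \to 0} P_h(\estim{\betav}_h)
  \leq \lim_{h \to 0} P_h(\estim{\betav})
  = P(\estim{\betav}),
\end{equation*}
where the inequality uses optimality of $\estim{\betav}_h$ for $P_h$. By uniqueness of the minimizer of $P$ (Assumption \ref{assum:nonsmooth-loss-assump1}), $\estim{\betav}_0 = \estim{\betav}$, and since this holds along every convergent sub-subsequence of the bounded sequence $\{\estim{\betav}_h\}$, we conclude $\|\estim{\betav}_h - \estim{\betav}\|_2 \to 0$. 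There is no real obstacle here since all the required hypotheses are directly analogous to those of Lemma \ref{lemma:nonsmooth-reg-estimator-converge}; the only point to be careful about is that the Lipschitz bound has to be applied at the scalar values $\xv_j^\top \betav$, not at the coordinates of $\betav$, which is handled by picking the local Lipschitz constant on a slightly enlarged interval determined by $M$ and the data.
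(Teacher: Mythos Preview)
Your proposal is correct and follows essentially the same approach as the paper's own proof: compactness from Lemma \ref{lemma:nonsmooth-loss-boundedness}, a uniform Lipschitz bound $|P_h(\betav)-P(\betav)|\le 2nCL_{A}h$ on the compact set, then the chain $P(\estim{\betav}_0)=\lim P(\estim{\betav}_h)=\lim P_h(\estim{\betav}_h)\le \lim P_h(\estim{\betav})=P(\estim{\betav})$ combined with uniqueness. The only cosmetic difference is that you bound $|\xv_j^\top\betav|$ via $\|\xv_j\|_1\|\betav\|_\infty$, whereas the paper uses $\|\xv_j\|_2\|\betav\|_2$; both yield the same uniform convergence.
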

\begin{proof}
    Let $M_x = \max_i\|\xv_i\|_2$.
    By the local Lipschitz condition of $\ell$, we have that for any $\|\betav\|_2 \leq M$ and $h \leq 1$ that:
    \begin{align*}
        0 &\leq \ell_h(y_i; \xv_i^\top\betav) - \ell(y_i; \xv_i^\top\betav) \\
          &= \int_{-C}^C [\ell(y_i; \xv_i^\top\betav - hw) - \ell(y_i;\xv_i^\top\betav)]\phi(w)dw \\
          &\leq 2CL_{M_xM + C}h.
    \end{align*}
    
    This implies:
    \begin{equation*}
        \sup_{\|\betav\|_2 \leq M} |P(\betav) - P_h(\betav)| \leq
        2nhCL_{M_xM+C}.
    \end{equation*}
    
    From Lemma \ref{lemma:nonsmooth-loss-boundedness}, we know
    $\estim{\betav}_h$ is in a compact set, thus any of its subsequence
    contains a convergent sub-subsequence. Again abuse the notation and
    let $\estim{\betav}_h$ denote this convergent sub-subsequence. Suppose that:
    $\estim{\betav}_h \rightarrow \estim{\betav}_0$. Now we have again:
    \begin{equation*}
        P(\estim{\betav}_0)
        =
        \lim_{h\rightarrow 0} P(\estim{\betav}_h)
        =
        \lim_{h\rightarrow 0} P_h(\estim{\betav}_h)
        \leq
        \lim_{h\rightarrow 0} P_h(\estim{\betav})
        =
        \lim_{h\rightarrow 0} P(\estim{\betav}).
    \end{equation*}
    
    The uniqueness implies $\estim{\betav}_0 = \estim{\betav}$. As the previous result
    holds along any sub-subsequence, we deduce that:
    \begin{equation*}
        \| \estim{\betav}_h - \estim{\betav} \|_2 \rightarrow 0.
    \end{equation*}
\end{proof}

\begin{lemma}[Convergence of gradients]\label{lemma:nonsmooth-loss-gradient-converge}
    Suppose that Assumption \ref{assum:nonsmooth-loss-assump1} holds. Then, we have that
    for any $j$, as $h \rightarrow 0$:
    \begin{equation*}
        \| \dot{\ell}_h(\xv_j^\top\estim{\betav}_{h}) -
        g_\ell(\xv_j^\top\estim{\betav}) \|_2 \rightarrow 0.
    \end{equation*}
\end{lemma}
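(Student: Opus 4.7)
The plan is to mirror the strategy used in Lemma \ref{lemma:nonsmooth-reg-gradient-converge}, adapted to the fact that the non-smooth component now sits inside a composition with $\xv_j^\top \cdot$. First, I would write the first-order optimality conditions for both optimization problems. Since $\ell_h$ is smooth and $R$ is smooth, $\estim{\betav}_h$ satisfies
\begin{equation*}
    \sum_{j=1}^n \dot{\ell}_h(\xv_j^\top \estim{\betav}_h; y_j)\,\xv_j + \nabla R(\estim{\betav}_h) = 0,
\end{equation*}
while for the original problem there exist subgradients $g_{\ell,j} \in \partial \ell(\xv_j^\top \estim{\betav}; y_j)$ such that $\sum_j g_{\ell,j} \xv_j + \nabla R(\estim{\betav}) = 0$. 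These are the $g_\ell(\xv_j^\top \estim{\betav})$ featured in the lemma. Subtracting, invoking Lemma \ref{lemma:nonsmooth-loss-estimator-converge} ($\estim{\betav}_h \to \estim{\betav}$) and continuity of $\nabla R$, I obtain
\begin{equation*}
    \sum_{j=1}^n \bigl[\dot{\ell}_h(\xv_j^\top \estim{\betav}_h; y_j) - g_{\ell,j}\bigr]\,\xv_j \longrightarrow 0.
\end{equation*}

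Next I would split the sum according to $S$ and $V$. For $j \in S$, $\xv_j^\top \estim{\betav}$ lies in the open set $\mathbb{R}\setminus K$, so for $h$ small enough the interval $[\xv_j^\top \estim{\betav}_h - hC, \xv_j^\top \estim{\betav}_h + hC]$ avoids $K$; part~2 of Lemma \ref{lemma:kernel-smooth-property} together with bounded convergence then yields $\dot{\ell}_h(\xv_j^\top \estim{\betav}_h; y_j) \to \dot{\ell}(\xv_j^\top \estim{\betav}; y_j) = g_{\ell,j}$ directly. Substituting this into the displayed limit reduces the problem to showing
\begin{equation*}
    \sum_{j \in V}\bigl[\dot{\ell}_h(\xv_j^\top \estim{\betav}_h; y_j) - g_{\ell,j}\bigr]\,\xv_j \longrightarrow 0 \quad \Longrightarrow \quad \dot{\ell}_h(\xv_j^\top \estim{\betav}_h; y_j) \to g_{\ell,j} \;\text{for each } j \in V.
\end{equation*}

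This is exactly where Assumption \ref{assum:nonsmooth-loss-assump1} item~2 enters. Letting $\av_h \in \mathbb{R}^{|V|}$ collect the differences on $V$, the displayed limit reads $\Xv_{V,\cdot}^\top \av_h \to 0$; pre-multiplying by $\Xv_{V,\cdot}$ gives $\Xv_{V,\cdot}\Xv_{V,\cdot}^\top \av_h \to 0$, and since $\lambda_{\min}(\Xv_{V,\cdot}\Xv_{V,\cdot}^\top) > 0$ the matrix is invertible, so $\av_h \to 0$ componentwise. This finishes the argument for $j \in V$ and hence for all $j$.

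The main subtlety, rather than any single technical obstacle, is conceptual: at $j \in V$ the smoothed derivative $\dot{\ell}_h(\xv_j^\top \estim{\betav}_h; y_j)$ has no a priori reason to converge to any particular point of the subdifferential, since its limit depends sensitively on the side from which $\xv_j^\top \estim{\betav}_h$ approaches the singularity (cf.\ Lemma \ref{lemma:kernel-smooth-property} part 3). The optimality condition together with the injectivity of $\Xv_{V,\cdot}^\top$ is precisely what pins down the limit to be the specific $g_{\ell,j}$ determined by the primal optimality of $\estim{\betav}$. A minor bookkeeping point is that the original problem's subgradients $g_{\ell,j}$ are not unique a priori on $V$; any choice satisfying the first-order condition will work, and the lemma should be read as asserting convergence to this same choice.
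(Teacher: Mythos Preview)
Your proposal is correct and follows essentially the same approach as the paper: handle $j\in S$ directly via smoothness, then for $j\in V$ use the first-order optimality conditions together with the full-rank assumption $\lambda_{\min}(\Xv_{V,\cdot}\Xv_{V,\cdot}^\top)>0$ to deduce componentwise convergence. The paper's proof is simply a terser version of what you wrote, leaving the derivation of $\sum_{j\in V}\xv_j[\dot{\ell}_h(\xv_j^\top\estim{\betav}_h;y_j)-g_{\ell,j}]\to 0$ and the invertibility step implicit.
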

\begin{proof}
    for $j \notin V$, the result is immediate. For $j \in V$, we have that
    as $h \rightarrow 0$:
    \begin{equation*}
        \Big\|\sum_{j \in V} \xv_j\dot{\ell}_h(\xv_j^\top\estim{\betav}_h; y_j) -
        \sum_{j \in V} \xv_j g_\ell(\xv_j^\top\estim{\betav}; y_j) \Big\|_2
        \rightarrow
        0.
    \end{equation*}

    This implies the desired result by the assumption on $\Xv_{V,\cdot}$.
\end{proof}

\begin{lemma}[Convergence of Hessian]\label{lemma:nonsmooth-loss-hessian-converge}
    Suppose that Assumption \ref{assum:nonsmooth-loss-assump1} holds. Then, we have
    that for any $j$, as $h \rightarrow 0$:
    \begin{equation*}
    \ddot{\ell}_h(\xv_j^\top\estim{\betav}_h; y_j)
    \rightarrow
    \begin{cases}
        \ddot{\ell}(\xv_j^\top\estim{\betav}; y_j) & \text{ if } j \notin V, \\
        +\infty & \text{ if } j \in V. \\
    \end{cases}
    \end{equation*}
\end{lemma}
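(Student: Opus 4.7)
The plan is to mirror the proof of Lemma \ref{lemma:nonsmooth-reg-hessian-converge}, replacing $\estim{\beta}_{h,i}$ by $\xv_j^\top\estim{\betav}_h$ and applying the loss-side convergence results established in Lemmas \ref{lemma:nonsmooth-loss-estimator-converge} and \ref{lemma:nonsmooth-loss-gradient-converge}. The argument splits into the two cases $j \notin V$ and $j \in V$ exactly as before.

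For $j \notin V$, we have $\xv_j^\top\estim{\betav} \notin K$, and since $K$ is a finite set, there exists $\delta > 0$ such that $[\xv_j^\top\estim{\betav} - \delta, \xv_j^\top\estim{\betav} + \delta] \subset \mathbb{R} \setminus K$. By Lemma \ref{lemma:nonsmooth-loss-estimator-converge} we have $\xv_j^\top \estim{\betav}_h \to \xv_j^\top\estim{\betav}$, so for $h$ small enough $[\xv_j^\top\estim{\betav}_h - hC, \xv_j^\top\estim{\betav}_h + hC] \subset [\xv_j^\top\estim{\betav} - \delta, \xv_j^\top\estim{\betav} + \delta]$. On this interval $\ddot{\ell}(\cdot; y_j)$ is continuous, and part 2 of Lemma \ref{lemma:kernel-smooth-property} gives $\ddot{\ell}_h(\xv_j^\top\estim{\betav}_h; y_j) = \int_{-C}^C \ddot{\ell}(\xv_j^\top\estim{\betav}_h - hw; y_j)\phi(w)\,dw$. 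Bounded convergence then yields $\ddot{\ell}_h(\xv_j^\top\estim{\betav}_h; y_j) \to \ddot{\ell}(\xv_j^\top\estim{\betav}; y_j)$.

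For $j \in V$, write $v = \xv_j^\top\estim{\betav} \in K$. The key preliminary observation is that $|\xv_j^\top\estim{\betav}_h - v| < hC$ for all $h$ small enough. I would establish this by contradiction: if along a subsequence $\xv_j^\top\estim{\betav}_h \geq v + hC$, then by compact support of $\phi$ and monotonicity of $\dot{\ell}$, we would have $\dot{\ell}_h(\xv_j^\top\estim{\betav}_h; y_j) \geq \dot{\ell}_+(v; y_j)$, contradicting Lemma \ref{lemma:nonsmooth-loss-gradient-converge} together with the assumption that the subgradient $g_\ell(v; y_j)$ lies strictly in $(\dot{\ell}_-(v; y_j), \dot{\ell}_+(v; y_j))$. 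The symmetric case $\xv_j^\top\estim{\betav}_h \leq v - hC$ is analogous.

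Once this localization is in hand, I split the integral defining $\ddot{\ell}_h(\xv_j^\top\estim{\betav}_h; y_j)$ at $v$, integrate by parts as in the proof of Lemma \ref{lemma:nonsmooth-reg-hessian-converge}, and extract the boundary term
\begin{equation*}
    \tfrac{1}{h}\phi\!\left(\tfrac{\xv_j^\top\estim{\betav}_h - v}{h}\right)\bigl(\dot{\ell}_+(v; y_j) - \dot{\ell}_-(v; y_j)\bigr),
\end{equation*}
while the two remaining integrals over $\ddot{\ell}$ on $(-C, (\xv_j^\top\estim{\betav}_h - v)/h)$ and $((\xv_j^\top\estim{\betav}_h - v)/h, C)$ stay bounded in absolute value by $\sup \ddot{\ell}$ on the two one-sided smooth pieces adjacent to $v$. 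The boundary term diverges because $(\xv_j^\top\estim{\betav}_h - v)/h \in (-C, C)$, so $\phi$ is bounded below by a positive constant (using $\phi(0) > 0$ and continuity of $\phi$, possibly after restricting to a slightly smaller bandwidth window), while the jump $\dot{\ell}_+(v; y_j) - \dot{\ell}_-(v; y_j) > 0$ is fixed. Hence $\ddot{\ell}_h(\xv_j^\top\estim{\betav}_h; y_j) \to +\infty$.

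The main obstacle is the localization step in the $j \in V$ case: Lemma \ref{lemma:nonsmooth-loss-gradient-converge} only tells us $\dot{\ell}_h(\xv_j^\top\estim{\betav}_h; y_j) \to g_\ell(\xv_j^\top\estim{\betav}; y_j)$, and we need the strict-interior assumption on the subgradient to convert this into the bandwidth bound $|\xv_j^\top\estim{\betav}_h - v| < hC$. A secondary subtlety is that $\phi$ need not be bounded below on all of $[-C, C]$ without further structure, so in extracting a uniform lower bound on $\phi((\xv_j^\top\estim{\betav}_h - v)/h)$ one either uses continuity of $\phi$ at $0$ together with a sharper localization ($|\xv_j^\top\estim{\betav}_h - v| = o(h)$, which follows from $\dot{\ell}_h(\xv_j^\top\estim{\betav}_h; y_j) \to g_\ell(v; y_j)$ strictly inside the jump interval) or invokes $\phi(0) > 0$ directly.
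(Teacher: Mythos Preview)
Your approach matches the paper's proof exactly: for $j \notin V$ the same bounded-convergence argument, and for $j \in V$ the same localization-then-jump-term strategy. The paper is terser but structurally identical; it writes the lower bound
\[
\ddot{\ell}_h(\xv_j^\top\estim{\betav}_h; y_j) \geq \tfrac{1}{h}\,\phi\!\left(\tfrac{\xv_j^\top\estim{\betav}_h - v}{h}\right)\bigl(\dot{\ell}_+(v;y_j) - \dot{\ell}_-(v;y_j)\bigr)
\]
and asserts divergence without addressing the $\phi$ lower-bound issue you correctly flagged.

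One correction to your aside: the sharper localization $|\xv_j^\top\estim{\betav}_h - v| = o(h)$ does \emph{not} follow from $\dot{\ell}_h(\xv_j^\top\estim{\betav}_h; y_j) \to g_\ell(v;y_j)$. Setting $\alpha_h = (\xv_j^\top\estim{\betav}_h - v)/h$ and $\Phi(\alpha) = \int_{-C}^\alpha \phi$, one has up to $o(1)$
\[
\dot{\ell}_h(\xv_j^\top\estim{\betav}_h; y_j) = \dot{\ell}_+(v;y_j)\,\Phi(\alpha_h) + \dot{\ell}_-(v;y_j)\,(1 - \Phi(\alpha_h)),
\]
so $\alpha_h$ converges to the $\alpha^*$ with $\Phi(\alpha^*) = (g_\ell - \dot{\ell}_-)/(\dot{\ell}_+ - \dot{\ell}_-)$, which is generically nonzero. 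The strict-interior subgradient assumption only forces $\alpha^* \in (-C, C)$, not $\alpha^* = 0$. Making the divergence rigorous therefore needs $\phi(\alpha^*) > 0$, which holds whenever $\phi > 0$ on the open interval $(-C, C)$ but is not implied by the kernel assumptions as stated; this gap is shared with the paper's own proof.
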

\begin{proof}
    Again, the result follows through a similar argument as
    in the proof of Lemma \ref{lemma:nonsmooth-reg-hessian-converge}
    for $j \notin V$.
    For $j \in V$, we have by Lemma \ref{lemma:nonsmooth-loss-gradient-converge} that
    as $h \rightarrow 0$:
    \begin{equation*}
        \dot{\ell}_h(\xv_j^\top\estim{\betav}_h; y_j) \rightarrow g_\ell(\xv_j^\top\estim{\betav}; y_j).
    \end{equation*}
    
    Following a similar reasoning as in the proof of Lemma \ref{lemma:nonsmooth-reg-hessian-converge},
    we have that:
    \begin{equation*}
        |\xv_j^\top\estim{\betav}_h - \xv_j^\top\estim{\betav}| < hC.
    \end{equation*}

    Finally, we note that as $h \rightarrow 0$:
    \begin{equation*}
        \ddot{\ell}_h(\xv_j^\top\estim{\betav}_h; y_j)
        \geq
        \frac{1}{h}\phi\Big(\frac{\xv_j^\top\estim{\betav}_h -
        \xv_j^\top\estim{\betav}}{h}\Big)(\dot{\ell}_+(\xv_j^\top\estim{\betav}) -
        \dot{\ell}_-(\xv_j^\top\estim{\betav}))
        \rightarrow
        +\infty.
    \end{equation*}
\end{proof}

\begin{proof}[Proof of Theorem \ref{thm:nonsmooth-loss-approx}]
Recall $V = \{i: \xv_i^\top \estim{\betav} \in K\}$ and $S=[1:n]\backslash V$.
Let $\Hv_h$ be the matrix in ALO for smooth loss and smooth regularizer when
using $\ell_h$. Let $\Lv_h=\diag[\{\ddot{\ell}_h(\xv_j^\top\estim{\betav}; y_j)\}_j]$,
$\Lv_S=\diag[\{\ddot{\ell}(\xv_j^\top\estim{\betav}; y_j)\}_{j \in S}]$.
$\Lv_{h, S}$ and $\Lv_{h, V}$ are similarly defined. Recall
\begin{equation*}
    \Hv_h = \Xv(\lambda \nabla^2 R + \Xv^\top \Lv_h\Xv)^{-1}\Xv^\top.
\end{equation*}

We then have
\begin{align*}
    &(\lambda \nabla^2 R + \Xv^\top \Lv_h \Xv)^{-1} \nonumber \\
    =&
    (\underbrace{\lambda \nabla^2 R + \Xv_{S,\cdot}^\top \Lv_{h,S} \Xv_{S,\cdot}}_{\Yv_h} + \Xv_{V,\cdot}^\top \Lv_{h,V} \Xv_{V,\cdot})^{-1} \nonumber \\
    =&
    \Yv_h^{-1} - \Yv_h^{-1}\Xv_{V,\cdot}^\top (\Lv_{h,V}^{-1} +
    \Xv_{V,\cdot}\Yv_h^{-1}\Xv_{V,\cdot}^\top )^{-1}\Xv_{V,\cdot} \Yv_h^{-1}.
\end{align*}

As a result, we have
\begin{align*}
    &(\lambda \nabla^2 R + \Xv^\top \Lv_h X)^{-1}\Xv_{V,\cdot}^\top \nonumber \\
    =&
    \Yv_h^{-1}\Xv_{V,\cdot}^\top - \Yv_h^{-1}\Xv_{V,\cdot}^\top (\Lv_{h,V}^{-1} + \Xv_{V,\cdot}\Yv_h^{-1}\Xv_{V,\cdot}^\top )^{-1}\Xv_{V,\cdot} \Yv_h^{-1}\Xv_{V,\cdot}^\top \nonumber \\
    =&
    \Yv_h^{-1}\Xv_{V,\cdot}^\top (\Iv_p - (\Lv_{h,V}^{-1} + \Xv_{V,\cdot}\Yv_h^{-1}\Xv_{V,\cdot}^\top )^{-1}\Xv_{V,\cdot} \Yv_h^{-1}\Xv_{V,\cdot}^\top ) \nonumber \\
    =&
    \Yv_h^{-1}\Xv_{V,\cdot}^\top (\Lv_{h,V}^{-1} +
    \Xv_{V,\cdot}\Yv_h^{-1}\Xv_{V,\cdot}^\top )^{-1}\Lv_{h,V}^{-1}.
\end{align*}

Similarly we can get
\begin{align*}
    \Xv_{V,\cdot}(\lambda \nabla^2 R + \Xv^\top \Lv_h \Xv)^{-1} =& \Lv_{h,V}^{-1} (\Lv_{h,V}^{-1} + \Xv_{V,\cdot}\Yv_h^{-1}\Xv_{V,\cdot}^\top )^{-1}\Xv_{V,\cdot} \Yv_h^{-1} \\
    \Xv_{V,\cdot}(\lambda \nabla^2 R + \Xv^\top \Lv_h
    \Xv)^{-1}\Xv_{V,\cdot}^\top =& \Lv_{h,V}^{-1} - \Lv_{h,V}^{-1}
    (\Lv_{h,V}^{-1} + \Xv_{V,\cdot}\Yv_h^{-1}\Xv_{V,\cdot}^\top)^{-1} \Lv_{h,V}^{-1}.
\end{align*}

By Lemma \ref{lemma:nonsmooth-loss-hessian-converge}, $\Yv_h \rightarrow \Yv :=
\lambda \nabla^2 R + \Xv_{S,\cdot}^\top \Lv_S \Xv_{S,\cdot}$, $\Lv_{h,V}^{-1} \rightarrow
\mathbf{0}$, we have
\begin{align*}
    \Hv_{h, S,S}\Lv_{h,S} \rightarrow& \Xv_{S,\cdot} (\Yv^{-1} -
    \Yv^{-1}\Xv_{V,\cdot}^\top (\Xv_{V,\cdot},
    \Yv^{-1}\Xv_{V,\cdot}^\top )^{-1}\Xv_{V,\cdot} \Yv^{-1})\Xv_{S,\cdot}^\top
    \Lv_S, \\
    \Hv_{h, S,V}\Lv_{h,V} \rightarrow& \Xv_{S,\cdot} \Yv^{-1}\Xv_{V,\cdot}^\top
    (\Xv_{V,\cdot}\Yv^{-1}\Xv_{V,\cdot}^\top )^{-1}, \\
    \Hv_{h, V,S}\Lv_{h,S} \rightarrow& \mathbf{0} \\
    \Hv_{h, V,V}\Lv_{h,V} \rightarrow& \Iv_V.
\end{align*}

This is not enough, however, noticing that in the final formula of the smooth
case, we need $\frac{H_{h, ii}}{1 - L_{h, ii}H_{h, ii}}$ but for $i\in V$,
$1 - L_{h,ii}H_{h,ii}\rightarrow 0$ and $H_{h, ii} \rightarrow 0$. So
further we have
\begin{align*}
    & \Lv_{h,V}(\Iv_V - \Hv_{h, VV}\Lv_{h,V}) \nonumber \\
    =&
    \Lv_{h,V}(\Iv_V - (\Lv_{h,V}^{-1} - \Lv_{h,V}^{-1} (\Lv_{h,V}^{-1} +
    \Xv_{V,\cdot}\Yv_h^{-1}\Xv_{V,\cdot}^\top )^{-1}\Lv_{h,V}^{-1}) \Lv_{h,V}) \nonumber \\
    =&
    (\Lv_{h,V}^{-1} + \Xv_{V,\cdot}\Yv_h^{-1}\Xv_{V,\cdot}^\top )^{-1} \nonumber \\
    \rightarrow&
    (\Xv_{V,\cdot}\Yv^{-1}\Xv_{V,\cdot}^\top )^{-1}.
\end{align*}

As a result, we have
\begin{equation*}
    \frac{H_{h,ii}}{1 - L_{h,ii}H_{h,ii}}
    \rightarrow
    \left\{
        \begin{array}{ll}
            \frac{\xv_i^\top (\Yv^{-1} - \Yv^{-1}\Xv_{V,\cdot}^\top (\Xv_{V,\cdot}
            \Yv^{-1}\Xv_{V,\cdot}^\top )^{-1}\Xv_{V,\cdot} \Yv^{-1})\xv_i}{1 - \xv_i (\Yv^{-1} -
            \Yv^{-1}\Xv_{V,\cdot}^\top (\Xv_{V,\cdot} \Yv^{-1}\Xv_{V,\cdot}^\top )^{-1}\Xv_{V,\cdot}
            \Yv^{-1})\xv_i \ddot{\ell}_i}, & i \in S, \\
            \frac{1}{[(\Xv_{V,\cdot}\Yv^{-1}\Xv_{V,\cdot}^\top )^{-1}]_{ii}}, &
            i \in V. \\
        \end{array}
    \right.
\end{equation*}

For $\dot{\ell}_h(\xv_i^\top\estim{\betav}_h; y_i)$, as $h\rightarrow 0$, Lemma
\ref{lemma:nonsmooth-loss-gradient-converge} implies the limit value the smooth
gradients would converge to. Notice that for $j \in V$, we solve for the
subgradient by applying least square formula to the 1st order optimality
equation. The final results easily follow.
\end{proof}

\section{Derivation of the Dual for Generalized LASSO}
\label{append:sec:generalized-lasso-dual}

In this section we derive the dual form of the generalized LASSO stated in
the main paper. We recall that for a given matrix $\Dv \in \RR^{m \times p}$,
the generalized LASSO is given by:
\begin{equation*}
    \min_{\betav} \frac{1}{2} \sum_{j = 1}^n (y_j - \xv_j^\top \betav)^2
    + \lambda \norm{\Dv \betav}_1.
\end{equation*}

Introduce dummy variables $\zv \in \RR^n$, $\wv \in \RR^m$, and consider the following
equivalent constrained optimization problem:
\begin{equation*}
    \begin{gathered}
    \min_{\betav, \zv, \wv} \frac{1}{2}\norm{\zv}_2^2 + \lambda \norm{\wv}_1, \\
    \text{subject to: } \yv - \Xv\betav = \zv \text{ and } \Dv \betav = \wv.
    \end{gathered}
\end{equation*}

We may now consider the Lagrangian form of the optimization problem, introducing dual variables
$\dualv \in \RR^n$ and $\uv \in \RR^m$, the dual problem is
\begin{align*}\label{eq:genlasso-dual:lagrangian}
    &
    \max_{\dualv, \uv} \min_{\betav, \zv, \wv} \frac{1}{2} \norm{\zv}_2^2 + \lambda \norm{\wv}_1
    + \dualv^\top(\yv - \Xv\betav - \zv) + \uv^\top(\Dv \betav - \wv) \nonumber \\
    =&
    - \min_{\dualv, \uv} \big[ \max_{\zv} \{\thetav^\top \zv
    - \frac{1}{2}\|\zv\|_2^2 \} + \max_{\wv} \{ \uv^\top \wv - \lambda
    \norm{\wv}_1 \} + \max_{\betav} \{\thetav^\top\Xv\betav -
\uv^\top\Dv\betav\} - \thetav^\top\yv \big].
\end{align*}

Consider the three subproblems within square brackets respectively, we have
\begin{equation*}
    \begin{gathered}
        \max_{\zv} \{\dualv^\top \zv - \frac{1}{2}\norm{\zv}_2^2 \} = \frac{1}{2} \norm{\dualv}_2^2, \\
        \max_{\wv} \{\uv^\top \wv - \lambda \norm{\wv}_1\} = \begin{cases}
            0 & \text{ if } \norm{\uv}_\infty \leq \lambda, \\
            \infty & \text{ otherwise. }
        \end{cases}
    \end{gathered}
\end{equation*}
where $ \thetav^\top\Xv\betav - \uv^\top\Dv\betav $ is unbounded unless
$\Xv^\top \dualv = \Dv^\top \uv$.
Finally, we substitute the above results into our Lagrangian dual problem to obtain:
\begin{equation*}
\begin{gathered}
    \min_{\dualv, \uv} \frac{1}{2} \norm{\dualv}_2^2 - \dualv^\top \yv, \\
    \text{subject to: }  \Dv^\top \uv = \Xv^\top \dualv \text{ and }
    \norm{\uv}_\infty \leq \lambda.
\end{gathered}
\end{equation*}
which is equivalent to the stated dual problem.

\section{Proof of Nuclear Norm ALO Formula}
\label{append:sec:nuclear-norm}

In this section, we prove Theorem \ref{thm:matrix-nonsmooth-alo}.
We consider the following matrix sensing formulation
\begin{equation*}\label{eq:append:matrix-smooth-main}
    \estim{\Bv} = \arg\min_{\Bv} \sum_{j=1}^n \ell(\langle \Xv_j,
    \Bv\rangle; y_j)^2 + \lambda R(\Bv).
\end{equation*}
where $R$ is a unitarily invariant function, which will be
explained and studied in more detail in Section
\ref{append:ssec:unitary-matrix-function}. This section is laid out as follows: in
Section \ref{append:ssec:unitary-matrix-function}, we briefly discuss basic properties
of unitarily invariant functions; In Section \ref{append:ssec:smooth-unitary} we do
ALO for smooth unitarily invariant penalties; In Section
\ref{append:ssec:nuclear-proof} we prove Theorem \ref{thm:matrix-nonsmooth-alo}
where nuclear norm is considered.

\subsection{Properties of Unitarily Invariant Functions}
\label{append:ssec:unitary-matrix-function}

Let $\Bv \in \RR^{p_1 \times p_2}$, and consider the SVD of $\Bv$ as $\Bv = \Uv
\diag[\sigmav] \Vv^\top$ with $\Uv \in \mathbb{R}^{p_1 \times p_1}$, $\Vv \in
\mathbb{R}^{p_2 \times p_2}$. We say that a function $R : \RR^{p_1 \times
p_2} \rightarrow \RR$ is unitarily invariant if there exists an absolutely
symmetric function $f : \RR^{\min(p_1, p_2)} \rightarrow \RR$ such that:
\begin{equation*}
    R(\Bv) = f(\sigmav),
\end{equation*}
where we say that $f: \RR^q \rightarrow \RR$ is absolutely symmetric if for
any $\xv \in \RR^q$, any permutation $\tau$ and signs $\bm{\epsilon} \in \{ -1, 1 \}^q$ we have:
\begin{equation*}
    f(x_1, \dotsc, x_q) = f(\epsilon_1 x_{\tau(1)}, \dotsc, \epsilon_q x_{\tau(q)}).
\end{equation*}

The properties of $R$ and $f$ are closely related, and in particular we will make
use of the following lemma relating their convexity, smoothness and derivatives, proved in
\cite{lewis1995convex}.

\begin{lemma}[\cite{lewis1995convex}]\label{lemma:unitary1}
    Let $R(\Bv) = f(\sigmav)$ with $\Bv=\Uv\diag[\sigmav] \Vv^\top$ its SVD.
    There is an one-to-one correspondence between unitarily
    invariant matrix functions $R$ and symmetric functions $f$. Furthermore the
    convexity and/or differentiability of $f$ are equivalent to the convexity
    and/or
    differentiability of $R$ respectively. If $R$ is differentiable, its
    derivative is given by:
    \begin{equation}\label{eq:matrix-differential}
        \nabla R(\Bv) = \Uv \diag[\nabla f(\sigmav)]\Vv^\top.
    \end{equation}

    When $f$ is not differentiable, a similar result holds with gradient
    replaced by subdifferentials.
    \begin{equation}\label{eq:matrix-subdifferential}
        \partial R(\Bv) = \Uv \diag[\partial f(\sigmav)]\Vv^\top.
    \end{equation}
\end{lemma}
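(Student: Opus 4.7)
The plan is to establish the four assertions in the order they appear: the bijection between $R$ and $f$, equivalence of convexity, equivalence of differentiability together with the formula \eqref{eq:matrix-differential}, and finally the subdifferential version \eqref{eq:matrix-subdifferential}. Throughout, the workhorse will be von Neumann's trace inequality $\langle \Av, \Bv \rangle \leq \sum_i \sigma_i(\Av)\sigma_i(\Bv)$, with equality iff $\Av$ and $\Bv$ admit a simultaneous SVD.

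For the bijection, given an absolutely symmetric $f$, I define $R(\Bv) := f(\sigma(\Bv))$ where $\sigma(\Bv)$ denotes the vector of singular values in decreasing order (padded to length $\min(p_1,p_2)$). Conversely, given unitarily invariant $R$, I define $f(\sigmav) := R(\diag[\sigmav])$, and verify absolute symmetry by conjugating with permutation matrices and diagonal sign matrices, both of which are orthogonal. Well-definedness of $R$ from $f$ requires noting that although the SVD itself is not unique, the unordered multiset of singular values is, and the absolute symmetry of $f$ kills any ordering ambiguity. The two constructions are inverse to each other by direct verification on diagonal matrices.

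For convexity, one direction is easy: $f$ is the restriction of $R$ to a linear subspace (diagonal matrices) and hence inherits convexity. The converse is the substantive direction. For $\Av,\Bv$ and $t\in[0,1]$, write $t\Av+(1-t)\Bv = \Uv\diag[\sigmav]\Vv^\top$ via SVD, then use $R(t\Av+(1-t)\Bv) = f(\sigmav) = \langle \sigmav, \nabla f(\sigmav)\rangle$-type identities (for the smooth case) or reduce to showing $f(\sigma(t\Av+(1-t)\Bv)) \leq t f(\sigma(\Av)) + (1-t) f(\sigma(\Bv))$. Applying von Neumann's inequality twice (once to $\langle t\Av+(1-t)\Bv, \Uv\Vv^\top\rangle$ expanded in two ways) reduces the question to convexity of $f$ on symmetric rearrangements, which follows from absolute symmetry.

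The main obstacle is the differentiability direction, since the SVD map itself is non-differentiable whenever singular values coincide, so a naive chain rule cannot be applied. The plan is to prove directly that for any perturbation $\Ev$,
\begin{equation*}
    R(\Bv + t\Ev) - R(\Bv) = t\langle \Uv\diag[\nabla f(\sigmav)]\Vv^\top, \Ev\rangle + o(t).
\end{equation*}
For the upper bound, apply von Neumann's inequality to express $f(\sigma(\Bv+t\Ev))$ in terms of $\langle \sigma(\Bv+t\Ev), \nabla f(\sigmav)\rangle$ via convexity of $f$, then use Lidskii-Weyl perturbation bounds to control $\sigma(\Bv+t\Ev) - \sigmav$. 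For the matching lower bound, exploit the equality case of von Neumann at $\Bv$ itself, which pins down the shared singular bases $\Uv,\Vv$. Combining yields the gradient formula. The subdifferential statement \eqref{eq:matrix-subdifferential} then follows by an analogous support-function argument: any $\Gv \in \partial R(\Bv)$ must satisfy $\langle \Gv,\Ev\rangle \leq R(\Bv+\Ev) - R(\Bv)$ for all $\Ev$, which via von Neumann's inequality forces $\Gv$ to share singular vectors with $\Bv$ and to have singular values in $\partial f(\sigmav)$; well-definedness of the right-hand side under ambiguities in the SVD follows because $\partial f(\sigmav)$ is invariant under the stabilizer subgroup of $\sigmav$ (permutations/sign-flips fixing $\sigmav$) by absolute symmetry of $f$.
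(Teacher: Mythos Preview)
The paper does not prove this lemma at all; it is stated as a citation of \cite{lewis1995convex} and used as a black box. So there is no ``paper's own proof'' to compare against, and your proposal is in effect a sketch of Lewis's original argument. Your choice of von Neumann's trace inequality as the central tool is exactly right and is indeed what drives the proof in \cite{lewis1995convex}.

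That said, your convexity sketch is not quite coherent. The identity $f(\sigmav) = \langle \sigmav, \nabla f(\sigmav)\rangle$ you allude to is generally false, and ``von Neumann twice'' on the convex combination does not by itself yield $f(\sigma(t\Av+(1-t)\Bv)) \leq t f(\sigma(\Av)) + (1-t) f(\sigma(\Bv))$, since the singular-value map is highly nonlinear. Lewis's actual route is cleaner: first prove the conjugate formula $R^*(\Yv) = f^*(\sigma(\Yv))$ directly from von Neumann (the supremum of $\langle \Bv, \Yv\rangle$ over $\Bv$ with prescribed singular values is attained when $\Bv$ shares singular vectors with $\Yv$), then obtain convexity via $R = R^{**} \Leftrightarrow f = f^{**}$. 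The subdifferential formula \eqref{eq:matrix-subdifferential} drops out of the same conjugate identity, and differentiability together with \eqref{eq:matrix-differential} follows by checking when $\partial R(\Bv)$ is a singleton. Your ``direct'' plan for differentiability via Lidskii--Weyl bounds can be made to work in the convex case, but note that as written it presupposes convexity of $f$, whereas the lemma asserts the equivalence and the gradient formula without that hypothesis; the conjugate-based route handles this more gracefully.
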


Based on this lemma, we know that as long as $f$ is convex and/or
smooth, the corresponding matrix function will be convex and/or smooth. This enables
us to produce convex and smooth unitarily invariant approximation to non-smooth
unitarily invariant matrix regularizers.

In addition to the gradient of the unitarily invariant matrix functions, we
also need their Hessians. We show this result in the following Theorem
\ref{thm:matrix-twice-diff} for a sub-class of unitarily invariant functions.
\begin{theorem}\label{thm:matrix-twice-diff}
    Consider a unitarily invariant function with form $R(\Bv) =
    \sum_{j=1}^{ \min(p_1, p_2)} f(\sigma_j)$, where $f$ is a
    smooth function on $\mathbb{R}$ and $\Bv=\Uv\diag[\sigmav]\Vv^\top$ is its
    SVD with $\Uv \in \mathbb{R}^{p_1 \times p_1}$, $\Vv \in \mathbb{R}^{p_2
    \times p_2}$. Further assume that all the $\sigma_j$'s
    are different from each other and nonzero. Let $p_3 = \min(p_1, p_2)$, $p_4
    = \max(p_1, p_2)$. Then the Hessian matrix $\nabla^2 R(\Bv) \in
    \mathbb{R}^{p_1p_2 \times p_1p_2}$ takes the following form
    \begin{equation}\label{eq:matrix-hessian-general}
        \nabla^2 R(\Bv)
        =
        \Qv
        \Bigg[\begin{array}{ccc}
                A_1 & 0 & 0 \\
                0 & A_2 & 0 \\
                0 & 0 & A_3
        \end{array}\Bigg]
        \Qv^\top,
    \end{equation}
    where the first block $A_1 \in \mathbb{R}^{p_3 \times p_3}$. $A_1$ is
    diagonal with $A_{1, (ss, ss)} = f''(\sigma_s)$, $1\leq s\leq p_3$. The
    second block $A_2 \in \mathbb{R}^{p_3(p_3 - 1) \times p_3(p_3 - 1)}$. For
    $1 \leq s\neq t \leq p_3$, $A_{2,(st, st)} = A_{2,(ts, ts)} =
    \frac{\sigma_s f'(\sigma_s) - \sigma_t f'(\sigma_t)}{\sigma_s^2 -
    \sigma_t^2}$, $A_{2, (st, ts)} = A_{2, (ts, st)} = - \frac{\sigma_s
    f'(\sigma_t) - \sigma_t f'(\sigma_s)}{\sigma_s^2 - \sigma_t^2}$; The third
    block $A_3 \in \mathbb{R}^{(p_4 - p_3)p_3 \times (p_4 - p_3)p_3}$; $A_{3,
    (st, st)} = \frac{f'(\sigma_t)}{\sigma_t}$ for $1 \leq t \leq p_3 < s \leq
    p_4$. Except for these specified locations, all other components of $A_1,
    A_2, A_3$ are zero. $\Qv$ is an orthogonal matrix with $\Qv_{\cdot, st} =
    \vecop(\uv_s\vv_t^\top)$ where $\uv_s$, $\vv_t$ are the $s$\tsup{th}
    column of $\Uv$ and $t$\tsup{th} column of $\Vv$ respectively. $\vecop(\cdot)$
    denotes the vectorization operator, which aligns all the components of a
    matrix into a long vector.
\end{theorem}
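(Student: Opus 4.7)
The plan is to exploit the unitary invariance of $R$ to reduce the computation to the case of a diagonal base point, and then to carry out a second-order perturbation expansion of the singular values. Let $\estim{\Bv}=\estim{\Uv}\diag[\estim{\sigmav}]\estim{\Vv}^\top$ be the SVD. The map $\Phi:\Bv\mapsto\estim{\Uv}^\top\Bv\estim{\Vv}$ is an isometry of $\mathbb{R}^{p_1\times p_2}$ under the Frobenius inner product, and unitary invariance of $R$ gives $R=R\circ\Phi$. Under $\Phi$, the base point $\estim{\Bv}$ is sent to $\diag[\estim{\sigmav}]$, and the Frobenius-orthonormal basis $\{\estim{\uv}_s\estim{\vv}_t^\top\}$ is sent to the standard basis $\{\ev_s\ev_t^\top\}$. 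Since $\Qv$ is exactly the vectorized change-of-basis matrix between these two bases, the theorem reduces to showing that the Hessian of $R$ at the diagonal point $\diag[\estim{\sigmav}]$, expressed in the vectorized standard basis, has the claimed block-diagonal form with the stated diagonal and off-diagonal entries.

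At the diagonal base point, the second-order dependence of each $\sigma_j(\cdot)$ on a perturbation $\Ev$ breaks into three essentially disjoint cases. (i) A pure diagonal perturbation $E_{ss}$ leaves $\diag[\estim{\sigmav}]+\Ev$ diagonal, so $\sigma_s$ depends linearly on $E_{ss}$ with no second-order contribution. (ii) For distinct $s,t\le p_3$, the entries $E_{st},E_{ts}$ couple only with $\estim{\sigma}_s$ and $\estim{\sigma}_t$ via the $2\times 2$ submatrix indexed by rows and columns $\{s,t\}$, and the relevant pair of singular values can be read off from the eigenvalues of the corresponding $2\times 2$ Gram matrix; the remaining singular values are unchanged to second order. (iii) For $s>p_3$, $t\le p_3$ (possible only when $p_1>p_2$), the entry $E_{st}$ perturbs only the $t$-th column, whose norm becomes $\sqrt{\estim{\sigma}_t^2+E_{st}^2}$, while every other singular value is unchanged. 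In each case, the second-order coupling between different groups vanishes, which produces the block structure.

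Expanding $f$ to second order and collecting coefficients gives the entries of each block. Case (i) yields $A_{1,(ss,ss)}=f''(\estim{\sigma}_s)$ directly. In case (ii), explicit diagonalization of the $2\times 2$ Gram matrix gives
\[
\sigma_s(\Ev)\approx\estim{\sigma}_s+\frac{E_{ts}^2}{2\estim{\sigma}_s}+\frac{(\estim{\sigma}_sE_{st}+\estim{\sigma}_tE_{ts})^2}{2\estim{\sigma}_s(\estim{\sigma}_s^2-\estim{\sigma}_t^2)},
\]
and the analogous expression for $\sigma_t(\Ev)$ obtained by swapping $s$ and $t$. Summing $f(\sigma_s(\Ev))+f(\sigma_t(\Ev))$ and collecting coefficients of $E_{st}^2$, $E_{ts}^2$, and $E_{st}E_{ts}$ reproduces the four entries of $A_2$ after a short algebraic simplification. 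Case (iii) gives $\sigma_t(\Ev)\approx\estim{\sigma}_t+E_{st}^2/(2\estim{\sigma}_t)$, hence $A_{3,(st,st)}=f'(\estim{\sigma}_t)/\estim{\sigma}_t$.

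The main obstacle is the bookkeeping in case (ii): one must verify that the $(E_{st},E_{ts})$ pair couples only internally with no spurious cross-terms to other indices, and that the coefficients combine into both the diagonal entry $(\estim{\sigma}_sf'(\estim{\sigma}_s)-\estim{\sigma}_tf'(\estim{\sigma}_t))/(\estim{\sigma}_s^2-\estim{\sigma}_t^2)$ and the correctly signed off-diagonal entry $-(\estim{\sigma}_sf'(\estim{\sigma}_t)-\estim{\sigma}_tf'(\estim{\sigma}_s))/(\estim{\sigma}_s^2-\estim{\sigma}_t^2)$. The distinctness and positivity assumptions on $\estim{\sigmav}$ are essential at this step so that the $2\times 2$ Gram matrix has simple positive eigenvalues and its square roots are analytic in $\Ev$. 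A secondary technical point is the asymmetry introduced by the non-square case $p_1>p_2$, which gives the qualitatively different one-sided form of $A_3$; this is morally a degenerate version of (ii) and handled by the same Gram-matrix calculation. After these checks, the theorem follows by direct algebraic verification.
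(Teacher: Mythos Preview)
Your proposal is correct and takes a genuinely different route from the paper. The paper's proof invokes the differential SVD formulas of \cite{candes2013unbiased}: it writes $\nabla R(\Bv)=\Uv\diag[f'(\sigmav)]\Vv^\top$, differentiates this along a direction $\Deltav$ via the chain rule using the known expressions for $d\Uv[\Deltav]$, $d\Vv[\Deltav]$, $d\sigma_j[\Deltav]$, and then reads off the Hessian entries as $\langle \uv_k\vv_l^\top, d\nabla R(\Bv)[\uv_s\vv_t^\top]\rangle$. You instead use the unitary invariance to transport the problem to the diagonal base point $\diag[\estim{\sigmav}]$ and compute the second-order expansion of $R$ directly, via eigenvalue perturbation of small Gram submatrices. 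Your approach is more elementary in that it avoids the SVD differential calculus altogether and makes the block decoupling visible as a structural fact about perturbations of a diagonal matrix; the paper's approach is more mechanical once the imported formulas are in hand, and it computes the Hessian of $R$ as the derivative of $\nabla R$ rather than as second-order coefficients of $R$ itself. Both arrive at the same entries after essentially the same algebra in the $2\times 2$ sector. One point worth tightening in your write-up: the assertion that the three sectors decouple (no cross-terms between diagonal directions, the $(s,t)/(t,s)$ pairs, and the $s>p_3$ directions) is the content of the block structure and deserves an explicit check beyond the phrase ``one must verify''; the eigenvalue perturbation calculation you set up handles it, but you should spell out at least one representative cross-case to show the mixed second derivative vanishes.
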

\begin{remark}
    Since here we are talking about the Hessian matrix of functions on matrix
    space, we linearize these matrices and treat them as vectors. It would be
    helpful if we visualize the correspondence between each blocks in
    \eqref{eq:matrix-hessian-general} and the component indices in the original
    matrix $\Bv$. Specifically we have Figure \ref{fig:matrix-general}.
    \begin{figure}[!htb]
        \begin{center}
            \begin{tikzpicture}[scale=1.0]
                \path [fill=cyan] (0, 0) rectangle (2, 1);
                \path [fill=green] (0, 1) rectangle (2, 3);
                \path [fill=orange] (0, 3) -- (0.1, 3) -- (2, 1.1) -- (2, 1) --
                (1.9, 1) -- (0, 2.9) -- (0, 3);
                \draw (0, 0) rectangle (2, 3);
                \draw [dashed, thick] (0, 1) -- (2, 1);
                \draw [dashed, thick] (0, 3) -- (2, 1);
                
                \draw [fill] (0.75, 2.25) circle [radius=0.04];
                \node [right] at (0.75, 2.25) {\tiny $(s_1, s_1)$};
                \draw [fill] (0.3, 2) circle [radius=0.04];
                \node [below] at (0.4, 2) {\tiny $(s_2, t_2)$};
                \draw [fill] (1, 2.7) circle [radius=0.04];
                \node [right] at (1, 2.7) {\tiny $(t_2, s_2)$};
                \draw [fill] (1.5, 0.5) circle [radius=0.04];
                \node [left] at (1.5, 0.5) {\tiny $(s_3, t_3)$};
                
                \draw [ultra thick, ->] (2.5, 1.5) -- (3.8, 1.5);
                
                \path [fill=lightgray] (5, -0.5) rectangle (9, 3.5);
                \path [fill=cyan] (8, 0.5) rectangle (9, -0.5);
                \path [fill=green] (6, 2.5) rectangle (8, 0.5);
                \path [fill=orange] (5, 3.5) rectangle (6, 2.5);
                \draw [dotted, thick] (5, 2.5) -- (8, 2.5);
                \draw [dotted, thick] (6, 0.5) -- (9, 0.5);
                \draw [dotted, thick] (6, 0.5) -- (6, 3.5);
                \draw [dotted, thick] (8, -0.5) -- (8, 2.5);
                \draw [dashed, thick] (5, 3.5) -- (9, -0.5);
                \draw (5, -0.5) rectangle (9, 3.5);
                
                \draw [fill] (5.3, 3.2) circle [radius=0.04];
                \draw [dotted, thick] (5, 3.2) -- (5.3, 3.2);
                \draw [dotted, thick] (5.3, 3.5) -- (5.3, 3.2);
                \node [left] at (5, 3.2) {\tiny $(s_1, s_1)$};
                \node [right, rotate=90] at (5.3, 3.5) {\tiny $(s_1, s_1)$};
                
                \draw [fill] (6.5, 2.0) circle [radius=0.04];
                \draw [fill] (7.4, 1.1) circle [radius=0.04];
                \draw [fill] (7.4, 2.0) circle [radius=0.04];
                \draw [fill] (6.5, 1.1) circle [radius=0.04];
                \draw [dotted, thick] (5, 2.0) -- (7.4, 2.0);
                \draw [dotted, thick] (5, 1.1) -- (7.4, 1.1);
                \draw [dotted, thick] (6.5, 3.5) -- (6.5, 1.1);
                \draw [dotted, thick] (7.4, 3.5) -- (7.4, 1.1);
                \node [left] at (5, 2.0) {\tiny $(s_2, t_2)$};
                \node [right, rotate=90] at (6.5, 3.5) {\tiny $(s_2, t_2)$};
                \node [left] at (5, 1.1) {\tiny $(t_2, s_2)$};
                \node [right, rotate=90] at (7.4, 3.5) {\tiny $(t_2, s_2)$};
                
                \draw [fill] (8.6, -0.1) circle [radius=0.04];
                \draw [dotted, thick] (5, -0.1) -- (8.6, -0.1);
                \draw [dotted, thick] (8.6, 3.5) -- (8.6, -0.1);
                \node [left] at (5, -0.1) {\tiny $(s_3, t_3)$};
                \node [right, rotate=90] at (8.6, 3.5) {\tiny $(s_3, t_3)$};
                
                \node [below] at (5.3, 3.2) {\scriptsize $a$};
                \node [below left] at (6.5, 2.0) {\scriptsize $b$};
                \node [right] at (7.4, 1.1) {\scriptsize $b$};
                \node [right] at (7.4, 2.0) {\scriptsize $c$};
                \node [below] at (6.5, 1.1) {\scriptsize $c$};
                \node [below] at (8.6, -0.1) {\scriptsize $d$};

                \draw [thin, ->] (9.6, 3) -- (6.1, 3);
                \draw [thin, ->] (9.6, 1.5) -- (8.1, 1.5);
                \draw [thin, ->] (9.6, 0) -- (9.1, 0);
                \node [right] at (9.6, 3) {$A_1$};
                \node [right] at (9.6, 1.5) {$A_2$};
                \node [right] at (9.6, 0) {$A_3$};

                \node [below] at (1, -0.5) {$\Uv^\top\Bv\Vv=\diag[\sigmav]$};
                \node [below] at (7, -0.5) {$\Qv^\top\nabla^2 R(\Bv)\Qv$};
            \end{tikzpicture}
        \end{center}
        \caption{An illustration of the correspondence between the structure of
            the original matrix and the structure of the Hessian matrix of $R$.
            As we have mentioned in Theorem \ref{thm:matrix-twice-diff},
            $a=f''(\sigma_{s_1})$,
            $b = \frac{\sigma_{s_2} f'(\sigma_{s_2}) - \sigma_{t_2}
            f'(\sigma_{t_2}) } {\sigma_{s_2}^2 - \sigma_{t_2}^2 }$,
            $c = - \frac{\sigma_{s_2} f'(\sigma_{t_2}) - \sigma_{t_2}
            f'(\sigma_{s_2}) } {\sigma_{s_2}^2 - \sigma_{t_2}^2}$; $d =
            \frac{f'(\sigma_{t_3})}{\sigma_{t_3}}$.} \label{fig:matrix-general}
    \end{figure}
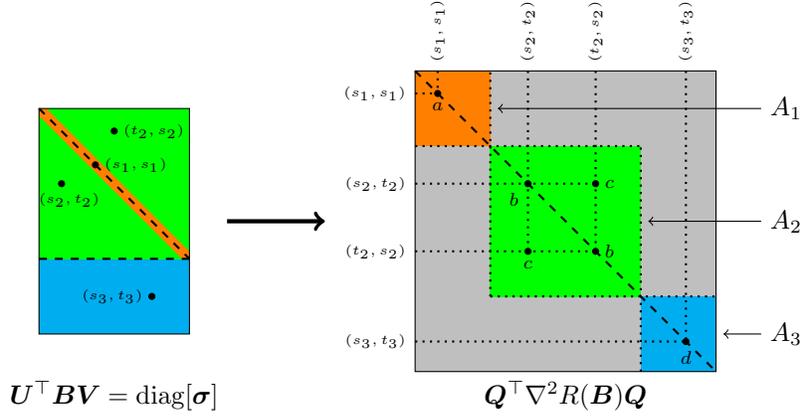
\end{remark}

\begin{proof}
    First by Lemma \ref{lemma:unitary1}, the gradient $\nabla R(\Bv)$ takes the
    following form
    \begin{equation*}
        \nabla R(\Bv) = \Uv \diag[\{f'(\sigma_j)\}_{j}] \Vv^\top.
    \end{equation*}
    
    In order to find the differential of $\nabla R(\Bv)$, we use the similar
    techniques and notations described in Lemma IV.2 and Theorem IV.3 in
    \cite{candes2013unbiased}. To simplify our derivation, we assume $p_1 \geq
    p_2$. This does not affect the correctness of our final conclusion.

    We characterize the differential of the gradient as a linear form.
    Specifically, along a certain direction $\Deltav \in \mathbb{R}^{p_1 \times
    p_2}$, by Lemma IV.2 in
    \cite{candes2013unbiased}, we have
    \begin{equation}\label{eq:matrix-svd-differential}
        d\Uv[\Deltav] = \Uv\Omegav_{\Uv}[\Deltav]
        ,\quad
        d\Vv[\Deltav] = \Vv\Omegav_{\Vv}[\Deltav]^{\top}
        ,\quad
        d\sigma_s[\Deltav] = [\Uv^\top\Deltav\Vv]_{ss}.
    \end{equation}
    where $\Omegav_{\Uv}$ and $\Omegav_{\Vv}$ are assymmetric matrices (thus
    their diagonal values are 0) which can be found by solving the following
    equation systems:
    \begin{equation}\label{eq:matrix-svd-differential2}
        \left[
            \begin{array}{c}
                \Omegav_{\Uv, st}[\Delta] \\
                \Omegav_{\Vv, st}[\Delta]
            \end{array}
        \right]
        =
        -\frac{1}{\sigma_s^2 - \sigma_t^2}
        \left[
            \begin{array}{cc}
                \sigma_t & \sigma_s \\
                - \sigma_s & - \sigma_t
            \end{array}
        \right]
        \left[
            \begin{array}{c}
                (\Uv^\top \Deltav \Vv)_{st} \\
                (\Uv^\top \Deltav \Vv)_{ts}
            \end{array}
        \right]
        , \quad
        \text{if } s \neq t, s \leq p_2,
    \end{equation}
    and
    \begin{equation}\label{eq:matrix-svd-differential3}
        \Omegav_{\Uv, st}[\Delta]
        =
        \frac{(\Uv^\top\Deltav\Vv)_{st}}{\sigma_t}
        , \quad
        \text{if } s \neq t, s > p_2.
    \end{equation}

    The differential of $\nabla R(\Bv)$ along a certian direction $\Deltav$ can
    then be calculated through the chain rule as that
    \begin{align}\label{eq:matrix-reg-differential}
        &d\nabla R(\Bv)[\Deltav] \nonumber \\
        =&
        d\Uv[\Deltav]\diag[\{f'(\sigma_j)\}_{j}]\Vv^\top
        +
        \Uv\diag[\{f''(\sigma_j)d\sigma_j[\Deltav]\}_{j}]\Vv^\top
        +
        \Uv\diag[\{f'(\sigma_j)\}_{j}]d\Vv[\Deltav]^\top \nonumber \\
        =&
        \Uv\big(\Omegav_{\Uv}[\Deltav]\diag[\{f'(\sigma_j)\}_{j}]
        +
        \diag[\{f''(\sigma_j)d\sigma_j[\Deltav]\}_{j}]
        +
        \diag[\{f'(\sigma_j)\}_{j}]\Omegav_{\Vv}[\Deltav] \big)\Vv^\top.
    \end{align}

    In the original formula obtained from the primal approach, the Hessian is
    calculated under the canonical bases
    \footnote{$\Ev_{st}$ is defined as a $p_1 \times p_2$ matrix with all of
    its components being 0 except the $(s, t)$ location being 1.}
    $\{\Ev_{st}\}_{s, t}$. In order to
    simplify the calculation of the Hessian, we instead use the orthonormal bases
    $\{\uv_s\vv_t^\top\}_{s,t}$, and then transform back to
    $\{\Ev_{st}\}_{s, t}$.

    The $(kl, st)$ location of the Hessian matrix under $\{\uv_s\vv_t\}_{s, t}$
    bases can be calculated by
    \begin{equation}\label{eq:hess-component}
        \langle \uv_k\vv_l^\top, d\nabla R(\Bv)[\uv_s\vv_t^\top] \rangle.
    \end{equation}

    Pluggin equation \eqref{eq:matrix-reg-differential}
    into \eqref{eq:hess-component} we obtain that
    \begin{align*}
        & \langle \uv_k\vv_l, d\nabla R(\Bv)[\uv_s\vv_t^\top] \rangle \nonumber
        \\
        =&
        \langle \Ev_{kl},
        \Omegav_{\Uv}[\uv_s\vv_t^\top]\diag[\{f'(\sigma_j)\}_j]
        +
        \diag[\{f''(\sigma_j)d\sigma_j[\uv_s\vv_t^\top]\}_{j}]
        +
        \diag[\{f'(\sigma_j)\}_j]\Omegav_{\Vv}[\uv_s\vv_t^\top] \rangle \nonumber \\
        =&
        \left\{
            \begin{array}{ll}
                f''(\sigma_t)d\sigma_t[\uv_t\vv_t^\top], & s = t = k = l,
                \\
                \Omegav_{\Uv, kl}[\uv_s\vv_t^\top]f'(\sigma_l)
                +
                f'(\sigma_k)\Omegav_{\Vv, kl}[\uv_s\vv_t^\top], & k \neq l, k
                \leq p_2, \\
                \Omegav_{\Uv, kl}[\uv_s\vv_t^\top]f'(\sigma_l), & 1\leq l \leq
                p_2 < k \leq p_1.
            \end{array}
        \right.
    \end{align*}

    By \eqref{eq:matrix-svd-differential}, we have $d\sigma_j[\uv_s\vv_t^\top]
    = [\Ev_{st}]_{jj} = \delta_{sj}\delta_{tj}$. In addition, $(\Uv^\top
    \uv_s\vv_t^\top\Vv^\top)_{kl} = (\Ev_{st})_{kl} = \delta_{sk}\delta_{tl}$,
    $(\Uv^\top \uv_s\vv_t^\top\Vv^\top)_{lk} = (\Ev_{st})_{lk} =
    \delta_{sl}\delta_{tk}$. Hence by \eqref{eq:matrix-svd-differential2} and
    \eqref{eq:matrix-svd-differential3}, we have that
    \begin{equation*}
        \Omegav_{\Uv, kl}[\uv_s\vv_t^\top]
        =
        - \frac{\delta_{sk}\delta_{tl}\sigma_l +
        \delta_{sl}\delta_{tk}\sigma_k}{\sigma_k^2 - \sigma_l^2}
        ,\quad
        \Omegav_{\Vv, kl}[\uv_s\vv_t^\top]
        =
        \frac{\delta_{sk}\delta_{tl}\sigma_k +
        \delta_{sl}\delta_{tk}\sigma_l}{\sigma_k^2 - \sigma_l^2}
        , \quad
        \text{if } s \neq t, s \leq p_2,
    \end{equation*}
    and
    \begin{equation*}
        \Omegav_{\Uv, kl}[\uv_s\vv_t^\top]
        =
        \frac{\delta_{sk}\delta_{tl}}{\sigma_l}
        , \quad
        \text{if } s \neq t, s > p_2.
    \end{equation*}

    Based on all these, we can obtain that
    \begin{equation*}
        \langle \uv_k\vv_l, d\nabla R(\Bv)[\uv_s\vv_t^\top] \rangle
        =
        \left\{
            \begin{array}{ll}
                f''(\sigma_t), & s = t = k = l, \\
                \frac{\sigma_s f'(\sigma_s) - \sigma_tf'(\sigma_t)}{\sigma_s^2
                - \sigma_t^2}, & s \neq t, s \leq p_2, (k, l) = (s, t), \\
                - \frac{\sigma_s f'(\sigma_t) - \sigma_tf'(\sigma_s)}{\sigma_s^2
                - \sigma_t^2}, & s \neq t, s \leq p_2, (k, l) = (t, s), \\
                \frac{f'(\sigma_t)}{\sigma_t}, & s \neq j, s > p_2,
                    (k,l)=(s,t), \\
                0, & \text{otherwise.}
            \end{array}
        \right.
    \end{equation*}

    Notice that we obtained the above expressions under the orthonormal bases
    $\{\uv_s\vv_t^\top\}_{s, t}$. In order to get the Hessian form under the
    canonical bases $\{\Ev_{st}\}_{s, t}$, let $\Qv \in \mathbb{R}^{p_1p_2 \times
    p_1p_2}$, with each column $\Qv_{\cdot, st} = \vecop(\uv_s\vv_t^\top)$.
    Denote the matrix form under the canonical bases by $\nabla^2 R(\Bv)$ and that
    under $\{\uv_s\vv_t^\top\}_{s,t}$ by $\widetilde{\nabla^2 R(\Bv)}$. We then have
    that
    \begin{equation*}
        \nabla^2 R(\Bv)
        =
        \Qv \widetilde{\nabla^2 R(\Bv)} \Qv^\top.
    \end{equation*}
    
    This completes our proof.
\end{proof}

\subsection{ALO for Smooth Unitarily Invariant Penalties}
\label{append:ssec:smooth-unitary}

In this following two sections, we discuss ALO formula for unitarily invariant
regularizer $R$ of the form:
\begin{equation*}\label{eq:separable-matrix-norm}
    R(\Bv) = \sum_{j=1}^{\min(p_1, p_2)} r(\sigma_j),
\end{equation*}
where $r$ is a convex and even scalar function. The nuclear norm, Frobenius and numerous
other matrix norms all fall in this category. For this section, we consider $r$
as a smooth function. In the next section, we consider
the case of the nuclear norm where $r$ is nonsmooth.

Consider the matrix regression problem:
\begin{equation*}
    \estim{\Bv}
    =
    \arg\min_\Bv \sum_{j=1}^n \ell(\langle \Xv_j, \Bv \rangle; y_j) + \lambda
    R(\Bv).
\end{equation*}

Let $\estim{\Bv}=\estim{\Uv}\diag[\estim{\sigmav}]\estim{\Vv}^\top$.
By pluggin the Hessian form we obtained in Theorem \ref{thm:matrix-twice-diff}
into \eqref{eq:primal-alo-smooth}, \eqref{eq:smooth-H},
we have the following ALO formula
\begin{equation}\label{eq:matrix-general-alo}
    \langle \Xv_i, \surrogi{\Bv} \rangle
    =
    \langle \Xv_i, \estim{\Bv} \rangle
    +
    \frac{H_{ii}}{1 - H_{ii}\ddot{\ell}(\langle \Xv_i, \estim{\Bv}
    \rangle; y_i)}\dot{\ell}(\langle \Xv_i, \estim{\Bv}\rangle; y_i).
\end{equation}
where
\begin{equation*}
    \Hv
    :=
    \cb{\tilde{X}}\Big[\cb{\tilde{X}}^\top \diag[\ddot{\ell}(\langle \Xv_j,
    \estim{\Bv}\rangle; y_j)]\cb{\tilde{X}} + \lambda \Qv \cb{G} \Qv^\top\Big]^{-1}
    \cb{\tilde{X}}^\top,
\end{equation*}
with the matrix $\cb{\tilde{X}} \in \mathbb{R}^{n \times p_1p_2}$, $\cb{G} \in
\mathbb{R}^{p_1p_2 \times p_1p_2}$. Each row $\cb{\tilde{X}}_{j,\cdot} =
\vecop(\Xv_j)$. $\cb{G}$ is defined by
\begin{equation}\label{eq:matrix-reg-formula-G}
    \cb{G}_{kl, st}
    =
    \left\{
        \begin{array}{ll}
            r''(\estim{\sigma}_t), & s = t = k = l, \\
            \frac{\estim{\sigma}_s r'(\estim{\sigma}_s) - \estim{\sigma}_tr'(\estim{\sigma}_t)}
            {\estim{\sigma}_s^2 - \estim{\sigma}_t^2},
            & i \neq t, s \leq p_2, (k, l) = (s, t), \\
            - \frac{\estim{\sigma}_s r'(\estim{\sigma}_t) - \estim{\sigma}_tr'(\estim{\sigma}_s)}
            {\estim{\sigma}_s^2 - \estim{\sigma}_t^2},
            & s \neq t, s \leq p_2, (k, l) = (t, s), \\
            \frac{r'(\estim{\sigma}_t)}{\estim{\sigma}_t},
            & s \neq t, s > p_2, (k,l)=(s,t), \\
            0, & \text{otherwise.}
        \end{array}
    \right.
\end{equation}

Notice that $[\cb{\tilde{X}}\Qv]_{j, st} = \langle \Xv_j,
\estim{\uv}_s\estim{\vv}_t^\top \rangle = \estim{\uv}_s^\top \Xv_j
\estim{\vv}_t$, we have $[\cb{\tilde{X}}\Qv]_{j,\cdot} =
\vecop(\estim{\Uv}^\top\Xv_j\estim{\Vv})$. Let $\cb{X}=\cb{\tilde{X}}\Qv$. This
gives us the following nicer form of the $\Hv$ matrix:
\begin{equation*}
    \Hv
    :=
    \cb{X}\Big[\cb{X}^\top \diag[\ddot{\ell}(\langle \Xv_j,
    \estim{\Bv}\rangle; y_j)]\cb{X} + \lambda \cb{G} \Big]^{-1}
    \cb{X}^\top.
\end{equation*}

\subsection{Proof of Theorem \ref{thm:matrix-nonsmooth-alo}:
ALO for Nuclear Norm}
\label{append:ssec:nuclear-proof}

For the nuclear norm, we have:
\begin{equation*}
    \ell(u; y) = \frac{1}{2}(u - y)^2
    ,\quad
    R(\Bv) = \sum_{j=1}^{\min(p_1, p_2)} \sigma_j.
\end{equation*}

Let $P(\Bv) = \frac{1}{2}\sum_{j=1}^n (y_j - \langle \Xv_j, \Bv \rangle)^2 +
\lambda\|\Bv\|_*$ denote the primal objective. For the full data optimizer
$\estim{\Bv}$ with SVD $\estim{\Bv} = \estim{\Uv} \diag[\estim{\sigmav}]
\estim{\Vv}$, let $m=\rank(\estim{\Bv})$, the number of nonzero
$\estim{\sigma}_j$'s. Furthermore, suppose that we have
the following assumption on the full data solution $\estim{\Bv}$.

\begin{assumption}
    Let $\estim{\Bv}$ be the full-data minimizer, and let
    $\estim{\Bv} = \estim{\Uv} \diag[\estim{\sigmav}] \estim{\Vv}^\top$
    be its SVD.
    \begin{enumerate}
    \item
        $\estim{\Bv}$ is the unique optimizer of the nuclear norm minimization
        problem,

    \item
        For all $j$ such that $\estim{\sigma}_j = 0$, the subgradient
        $g_r[\estim{\sigma}_j]$ at $\estim{\sigma}_j$ satisfies
        $g_r[\estim{\sigma}_j] < 1$.
    \end{enumerate}
\end{assumption}

Since the nuclear norm is nonsmooth, we consider a smoothed version of it. For a matrix and
its SVD $\Bv = \Uv\diag[\sigmav]\Vv^\top$, and a smoothing parameter $\epsilon > 0$,
 define the following smoothed version of nuclear norm as
\begin{equation*}
    R_\epsilon(\Bv) = \sum_{j=1}^{\min(p_1, p_2)} r_{\epsilon}(\sigma_j), \text{ where }
    r_{\epsilon}(x) = \sqrt{x^2 + \epsilon^2}.
\end{equation*}

Let $P_\epsilon(\Bv) = \frac{1}{2}\sum_{j=1}^n (y_j - \langle \Xv_j,
\Bv \rangle)^2 + \lambda R_\epsilon(\Bv)$ denote the smoothed primal objective, and
let $\estim{\Bv}_\epsilon$ be the minimizer of $P_\epsilon$.
Note that instead of using the general kernel smoothing strategy we mentioned
in the previous section, in this specific case we consider this
choice $R_\epsilon$ for technical convenience. There are no essential
differences between the two smoothing schemes. Finally, let $r(x) = \abs{x}$

Lemma \ref{lemma:unitary1} guarantees the smoothness and convexity of the function
$R_\epsilon$. Additionally, $r_\epsilon$ verifies several desirable properties:
\begin{enumerate}
    \item
        $\dot{r}_\epsilon(x) = \frac{x}{\sqrt{x^2 + \epsilon^2}}$,
        $\ddot{r}_\epsilon(x) = \frac{\epsilon^2}{(x^2 +
        \epsilon^2)^{\frac{3}{2}}}$;
    \item
        $r(x) < r_\epsilon(x) < r(x) + \epsilon$. 
\end{enumerate}

In particular, we note that the second property implies that
$\sup_x |r(x) - r_\epsilon(x)| \leq \epsilon$ and that
$\sup_{\Bv} |R(\Bv) - R_\epsilon(\Bv)| \leq \epsilon \min(p_1, p_2)$.

We now go through a similar strategy as in Appendix
\ref{append:ssec:primal-nonsmooth-reg} to consider the limit case as $\epsilon
\rightarrow 0$.

\paragraph{Convergence of the optimizer ($\estim{\Bv}_\epsilon \rightarrow \estim{\Bv}$)}
By definition of $\estim{\Bv}$ as the minimizer of the primal objective, we have that:
\begin{equation*}
\lambda \|\estim{\Bv}\|_*
\leq \frac{1}{2}\sum_j (y_j - \langle \Xv_j, \estim{\Bv} \rangle)^2 + \lambda\|\estim{\Bv}\|_*
\leq \frac{1}{2}\|\yv\|_2^2.
\end{equation*}

Similarly, we have that $\estim{\Bv}_\epsilon$ verifies:
\begin{align*}
    \lambda\norm{\estim{\Bv}_\epsilon}_*
    &\leq \lambda R(\estim{\Bv}_\epsilon)
    \leq \lambda R_\epsilon(\estim{\Bv}_\epsilon) + \lambda\epsilon\min(p_1,
    p_2)\nonumber \\
&\leq \frac{1}{2}\sum_j (y_j - \langle \Xv_j, \estim{\Bv}_\epsilon \rangle)^2
    + \lambda R_\epsilon(\estim{\Bv}_\epsilon) + \lambda \epsilon \min(p_1, p_2) \nonumber \\
&\leq \frac{1}{2}\|\yv\|_2^2 + \lambda \epsilon \min(p_1, p_2).
\end{align*}

Thus, for all $\epsilon \leq 1$ both $\estim{\Bv}$ and $\estim{\Bv}_\epsilon$ are contained
in a compact set given by $\lambda\|\Bv\|_* \leq \frac{1}{2}\|\yv\|_2^2 +
\lambda\min(p_1, p_2)$.

In particular, any subsequence of $\estim{\Bv}_\epsilon$ contains a convergent
sub-subsequence, let us abuse notations and still use $\estim{\Bv}_\epsilon$ for
this convergent sub-subsequence. The uniform bound between $R$ and $R_\epsilon$
implies that:
\begin{equation*}
    P(\lim_{\epsilon\rightarrow 0}\estim{\Bv}_\epsilon)
    =\lim_{\epsilon\rightarrow 0}P(\estim{\Bv}_\epsilon)
    =\lim_{\epsilon \rightarrow 0} P_\epsilon(\estim{\Bv}_\epsilon)
    \leq \lim_{\epsilon \rightarrow 0} P_\epsilon(\estim{\Bv})
    = P(\estim{\Bv}).
\end{equation*}

By the uniqueness of the optimizer $\estim{\Bv}$, we have
\begin{equation*}
    \lim_{\epsilon \rightarrow 0} \estim{\Bv}_\epsilon = \estim{\Bv}.
\end{equation*}

This is true for all such subsequences, which confirms what we want to prove.

\paragraph{Convergence of the gradient ($\nabla R_\epsilon(\estim{\Bv}_\epsilon) \rightarrow g_{\|\cdot\|_*}
(\estim{\Bv})$)}
Let $g_{\|\cdot\|_*}$ denote the subgradient of the nuclear norm $\|\cdot\|_*$ in
the first order optimality condition of $\estim{\Bv}$.
By the continuity of $\dot{\ell}$ and the first order condition, we have:
\begin{equation}\label{eq:nuclear:frobenius-bound}
    \big\| g_{\|\cdot\|_*}(\estim{\Bv}) - \nabla
    R_\epsilon(\estim{\Bv}_{\epsilon}) \big\|_F
    =
    \Bigg\| \sum_{j=1}^n \langle \Xv_j, \estim{\Bv} - \estim{\Bv}_\epsilon
    \rangle \Xv_j \Bigg\|_F \rightarrow 0.
\end{equation}

Let $\estim{\Bv}_\epsilon = \estim{\Uv}_\epsilon \diag[\estim{\sigmav}_\epsilon] \estim{\Vv}_\epsilon$
denote the SVD of $\estim{\Bv}_\epsilon$. By Lemma \ref{lemma:unitary1} we have:
\begin{align*}
    g_{\|\cdot\|_*} (\estim{\Bv})
    &=
    \estim{\Uv} \diag(\{g_r[\estim{\sigma}_j]\}_{j}) \estim{\Vv}^\top, \nonumber \\
    \nabla R_\epsilon(\estim{\Bv}^{\epsilon}) &= \estim{\Uv}_\epsilon
    \diag(\{\dot{r}_\epsilon(\estim{\sigma}_{\epsilon,j})\}_{j})
    \estim{\Vv}_\epsilon^\top.
\end{align*}
where $g_r[x] = 1$ if $x > 0$ and $0 \leq g_r[x] \leq 1$ if $x = 0$.

We wish to translate the limit in matrix norm
\eqref{eq:nuclear:frobenius-bound} to a limit on their singular
values. In order to do this, we use the following lemma from Weyl
\cite{weyl1912das} or Mirsky \cite{mirsky1960symmetric}. We note that our
conclusion may follow from either, although we include both for completeness.

\begin{lemma}[\cite{weyl1912das},\cite{mirsky1960symmetric}]
    \label{lemma:sv-control}
    Let $A$ and $B$ be two rectangular matrices of the same shape.
    Let $\sigma_j$ denote the $j$\textsuperscript{th} largest eigenvalue, then we have that
    for all $j$:
    \begin{gather*}
        \abs{\sigma_j(A) - \sigma_j(B)} \leq \norm{A - B}_2, \nonumber \\
        \sqrt{\sum_j (\sigma_j(A) - \sigma_j(B))^2} \leq \norm{A - B}_F.
    \end{gather*}
\end{lemma}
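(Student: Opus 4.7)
The plan is to prove the two bounds separately, following the standard singular-value perturbation arguments of Weyl and Mirsky. Both reduce a singular-value problem to an eigenvalue problem, either variationally (for the operator-norm bound) or through a Hermitian dilation (for the Frobenius bound).

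For the operator-norm inequality $\abs{\sigma_j(A) - \sigma_j(B)} \leq \norm{A-B}_2$, I would invoke the Courant--Fischer min--max characterization
\begin{equation*}
    \sigma_j(M) \;=\; \min_{\substack{V \subset \RR^{p} \\ \dim V = p - j + 1}} \ \max_{\substack{x \in V \\ \norm{x}_2 = 1}} \norm{Mx}_2.
\end{equation*}
For every unit vector $x$ the triangle inequality gives $\norm{Ax}_2 \leq \norm{Bx}_2 + \norm{(A-B)x}_2 \leq \norm{Bx}_2 + \norm{A-B}_2$. Taking the max over unit $x$ in any fixed $V$ and then the min over subspaces of the prescribed dimension yields $\sigma_j(A) \leq \sigma_j(B) + \norm{A-B}_2$, and swapping the roles of $A$ and $B$ gives the absolute-value bound.

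For the Frobenius bound, I would pass to the Jordan--Wielandt Hermitian dilations
\begin{equation*}
    \tilde{A} \;=\; \begin{pmatrix} 0 & A \\ A^\top & 0 \end{pmatrix}, \qquad \tilde{B} \;=\; \begin{pmatrix} 0 & B \\ B^\top & 0 \end{pmatrix}.
\end{equation*}
These matrices are Hermitian, their nonzero eigenvalues are exactly $\pm\sigma_j(A)$ and $\pm\sigma_j(B)$ respectively (with zero padding), and a direct computation shows $\norm{\tilde{A} - \tilde{B}}_F^2 = 2\norm{A-B}_F^2$. Applying the Hoffman--Wielandt inequality $\sum_j (\lambda_j(\tilde{A}) - \lambda_j(\tilde{B}))^2 \leq \norm{\tilde{A} - \tilde{B}}_F^2$ with eigenvalues taken in matching decreasing order, the $\pm$ pairing contributes a factor of two on the left that cancels the factor of two on the right, producing the stated bound.

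The main obstacle is Hoffman--Wielandt itself. Its standard proof expands $\norm{\tilde{A} - \tilde{B}}_F^2$ using the spectral decompositions of the two Hermitian matrices and identifies the cross term $\mathrm{tr}(\tilde{A}\tilde{B})$ as $\sum_{i,k} p_{ik}\, \lambda_i(\tilde{A})\lambda_k(\tilde{B})$, where $P = (p_{ik})$ is doubly stochastic; the Birkhoff--von Neumann theorem then represents $P$ as a convex combination of permutation matrices, and a rearrangement argument shows the extremum is attained when eigenvalues are paired in matching order. Since both Weyl's and Mirsky's results are entirely classical and play only an auxiliary role here, I would cite them (as the paper already does) rather than reproduce the full argument in the body of the proof.
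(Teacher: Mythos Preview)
Your proposal is correct and in fact goes beyond what the paper does: the paper gives no proof whatsoever of this lemma, simply citing Weyl and Mirsky as classical references, which is exactly what you conclude should be done. The proof sketches you provide (Courant--Fischer for the operator-norm bound, Jordan--Wielandt dilation plus Hoffman--Wielandt for the Frobenius bound) are the standard arguments and are correctly outlined.
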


By Lemma \ref{lemma:sv-control}, we have that $\estim{\sigma}_{\epsilon, j} \rightarrow
\estim{\sigma}_j$ and $\frac{\estim{\sigma}_{\epsilon,j}}{\sqrt{\estim{\sigma}_{\epsilon, j}^2 +
\epsilon^2}} \rightarrow g_r[\estim{\sigma}_j]$ as $\epsilon \rightarrow 0$. Additionally,
by the assumption $g_r[\estim{\sigma}_j] < 1$ if $\estim{\sigma}_j=0$, we have that:
\begin{equation}\label{eq:nuclear-singular-asymp-behavior}
    \frac{\estim{\sigma}_{\epsilon, j}}{\epsilon}
    \rightarrow
    \begin{cases}
        +\infty, & \text{if } \estim{\sigma}_j > 0, \\
        < +\infty, & \text{if } \estim{\sigma}_j = 0. \\
    \end{cases}
\end{equation}

This further implies the matrices $\cb{G}_{\epsilon}$
defined as in \eqref{eq:matrix-reg-formula-G} for $R_\epsilon$ satisifies:
\begin{equation}\label{eq:nuclear-reg-formula-G}
    \lim_{\epsilon \rightarrow 0}
    \cb{G}_{\epsilon, kl, ij}
    =
    \left\{
        \begin{array}{ll}
            0, & s = t = k = l \leq m, \\
            \infty, & s = t = k = l > m, \\
            \frac{1}{\estim{\sigma}_s + \estim{\sigma}_t}, & 1 \leq s \neq t \leq m,(k,l)=(s,t), \\
            \frac{1}{\estim{\sigma}_s}, & 1 \leq s \leq m < t \leq p_2, (k, l) = (s, t), \\
            \frac{1}{\estim{\sigma}_t}, & 1 \leq t \leq m < s \leq p_2, (k, l) = (s, t), \\
            -\frac{1}{\estim{\sigma}_s + \estim{\sigma}_t}, & 1 \leq s \neq t \leq m, (k,l)=(t,s), \\
            -\frac{g_r[\estim{\sigma}_t]}{\estim{\sigma}_s}, & 1 \leq s \leq m < t \leq p_2,
            (k, l) = (t, s), \\
            -\frac{g_r[\estim{\sigma}_s]}{\estim{\sigma}_t}, & 1 \leq t \leq m < s \leq p_2,
            (k, l) = (t, s), \\
            \frac{1}{\estim{\sigma}_t}, & 1 \leq t \leq m \leq p_2 < s \leq p_1, (k,l)=(s,t), \\
            \infty, & m < t \leq p_2 < s \leq p_1, (k,l)=(s,t), \\
            0, & \text{otherwise.}
        \end{array}
    \right.
\end{equation}

Notice that in \eqref{eq:nuclear-reg-formula-G} we missed a piece of blocks
corresponding to $m < s \neq t \leq p_2$, $(k, l) = (s, t)$ or $(k, l) =
(t, s)$. We need to process this blocks separately. We will show that the
inverse of the corresponding blocks in $\cb{G}_{\epsilon}$ converges to
0. As a result, we can ignore this part according to Lemma
\ref{lemma:woodbury-block}.

Each $2\times 2$ sub-matrix within that blocks in $\cb{G}_{\epsilon}$ takes the
form
\begin{equation*}
    \frac{1}{\estim{\sigma}_{\epsilon, s}^2 - \estim{\sigma}_{\epsilon, t}^2}
    \left[
        \begin{array}{cc}
            \estim{\sigma}_{\epsilon, s}\dot{r}_\epsilon(\estim{\sigma}_{\epsilon, s}) -
            \estim{\sigma}_{\epsilon, t} \dot{r}_\epsilon(\estim{\sigma}_{\epsilon, t})
            &
            - \estim{\sigma}_{\epsilon, s}\dot{r}_\epsilon(\estim{\sigma}_{\epsilon, t}) +
            \estim{\sigma}_{\epsilon, t} \dot{r}_\epsilon(\estim{\sigma}_{\epsilon, s}) \\
            - \estim{\sigma}_{\epsilon, s}\dot{r}_\epsilon(\estim{\sigma}_{\epsilon, t}) +
            \estim{\sigma}_{\epsilon, t} \dot{r}_\epsilon(\estim{\sigma}_{\epsilon, s})
            &
            \estim{\sigma}_{\epsilon, s}\dot{r}_\epsilon(\estim{\sigma}_{\epsilon, s}) -
            \estim{\sigma}_{\epsilon, t} \dot{r}_\epsilon(\estim{\sigma}_{\epsilon, t})
        \end{array}
    \right].
\end{equation*}

It is easy to verify that the inverse of the above matrix takes the following
form
\begin{equation}\label{eq:nuclear-inverse-sub22}
\frac{1}{\dot{r}^2(\estim{\sigma}_{\epsilon, s}) - \dot{r}^2(\estim{\sigma}_{\epsilon, t})}
    \left[
        \begin{array}{cc}
            \estim{\sigma}_{\epsilon, s}\dot{r}_\epsilon(\estim{\sigma}_{\epsilon, s}) -
            \estim{\sigma}_{\epsilon, t} \dot{r}_\epsilon(\estim{\sigma}_{\epsilon, t})
            &
            \estim{\sigma}_{\epsilon, s}\dot{r}_\epsilon(\estim{\sigma}_{\epsilon, t}) -
            \estim{\sigma}_{\epsilon, t} \dot{r}_\epsilon(\estim{\sigma}_{\epsilon, s}) \\
            \estim{\sigma}_{\epsilon, s}\dot{r}_\epsilon(\estim{\sigma}_{\epsilon, t}) -
            \estim{\sigma}_{\epsilon, t} \dot{r}_\epsilon(\estim{\sigma}_{\epsilon, s})
            &
            \estim{\sigma}_{\epsilon, s}\dot{r}_\epsilon(\estim{\sigma}_{\epsilon, s}) -
            \estim{\sigma}_{\epsilon, t} \dot{r}_\epsilon(\estim{\sigma}_{\epsilon, t})
        \end{array}
    \right].
\end{equation}

For the two distinct component values in the matrix in
\eqref{eq:nuclear-inverse-sub22}, we have that
\begin{equation*}
    \frac{\estim{\sigma}_{\epsilon, s}\dot{r}_\epsilon(\estim{\sigma}_{\epsilon, s}) -
    \estim{\sigma}_{\epsilon, t} \dot{r}_\epsilon(\estim{\sigma}_{\epsilon, t})}
    {\dot{r}^2(\estim{\sigma}_{\epsilon, s}) - \dot{r}^2(\estim{\sigma}_{\epsilon, t})}
    =
    \frac{
        \frac{\estim{\sigma}_{\epsilon, s}^2}{\sqrt{\estim{\sigma}_{\epsilon, s} + \epsilon^2}}
        -
        \frac{\estim{\sigma}_{\epsilon, t}^2}{\sqrt{\estim{\sigma}_{\epsilon, t} + \epsilon^2}}
    }
    {
        \frac{\estim{\sigma}_{\epsilon, s}^2}{\estim{\sigma}_{\epsilon, s} + \epsilon^2}
        -
        \frac{\estim{\sigma}_{\epsilon, t}^2}{\estim{\sigma}_{\epsilon, t} + \epsilon^2}
    }
    =
    \epsilon\frac{\frac{u_{\epsilon, s}}{\sqrt{1 - u_{\epsilon, s}}}
    - \frac{u_{\epsilon, t}}{\sqrt{1 - u_{\epsilon, t}}}}
    {u_{\epsilon, s} - u_{\epsilon, t}}
    =
    \epsilon \frac{1 - \frac{1}{2}\tilde{u}_{\epsilon}}{(1 -
    \tilde{u}_{\epsilon})^{\frac{3}{2}}}
    \rightarrow 0,
\end{equation*}
where we did a change of variable $u=\frac{\estim{\sigma}^2}{\estim{\sigma}^2 + \epsilon^2}$
and $\tilde{u}_{\epsilon}$ is a value between $u_{\epsilon, s}$ and
$u_{\epsilon, t}$ where we apply Taylor expansion to function
$\frac{x}{\sqrt{1-x}}$. The last convergence to 0 is
obtained by noticing that $\lim_{\epsilon \rightarrow 0} u_{\epsilon, s},
\lim_{\epsilon \rightarrow 0} u_{\epsilon, t} \in [0, 1)$ due to
\eqref{eq:nuclear-singular-asymp-behavior}.

Similarly we have the following analysis for the off-diagonal term
\begin{equation*}
    \frac{\estim{\sigma}_{\epsilon, s}\dot{r}_\epsilon(\estim{\sigma}_{\epsilon, t}) -
    \estim{\sigma}_{\epsilon, t} \dot{r}_\epsilon(\estim{\sigma}_{\epsilon, s})}
    {\dot{r}^2(\estim{\sigma}_{\epsilon, s}) - \dot{r}^2(\estim{\sigma}_{\epsilon, t})}
    =
    \frac{
        \frac{\estim{\sigma}_{\epsilon, s}\estim{\sigma}_{\epsilon, t}}
        {\sqrt{\estim{\sigma}_{\epsilon, t} + \epsilon^2}}
        -
        \frac{\estim{\sigma}_{\epsilon, s}\estim{\sigma}_{\epsilon, t}}
        {\sqrt{\estim{\sigma}_{\epsilon, s} + \epsilon^2}}
    }
    {
        \frac{\estim{\sigma}_{\epsilon, s}^2}{\estim{\sigma}_{\epsilon, s} + \epsilon^2}
        -
        \frac{\estim{\sigma}_{\epsilon, t}^2}{\estim{\sigma}_{\epsilon, t} + \epsilon^2}
    }
    =
    \frac{\estim{\sigma}_{\epsilon, s}\estim{\sigma}_{\epsilon, t}}{\epsilon}
    \frac{\sqrt{1 - u_{\epsilon, t}} - \sqrt{1 - u_{\epsilon, s}}}
    {u_{\epsilon, s} - u_{\epsilon, t}}
    =
    \frac{\estim{\sigma}_{\epsilon, s}\estim{\sigma}_{\epsilon, t}}{\epsilon^2}
    \frac{\epsilon}{2\sqrt{1 - \bar{u}_\epsilon}}
    \rightarrow 0,
\end{equation*}
where $\bar{u}_{\epsilon}$ is a value between $u_{\epsilon, s}$ and
$u_{\epsilon, t}$ where we use Taylor expansion to $\sqrt{1-x}$. The last convergence to 0 is
obtained based on the same reason as the previous one.

Let $E:=\{kl : k \leq m \text{ or } l \leq m\}$, by Lemma
\ref{lemma:woodbury-block}, we have
\begin{equation*}
    \Hv_\epsilon
    \rightarrow
    \cb{X}_{\cdot,E}\Big[\cb{X}_{\cdot,E}^\top
    \cb{X}_{\cdot,E} + \lambda\cb{G} \Big]^{-1}
    \cb{X}_{\cdot,E}^\top := \Hv,
\end{equation*}
where $\cb{G}$ is defined in \eqref{eq:nuclear-hessian-G}.

Finally, we obtain our approximation of leave-$i$-out prediction by substituting the
above formula of $\Hv$ into the general formula \eqref{eq:matrix-general-alo}.

\begin{remark}
    Similar to what we did in Figure \ref{fig:matrix-general}, it is helpful to
    visualize the structure of $\cb{G}$ in correspondence to the blocks of the
    original matrix. Specifically we have Figure \ref{fig:nuclear-general}.
    \begin{figure}[!th]
        \begin{center}
            \begin{tikzpicture}[scale=1.0]
                \path [fill=cyan] (0, 0) rectangle (1.2, 1);
                \path [fill=darkgray] (1.2, 0) rectangle (2, 1.8);
                \path [fill=olive] (0, 1) rectangle (1.2, 1.8);
                \path [fill=olive] (1.2, 1.8) rectangle (2, 3);
                \path [fill=green] (0, 1.8) rectangle (1.2, 3);
                \path [fill=orange] (0, 3) -- (0.1, 3) -- (1.2, 1.9) -- (1.2,
                1.8) -- (1.1, 1.8) -- (0, 2.9) -- (0, 3);
                \draw (0, 0) rectangle (2, 3);
                \draw [dashed, thick] (0, 1) -- (1.2, 1);
                \draw [dashed, thick] (0, 1.8) -- (2, 1.8);
                \draw [dashed, thick] (1.2, 0) -- (1.2, 3);
                \draw [dashed, thick] (0, 3) -- (1.2, 1.8);
                
                \draw [fill] (0.5, 2.5) circle [radius=0.04];
                \node [left] at (0.5, 2.5) {\tiny $(s_1,s_1)$};
                \draw [fill] (0.3, 2) circle [radius=0.04];
                \node [left] at (0.3, 2) {\tiny $(s_2, t_2)$};
                \draw [fill] (1, 2.7) circle [radius=0.04];
                \node [right] at (1, 2.7) {\tiny $(t_2, s_2)$};
                \draw [fill] (0.6, 1.3) circle [radius=0.04];
                \node [below] at (0.6, 1.4) {\tiny $(s_3, t_3)$};
                \draw [fill] (1.7, 2.4) circle [radius=0.04];
                \node [below] at (1.6, 2.4) {\tiny $(t_3, s_3)$};
                \draw [fill] (0.6, 0.6) circle [radius=0.04];
                \node [below] at (0.6, 0.6) {\tiny $(s_4, t_4)$};
                \draw [fill] (1.5, 0.5) circle [radius=0.04];
                \node [above] at (1.6, 0.5) {\tiny $(s_5, t_5)$};
                
                \draw [ultra thick, ->] (2.5, 1.5) -- (3.8, 1.5);
                
                \path [fill=lightgray] (5, -0.5) rectangle (9, 3.5);
                \path [fill=orange] (5, 3.5) rectangle (5.6, 2.9);
                \path [fill=green] (5.6, 2.9) rectangle (6.5, 2);
                \path [fill=olive] (6.5, 2) rectangle (7.5, 1.0);
                \path [fill=cyan] (7.5, 1.0) rectangle (8.4, 0.1);
                \path [fill=darkgray] (8.4, 0.1) rectangle (9, -0.5);
                \path [fill=white!70!lightgray] (8.4, 0.1) rectangle (9, 3.5);
                \path [fill=white!70!lightgray] (8.4, 0.1) rectangle (5, -0.5);
                \draw [dashed, thin] (5, 3.5) -- (8.4, 0.1);

                \draw [dotted, thick] (5, 2.9) -- (6.5, 2.9);
                \draw [dotted, thick] (5.6, 3.5) -- (5.6, 2);
                \draw [dotted, thick] (5.6, 2) -- (7.5, 2);
                \draw [dotted, thick] (6.5, 2.9) -- (6.5, 1.0);
                \draw [dotted, thick] (6.5, 1.0) -- (8.4, 1.0);
                \draw [dotted, thick] (7.5, 2) -- (7.5, 0.1);
                \draw [dotted, thick] (7.5, 0.1) -- (9, 0.1);
                \draw [dotted, thick] (8.4, 1.0) -- (8.4, -0.5);
                \draw [dotted, thick] (5, 0.1) -- (8.4, 0.1);
                \draw [dotted, thick] (8.4, 0.1) -- (8.4, 3.5);
                \draw (5, -0.5) rectangle (9, 3.5);
                
                \draw [fill] (5.3, 3.2) circle [radius=0.04];
                \draw [dotted, thick] (5, 3.2) -- (5.3, 3.2);
                \draw [dotted, thick] (5.3, 3.5) -- (5.3, 3.2);
                \node [left] at (5, 3.2) {\tiny $(s_1, s_1)$};
                \node [right, rotate=90] at (5.3, 3.5) {\tiny $(s_1, s_1)$};
                
                \draw [fill] (5.9, 2.6) circle [radius=0.04];
                \draw [fill] (5.9, 2.2) circle [radius=0.04];
                \draw [fill] (6.3, 2.2) circle [radius=0.04];
                \draw [fill] (6.3, 2.6) circle [radius=0.04];
                \draw [dotted, thick] (5, 2.6) -- (6.3, 2.6);
                \draw [dotted, thick] (5, 2.2) -- (6.3, 2.2);
                \draw [dotted, thick] (5.9, 3.5) -- (5.9, 2.2);
                \draw [dotted, thick] (6.3, 3.5) -- (6.3, 2.2);
                \node [left] at (5, 2.6) {\tiny $(s_2, t_2)$};
                \node [right, rotate=90] at (5.9, 3.5) {\tiny $(s_2, t_2)$};
                \node [left] at (5, 2.2) {\tiny $(t_2, s_2)$};
                \node [right, rotate=90] at (6.3, 3.5) {\tiny $(t_2, s_2)$};

                \draw [fill] (6.8, 1.7) circle [radius=0.04];
                \draw [fill] (6.8, 1.3) circle [radius=0.04];
                \draw [fill] (7.2, 1.7) circle [radius=0.04];
                \draw [fill] (7.2, 1.3) circle [radius=0.04];
                \draw [dotted, thick] (5, 1.7) -- (7.2, 1.7);
                \draw [dotted, thick] (5, 1.3) -- (7.2, 1.3);
                \draw [dotted, thick] (6.8, 3.5) -- (6.8, 1.3);
                \draw [dotted, thick] (7.2, 3.5) -- (7.2, 1.3);
                \node [left] at (5, 1.7) {\tiny $(s_3, t_3)$};
                \node [right, rotate=90] at (6.8, 3.5) {\tiny $(s_3, t_3)$};
                \node [left] at (5, 1.3) {\tiny $(t_3, s_3)$};
                \node [right, rotate=90] at (7.2, 3.5) {\tiny $(t_3, s_3)$};
 
                \draw [fill] (8.0, 0.5) circle [radius=0.04];
                \draw [dotted, thick] (5, 0.5) -- (8.0, 0.5);
                \draw [dotted, thick] (8.0, 3.5) -- (8.0, 0.5);
                \node [left] at (5, 0.5) {\tiny $(s_4, t_4)$};
                \node [right, rotate=90] at (8.0, 3.5) {\tiny $(s_4, t_4)$};
 
                \node [below] at (5.3, 3.2) {\scriptsize $a$};
                \node [below left] at (5.95, 2.70) {\scriptsize $b$};
                \node [right] at (6.25, 2.22) {\scriptsize $b$};
                \node [below left] at (5.95, 2.25) {\scriptsize $c$};
                \node [above right] at (6.23, 2.55) {\scriptsize $c$};
                \node [below left] at (6.85, 1.80) {\scriptsize $d$};
                \node [right] at (7.15, 1.32) {\scriptsize $d$};
                \node [below left] at (6.85, 1.35) {\scriptsize $e$};
                \node [above right] at (7.15, 1.65) {\scriptsize $e$};
                \node [below] at (8.0, 0.5) {\scriptsize $f$};

                \draw [thin, ->] (9.6, 1.8) -- (8.5, 1.8);
                \draw [thin, ->] (9.6, -0.2) -- (9.1, -0.2);
                \node [right] at (9.6, 1.8) {$\cb{G}$};
                \node [right] at (9.6, -0.2) {removed};

                \node [below] at (1, -0.5)
                {$\estim{\Uv}^\top\estim{\Bv}\estim{\Vv}$};
                \node [below] at (7, -0.5) {$\Qv^\top\nabla^2 R(\Bv)\Qv$};
            \end{tikzpicture}
        \end{center}
        \caption{An illustration of the correspondence between the structure of
            the original matrix and the structure of the $\cb{G}$ matrix.
            As we have mentioned in Theorem \ref{thm:matrix-twice-diff}, $a=0$,
            $b = \frac{1}{\estim{\sigma}_{s_2} + \estim{\sigma}_{t_2}}$,
            $c = - \frac{1}{\estim{\sigma}_{s_2} + \estim{\sigma}_{t_2}}$,
            $d = \frac{1}{\estim{\sigma}_{t_3}}$,
            $e=-\frac{g_r[\estim{\sigma}_{s_3}]}{\estim{\sigma}_{t_3}}$,
            $f = \frac{1}{\estim{\sigma}_{t_4}}$.}
            \label{fig:nuclear-general}
    \end{figure}
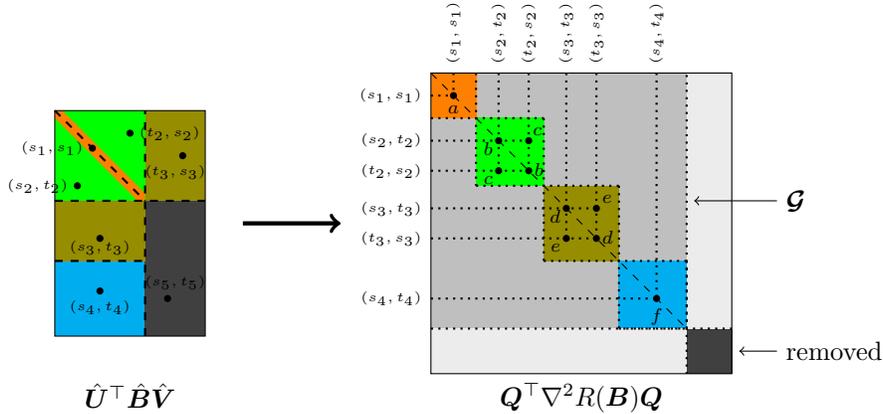
\end{remark}

\section{Details of the Numerical Experiments}
\label{append:sec:experiment}

\subsection{Simulated Data}
\subsubsection{Support Vector Machine}
For all SVM simulations the data is generated according to a Gaussian logistic
model: the design matrix $\Xv$ is generated as a matrix of i.i.d.
$\mathcal{N}(0, 1)$; the true parameter $\betav$ is i.i.d. $\mathcal{N}(0, 9)$,
and each response $y_i$ is generated as an independent Bernoulli with
probability $p_i$ given by the following logistic model:
\begin{equation*}
    \log \frac{p_i}{1 - p_i} = \xv_i^\top \betav.
\end{equation*}

The $n > p$ scenario is generated with $n = 300$ and $p = 80$, and the $n < p$
scenario is generated with $n = 300$ and $p = 600$. We consider a sequence of 
$40$ different values of $\lambda$ ranging between $e^{4}\sim e^{12}$, with their logarithm equally
spaced between $[4, 12]$.

The model is fitted using the \texttt{sklearn.svm.linearSVC} function in Python
package \texttt{scikit-learn} \cite{scikit-learn}, which is implemented by the
\texttt{LibSVM} package \cite{chang2011libsvm}.

For using the \texttt{sklearn.svm.linearSVC}, we set
\texttt{tolerance=$10^{-6}$} and
\texttt{max\_iter=10000}. We identify an observation as a support vector
if $|1 - y_i \xv_i^\top \estim{\betav}| < 10^{-5}$.

\subsubsection{Fused LASSO}
For all fused LASSO, each component of the design matrix $\Xv$ is generated
from i.i.d. $\mathcal{N}(0, 0.05)$. For the true parameter $\betav$, we generated
it through the following process: given a number $k<p$, we generate a
sparse vector $\betav_0$ with a random sample of $k$ of its components i.i.d.
from $\mathcal{N}(0, 1)$. Then we construct a new vector $\betav_1$ as the
cumulative sum of $\betav_0$: $\beta_{1, i} = \sum_{j=1}^i \beta_{0, j}$;
Finally we normalize $\betav_1$ such that it has standard deviation 1. Note
that $\betav_1$ is a piecewise constant vector.
The response $\yv$ is generated as $\yv = \Xv\betav + \bm{\epsilon}$, where
$\bm{\epsilon}$ denotes i.i.d. random gaussian noise from $\mathcal{N}(0, 0.25)$.
For our simulation, we use $k=20$ (so piecewise constant with 20 pieces). The
$n > p$ scenario is generated with $n = 200$ and $p = 100$, whereas the $n < p$
scenario is generated with $n = 200$ and $p = 400$.

The model is fitted through a direct translation of the generalized LASSO model
into the package \texttt{CVX} \cite{cvx}. We use the default tolerance and
maximal iteration. We identify the location $i$ such that $\estim{\beta}_{i+1}
= \estim{\beta}_{i}$ by checking if $|\estim{\beta}_{i + 1} - \estim{\beta}_i|
< 10^{-8}$. For $n > p$, we consider a sequence of 40 tuning parameters from
$10^{-2} \sim 10^2$; For $n < p$, we consider a sequence of 30 tuning
parameters from $10^{-1} \sim 10$. Both are equally spaced on the log-scale.

\subsubsection{Nuclear Norm Minimization}

For all nuclear norm simulations the data is generated according to the Gaussian
low-rank model; each observation matrix $\Xv_j$ is generated as an i.i.d. $\mathcal{N}(0,1)$ matrix. The true
parameter matrix $\Bv$ is generated as a low rank matrix, by setting $k=1$ in
the following formula
\begin{equation*}
    \Bv = \sum_{l = 1}^k \zv_l \wv_l^\top,
\end{equation*}
where $\zv, \wv$ are independent of each other. $\zv \sim \mathcal{N}(0,
\Iv_{p_1})$, $\wv \sim \mathcal{N}(0, \Iv_{p_2})$. Hence, the rank of
$\Bv$ in our experiments is equal to $1$.
The response $\yv$ is generated as $y_j = \langle \Xv_j, \Bv \rangle + \epsilon_j$,
where $\epsilon_j$ is i.i.d. $\mathcal{N}(0, 0.25)$.

The $n > p$ scenario is generated with $n = 600$, and $\Bv \in \RR^{20 \times 20}$ (i.e.
$p = 400$). The $n < p$ scenario is generated with $n = 200$, and $\Bv \in
\RR^{20 \times 20}$ again. For both settings, we consider a
sequence of 30 tuning parameters from $5\times 10^{-1}\sim 5\times 10$, equally
spaced on the log-scale.

The model is fitted using an implementation of a proximal gradient algorithm as
described in \cite{Lan2011}, implemented using the Matlab package
\texttt{TFOCS} \cite{tfocs}. The threshold we use to identify singular values
with value 0 is $10^{-3} \times \lambda_{\max}(\estim{\Bv})$, where
$\lambda_{\max}$ is the maximal singular value of $\estim{\Bv}$.

\subsubsection{LASSO Experiment}\label{sssec:lasso-experiment}
In our LASSO simulations, we use the
setting where $n = 300$, $p = 600$, and the true model is sparse with $k =
60$ non-zeros. These non-zeros are i.i.d. $\mathcal{N}(0, 1)$.  

In the misspecification example, the elements of $\Xv$ are i.i.d. $\mathcal{N}(0, 1 / k)$. $\yv$ is generated according to
the following non-linear model:
\begin{equation*}
    y_j = f(\xv_j^\top \betav + \epsilon_j),
\end{equation*}
where $\bm{\epsilon} \sim \mathcal{N} (\mathbf{0}, 0.25 \Iv_{n})$, and  the function $f$ is given by:
\begin{equation*}
    f(x) = \begin{cases}
        \sqrt{x} & \text{ if } x \geq 0, \\
        -\sqrt{-x} & \text{ otherwise.}
    \end{cases}
\end{equation*}

In the heavy-tailed noise example, the elements of $\Xv$ are i.i.d. $\mathcal{N}(0, 1 / k)$. $\yv$ is generated according to
\begin{equation*}
\yv = \Xv\betav + \bm{\epsilon},
\end{equation*}
where the ``heavy-tailed'' noise $\epsilon_j$ is generated
according to a Student-$t$ distribution with three degrees of freedom, and
rescaled such that its variance is $\sigma^2=0.25$.

In the correlated design example, $\yv$ is generated according to
\begin{equation*}
\yv = \Xv\betav + \bm{\epsilon},
\end{equation*}
where $\bm{\epsilon} \sim \mathcal{N} (\mathbf{0}, 0.25 \mathbf{I})$, and the
``correlated design'' $\Xv$ is generated with each row $\xv_j$ being
sampled independently according to a multivariate normal distribution $\xv_j
\sim \mathcal{N}(0, \Cv / k)$, where $\Cv$ is the Toeplitz matrix, given by:
\begin{equation*}
    \Cv = \begin{pmatrix}
        \rho & \rho^2 & \ldots & \rho^p \\
        \rho^2 & \rho & \ldots & \rho^{p - 1} \\
        \vdots & \ldots & \ddots & \vdots \\
        \rho^p & \rho^{p - 1} & \ldots & \rho
    \end{pmatrix}.
\end{equation*}
$\rho$ is set to $0.8$ in our experiments. For all settings, we consider a
sequence of 25 tuning parameters from $3.16\times 10^{-3} \sim 3.16\times
10^{-2}$, equally spaced under log-scale.

All models were solved using the \texttt{glmnet}  package in
Matlab \cite{qian2013glmnet}. We identify the zero locations of $\estim{\betav}$ by checking
$|\beta_j| > 10^{-8}$.

\subsubsection{Timing of ALO}
For comparing the timing of ALO with that of LOOCV, we consider the LASSO
problem with correlated design similar to the one we introduced in Section
\ref{sssec:lasso-experiment}. Specifically, each row of the design matrix has a
Toeplitz covariance matrix with $\rho = 0.8$. The true coefficient vector
$\betav$ has $\frac{\min(n, p)}{2}$ nonzero components, with each nonzero
component of $\betav$ being selected independently from $\pm 1$ with
probability $0.5$. The noise $\epsilon \sim \mathcal{N}(0, 0.5\Iv_n)$. For each
pair of $(n, p)$, we choose a sequence of 50 tuning parameters ranging from
$\lambda_0$ to $10^{-2.5}\lambda_0$, where $\lambda_0 = \|\Xv^\top
\yv\|_{\infty}$. Note that for this choice of $\lambda$ all the regression
coefficients are equal to zero.

The timing of one single fit on the full dataset, the ALO risk estimates and
the LOOCV risk estimates are reported in Table \ref{tab:timing} of the main
paper. To obtain the timing of a single fit we run the corresponding function
of glmnet along the entire tuning parameter path and record the total time
consumed. This process is then repeated for 10 random seeds to obtain the
average timing. Every time an estimate is obtained we use our formula to obtain
ALO. Hence, the time reported for ALO in Table \ref{tab:timing} is again
obtained from an average of $10$ Monte Carlo samples. To obtain the computation
time of LOOCV, we only use $5$ random seeds.

As expected, averaged time for LOOCV is close to $n$ times the time required
for a single fit. On the other hand, among all the settings we considered in
Table \ref{tab:timing}, ALO takes less than twice the time of a single fit.

\subsection{Real-World Data}
\label{ssec:real-world-data}
In this section, we apply our ALO methods to three real-world datasets: Gisette
digit recognition \cite{guyon2005result}, the tumor colon tissues gene
expression \cite{alon1999broad} and the South Africa heart disease data
\cite{rossouw1983coronary, EOSL:chapter4}. All
the three datasets have binary response, so we consider classification
algorithms. The information of the three datasets is listed in Table
\ref{tab:real-data-meta-info} below. The column of number of effective features
records the number of features after data preprocessing, including removing
duplicates and missing columns.
\begin{table}[!h]
    \begin{center}
        \caption{Information of the three datasets.}
        \label{tab:real-data-meta-info}
        \begin{tabular}{l|p{1.6cm}p{1.6cm}p{1.6cm}l}
            dataset & \# samples & \# features & \# effective features & model used \\
            \hline
            gisette & 6000 & 5000 & 4955 & SVM \\
            tumor colon & 62 & 2000 & 1909 & logistic + LASSO \\
            heart disease & 462 & 9 & 9 & logistic + LASSO
        \end{tabular}
    \end{center}
\end{table}

For gisette, since $n=6000$ is too large for LOOCV, we randomly subsample 1000
observations and apply linear SVM on it. For the tumor colon tissues and South
Africa heart disease dataset, we apply logistic regression with LASSO penalty.
The results are shown in Figure \ref{fig:real-data}. The accuracy of ALO is
verified on gisette and the heart disease dataset. However, the behavior of ALO
is more complicated for the tumor colon tissues dataset. First ALO gives very
close estimates to LOOCV for relatively large tuning values, but deviates from
LOOCV risk estimates and bends upward after $\lambda$ decreases to a certain
value. Second, we note that the optimal tuning is still correctly captured by
ALO.
\begin{figure}[!t]
    \begin{center}
        \setlength\tabcolsep{2pt}
        \renewcommand{\arraystretch}{0.3}
        \begin{tabular}{r|rrr}
            & \multicolumn{1}{c}{\small gisette}
            & \multicolumn{1}{c}{\small heart disease}
            & \multicolumn{1}{c}{\small colon tumor} \\
            \hline
            \rotatebox{90}{\small \hspace{1.1cm} lasso risk} &
            \includegraphics[scale=0.4]{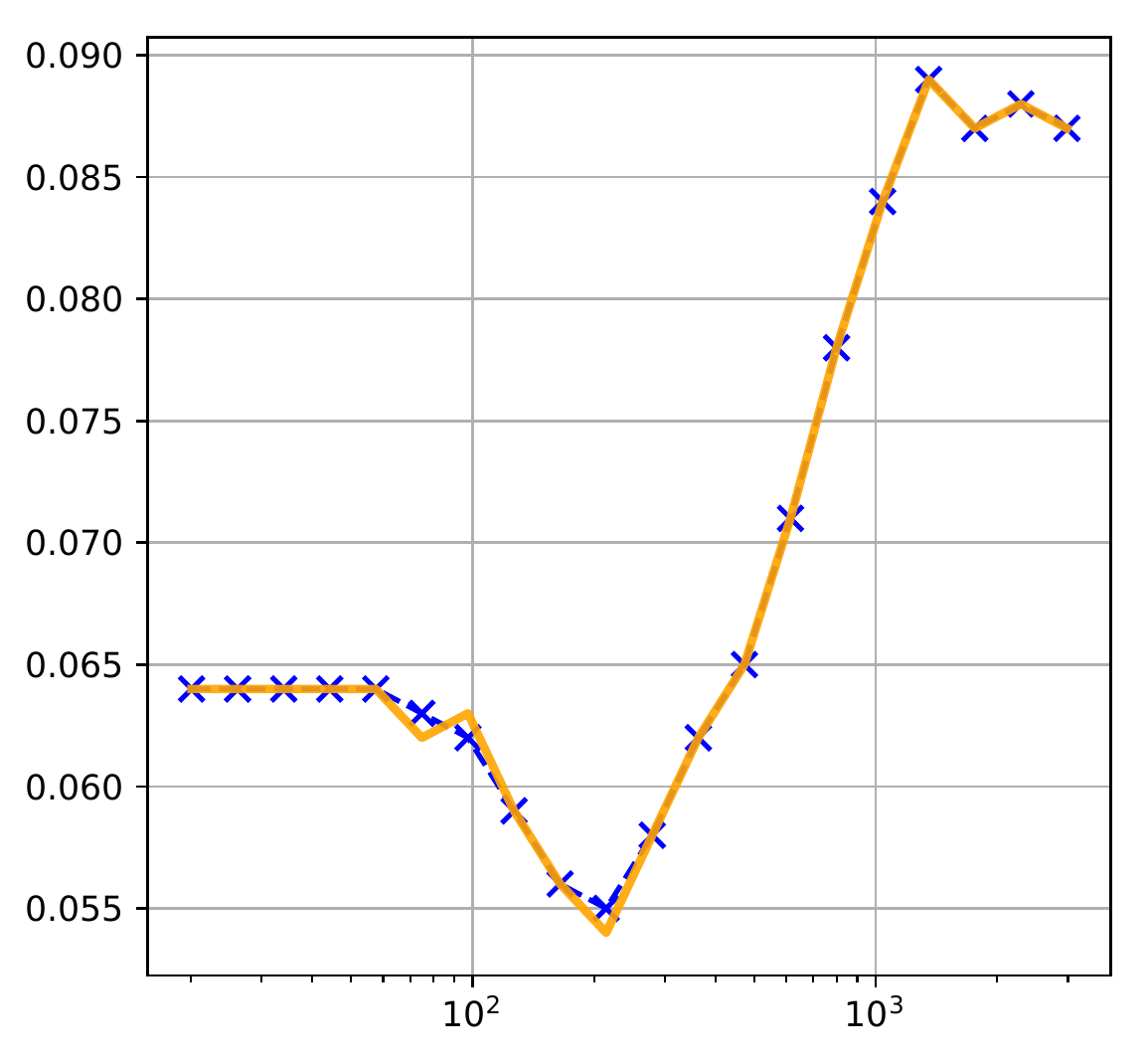} & 
            \includegraphics[scale=0.4]{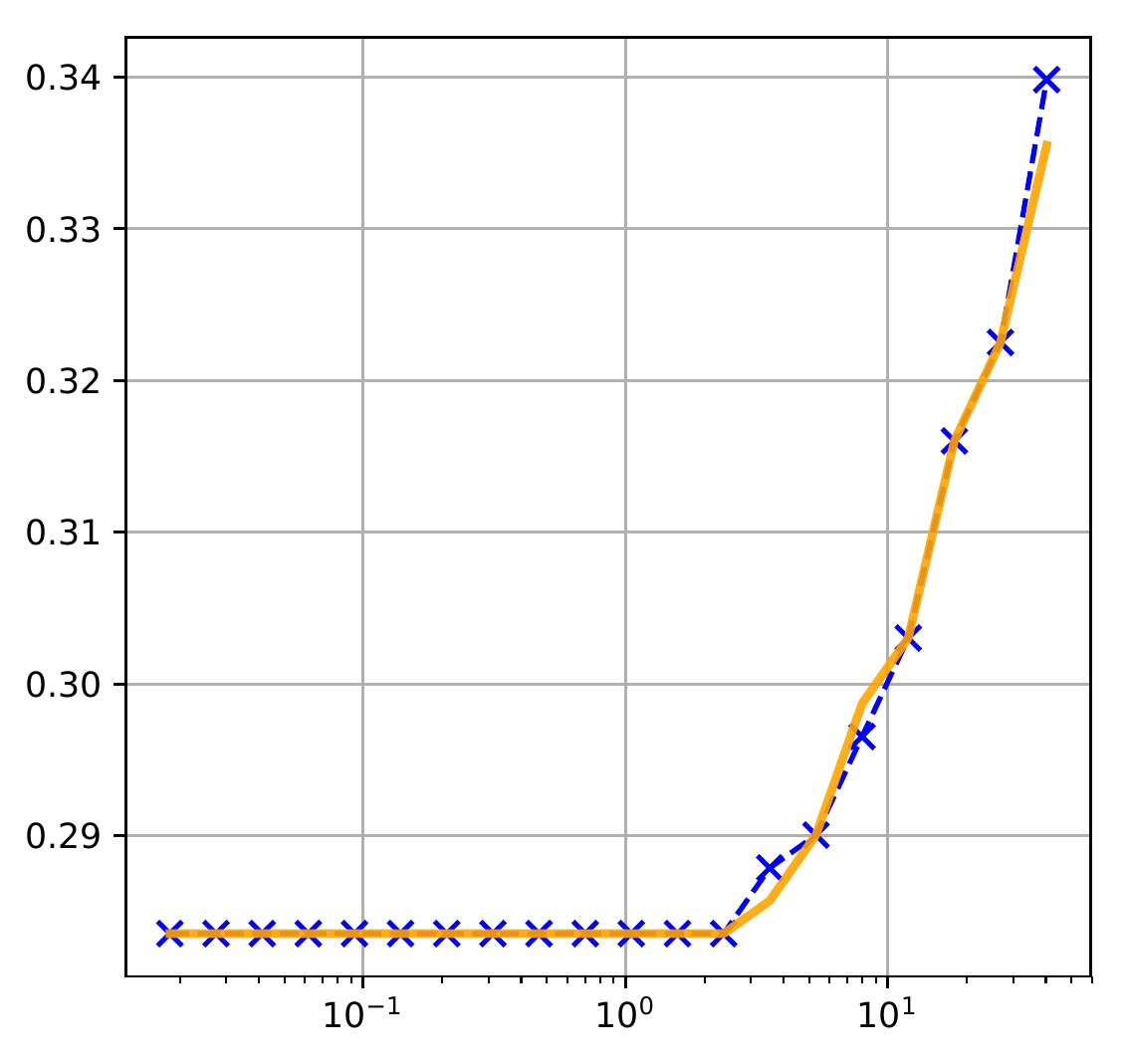} & 
            \includegraphics[scale=0.4]{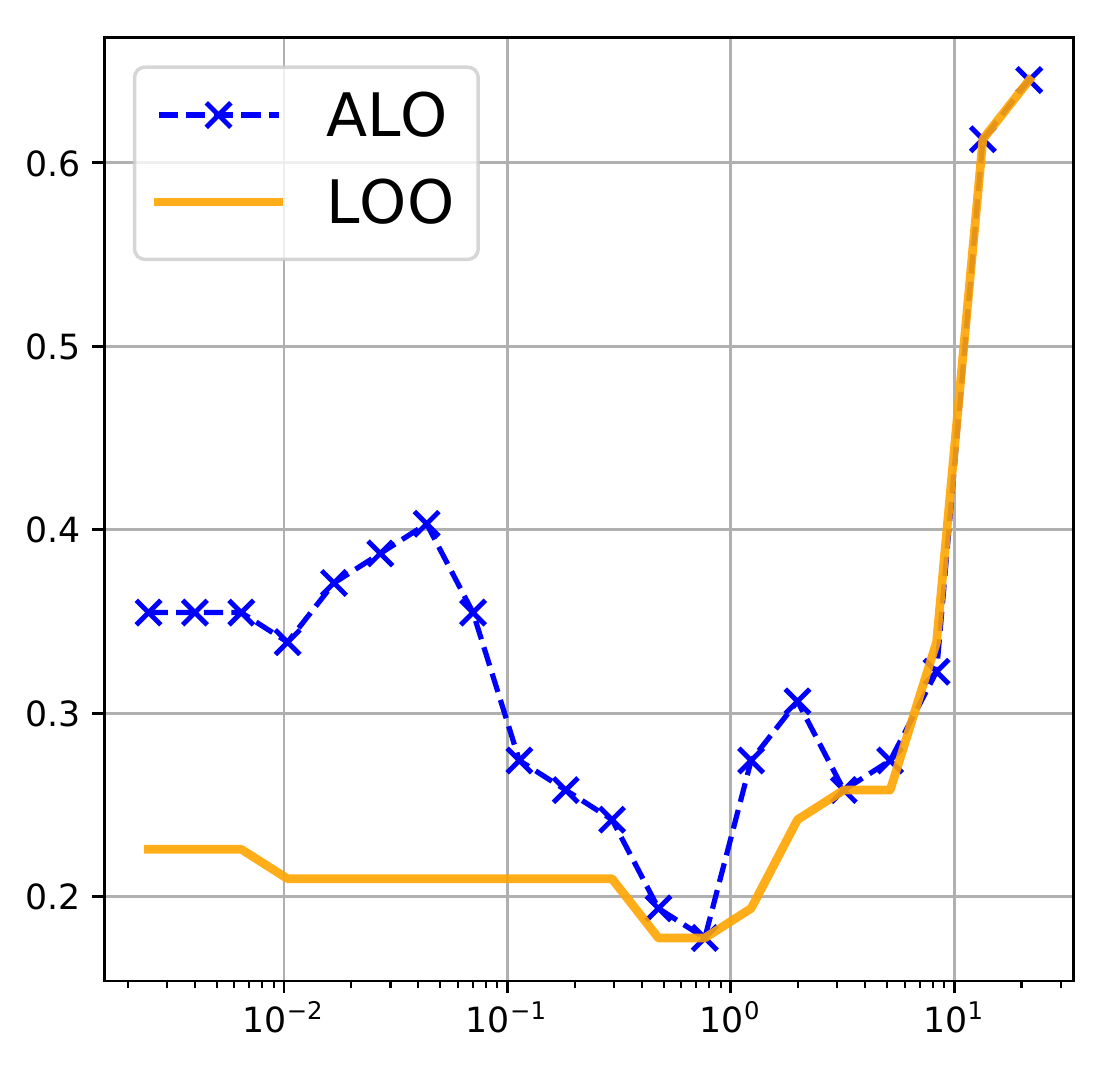} \\
        \end{tabular}
        \caption{Risk estimates of from ALO versus LOOCV for the three
            datasets: gisette, South Africa coronary heart disease and colon
            tumor gene expression. The $x$-axis is the tuning parameter value
            $\lambda$ on $\log$-scale, the $y$-axis is the risk estimates under
            0-1 loss.} \label{fig:real-data}
    \end{center}
\end{figure}

There are a few factors which may affect the performance of ALO. First, as
implied by the theoretical guarantee on smooth models, the closeness between
ALO and LOOCV is a high-dimensional phenomenon, which takes place for
relatively large $n$ and $p$. This requirement of high-dimensionality is less
stringent for $n > p$, when strong convexity of the loss function is to some
extent guaranteed, but becomes more significant for $n < p$. On the other
hand, from our simulation in Section \ref{sec:experiments} and the real-data
examples in this section, we can see that when $\frac{n}{p}$ is not much
smaller than 1 (compared to the $\frac{n}{p}$-ratio in the colon tissue
dataset), a few hundreds of observation and features are enough to guarantee
the accuracy of ALO risk estimates. Finally, the deviation of ALO estimates
tends to happen when the tuning $\lambda$ becomes smaller than a certain value,
typically in the case of $n < p$. As we have mentioned in the main text, for
most nonsmooth regularizers, small tuning values induce dense solutions. In most
high dimensional datasets, these dense solutions are not favorable in $n < p$
case. From our experiments, this deviation mostly happens after correctly
capturing the optimal tuning values.

\end{document}